\documentstyle{article}

\includeonly{intro,sec1,sec2,sec3,sec4,sec5,append,refs}


\setlength{\textwidth}{6.5in}
\setlength{\textheight}{8.75in}
\setlength{\topmargin}{0in}
\setlength{\oddsidemargin}{0in}
\setlength{\headheight}{0in}
\setlength{\headsep}{0in}
\setcounter{secnumdepth}{3}
\setcounter{tocdepth}{4}



\newtheorem{Proposition}{Proposition}

\newtheorem{Theorem}{Theorem}
\newtheorem{Corollary}{Corollary}

\newenvironment{proof}
   {\begin{trivlist} \item[] {\bf Proof.}}{\hfill \rule{.3em}{2ex} \end{trivlist}}


\newcommand{\memberof}{\,{\in}\,}
\newcommand{\define}{\stackrel{{\rm df}}{=}}

\newcommand{\tensorimplysource}
   {\! \mbox{\hspace{.21em}--\hspace{-.21em}} \backslash }
\newcommand{\tensorimplytarget}
   { / \mbox{\hspace{-.23em}--\hspace{.23em}} \!}


\newcommand{\powerof}[1]{ {\bf 2}^{#1} }
\newcommand{\subpowerof}[1]{ {\rm 2}^{#1} }

\newcommand{\closure}[2]{{#1}^{\Leftrightarrow}_{#2}}

\newcommand{\product}[2]{ {#1} {\times} {#2} }

\newcommand{\pair}[2]{\langle #1,#2 \rangle}
\newcommand{\relcopair}[4]{{[{#1},{#2}]}^{#3}_{#4}}
\newcommand{\relpair}[4]{{\left({#1},{#2}\right)}^{#3}_{#4}}

\newcommand{\triple}[3]{\mbox{$ \langle #1,#2,#3 \rangle $}}
\newcommand{\quadruple}[4]{\mbox{$ \langle #1,#2,#3,#4 \rangle $}}

\newcommand{\term}[3]{{#1} \stackrel{{#2}}{\rightharpoondown} {#3}}

\newcommand{\upper}[2]{ {#1}^{{\rm u}}_{#2} }
\newcommand{\looer}[2]{ {#1}^{{\rm l}}_{#2} }
\newcommand{\derivedir}[2]{ {#1}^{\Rightarrow}_{#2} }
\newcommand{\deriveinv}[2]{ {#1}^{\Leftarrow}_{#2} }
\newcommand{\idealexists}[2]{ {\langle {#1} \rangle}^{\rm co} {#2} }
\newcommand{\filterexists}[2]{ {\langle {#1} \rangle} {#2} }
\newcommand{\idealforall}[2]{ {[ {#1} ]}^{\rm co} {#2} }
\newcommand{\filterforall}[2]{ {[ {#1} ]} {#2} }
\newcommand{\morphism}[3]{{#1} \stackrel{{#2}}{\rightarrow} {#3}}
\newcommand{\longmorphism}[3]{{#1} \stackrel{{#2}}{\longrightarrow} {#3}}
\newcommand{\Morphism}[3]{{#1} \stackrel{{#2}}{\Rightarrow} {#3}}

\newcommand{\vdashv}{ \,{\vdash}\!\!{\dashv}\, }


\title{Conceptual Collectives}
\author{Robert E. Kent\thanks{Current Address: 
                              Knowledge Representation Research Group,
                              Department of Computer and Information Science,
                              University of Arkansas at Little Rock
                              (rekent@ualr.edu)}}
\date{1992}


\begin{document}
   \maketitle

\begin{abstract}
The notions of formal contexts and concept lattices,
although introduced by Wille only ten years ago \cite{Wille},
already have proven to be of great utility in various applications
such as data analysis and knowledge representation.
In this paper we give arguments that Wille's original notion of formal context,
although quite appealing in its simplicity,
now should be replaced by a more semantic notion.
This new notion of formal context entails a modified  approach to concept construction.
We base our arguments for these new versions of formal context and concept construction
upon Wille's philosophical attitude with reference to the intensional aspect of concepts.
We give a brief development of the relational theory of formal contexts and concept construction,
demonstrating the equivalence of {\em concept-lattice construction\/} \cite{Wille}
with the well-known {\em completion by cuts\/} \cite{MacNeille}.
Generalization and abstraction of these formal contexts offers a powerful approach to knowledge representation.
\end{abstract}

   \tableofcontents

\newpage
\section*{Introduction}

Classification of knowledge into ordered systems
has played an important role in the history of science from the very beginning.
More recently,
classification has become quite important
in computer science in general (knowledge representation and databases)
and programming languages in particular.
The fundamental task in conceptual classification
is the search for conceptual structures (classes) naturally inherent in the problem domain
and the construction of class hierarchies.
These class hierarchies tend to be of two kinds:
generalization/specialization inheritance hierarchies
and
whole/part containment hierarchies.
Both linear and network representations of knowledge have been used for classification.
Such knowledge representation systems can have
both a structural, taxonomic aspect and an assertional, logical aspect.
Although this paper emphasizes the structural aspect,
the assertional aspect also can be represented.
Taxonomies of structured conceptual descriptions
originated in the KL-ONE knowledge representation system of Woods and others \cite{Woods}.
As Woods has pointed out,
however,
the semantics of the conceptual structures in KL-ONE and related systems is not totally clear.
Although many researchers have identified concepts with the notion of predicate in first-order logic,
Woods argues for the need to represent intensional concepts.
According to Woods,
such intensions cannot be represented in first-order logic,
and cannot be thought of as the classes of traditional knowledge representation systems.
Although Woods uses the formal notion of abstract conceptual description
as a means to logically represent both the intensional and the extensional aspects of concepts,
the simpler notion of a formal context
as championed by Rudolf Wille \cite{Wille}
is a more elegant alternative.
In this paper we give arguments that Wille's original notion of formal context,
although quite appealing in its simplicity and elegance,
now should be replaced by a more semantic notion.

The basic constituents in conceptual classification and knowledge representation
are
entities or objects corresponding to real-world objects,
and ways of describing these in terms of attributes or properties.
In programming languages attributes represent the data and operations of a data type.
The relationship between entities and attributes is a {\em has\/} relationship called a context.
Formal concept analysis,
a new approach to classification and knowledge representation initiated by Wille,
starts with the primitive notion of a formal context.
A {\em formal context\/} is a triple $\triple{X_0}{X_1}{\mu}$
consisting of two sets $X_0$ and $X_1$ and a binary relation $\mu \subseteq \product{X_0}{X_1}$
between $X_0$ and $X_1$.
Intuitively, the elements of $X_0$ are thought of as {\em entities\/} or {\em objects\/},
the elements of $X_1$ are thought of as {\em properties\/}, {\em characteristics\/} or {\em attributes\/}
that the entities might have,
and $x_0{\mu}x_1$ asserts that ``the entity $x_0$ {\em has\/} the attribute $x_1$.''
One should take note of the strict segregation
between entities on the one hand and attributes on the other.
From an extensional point-of-view
a formal context $\triple{X_0}{X_1}{\mu}$
is a base set of entities $X_0$
with an indexed collection of subsets $\{ {\mu}x_1 \mid x_1 \memberof X_1 \}$.
Then a second formal context $\triple{X_1}{X_2}{\nu}$
would be extensionally interpreted as an indexed collection of indexed collections of subsets of $X_0$:
$\{ \{ {\mu}x_1 \mid x_1 \memberof {\nu}x_2 \} \mid x_2 \memberof X_2 \}$.
Obviously,
higher-order types are implicitly represented here.
Simple relational composition of formal contexts $\triple{X_0}{X_2}{{\mu}\circ{\nu}}$
corresponds extensionally to indexed unions:
$\{ {({\mu}\circ{\nu})}x_2 \mid x_2 \memberof X_2 \} = \bigcup_{x_1 \in {\nu}x_2} {\mu}x_1$.
Relational implication $\triple{X_1}{X_2}{{\mu}\tensorimplysource{\nu}}$,
of two formal contexts $\triple{X_0}{X_1}{\mu}$ and $\triple{X_0}{X_2}{\nu}$
over the same base set of entities $X_0$,
extensionally indexes extensional containment:
${({\mu}\tensorimplysource{\nu})}x_2 = \{ x_1 \mid {\mu}x_1 \subseteq {\nu}x_2 \}$.

As Wille has explained,
formal concept analysis is based upon the understanding that a concept is a unit of thought
consisting of two parts:
its extension and its intension.
Within a certain restricted context or scope
(a type-like notion is implicit here),
the extent of a concept is a subset $\phi \memberof \powerof{X_0}$ consisting of all entities or objects belonging to the concept
--- you as an individual person belong to the concept `living person',
whereas
the intent of a concept is a subset $\psi \memberof \powerof{X_1}$ which includes all attributes or properties shared by the entities
--- all `living persons' share the attribute `can breathe'.
A concept of a given context will consist of an extent/intent pair $({\phi},{\psi})$.
These conceptual structures of Wille,
called formal concepts,
start to address the problems mentioned by Woods.
The notion of a formal concept is a valuable first step
toward a mathematical representation for the class concept from certain other classification domains
--- object-oriented programming and knowledge representation systems.
Class hierarchies are formally represented by concept lattices.
The foundation for formal concept analysis has relied in the past upon
a {\em set-theoretic\/} model of conceptual structures.
This model has been applied to both data analysis and knowledge representation.
We argue here that an enriched {\em order-theoretic\/} model for conceptual structures
provides for an improved foundation for formal concept analysis and knowledge representation.

Of central importance in concept construction are two {\em derivation operators\/}
which define the notion of ``sharing'' or ``commonality''.
For any subsets $\phi \memberof \powerof{X_0}$ and $\psi \memberof \powerof{X_1}$,
we define
\begin{center}
	$\begin{array}{l}
		\derivedir{\phi}{\mu} \define \{ x_1 \in X_1 \mid x_0{\mu}x_1 \mbox{ for all } x_0 \in \phi \} \\
		\deriveinv{\psi}{\mu} \define \{ x_0 \in X_0 \mid x_0{\mu}x_1 \mbox{ for all } x_1 \in \psi \}
	 \end{array}$
\end{center}
To demand that a concept $({\phi},{\psi})$ be determined by its extent and its intent
means
that the intent should contain precisely those attributes shared by all entities in the extent
$\derivedir{\phi}{\mu} = \psi$,
and vice-versa,
that the extent should contain precisely those entities sharing all attributes in the intent
$\phi = \deriveinv{\psi}{\mu}$.
Concepts are ordered by generalization/specialization:
one concept is more specialized (and less general) than another
$(\phi,\psi) \leq_{\rm CL} (\phi',\psi')$
when its intent contains the other's intent
$\psi \supseteq \psi'$,
or equivalently,
when the opposite ordering on extents occurs
$\phi \subseteq \phi'$.
Concepts with this generalization/specialization ordering
form a concept hierarchy for the context.
The concept hierarchy is a complete lattice ${\bf CL}\triple{X_0}{X_1}{\mu}$
called the {\em concept lattice\/} of $\triple{X_0}{X_1}{\mu}$.
The meets and joins in ${\bf CL}\triple{X_0}{X_1}{\mu}$ can be described as follows:
\begin{center}
	$\begin{array}{r@{\;=\;}l}
		\bigwedge_{k \in K} (\phi_k,\psi_k)
		& \left( \bigcap_{k \in K} \phi_k, \closure{(\bigcup_{k \in K} \psi_k)}{\mu} \right)
		\\
		\bigvee_{k \in K} (\phi_k,\psi_k)
		& \left( \closure{(\bigcup_{k \in K} \phi_k)}{\mu}, \bigcap_{k \in K} \psi_k \right)
	 \end{array}$
\end{center}
The join of a collection of concepts
represents what the concepts have in `common' or `share',
and the top of the concept hierarchy represents all entities
(the universal concept).

The information presented in Table~\ref{planets}
and originally described in \cite{Wille}
gives a limited context for the planets of our solar system.
The entities $X_0 = \{ {\rm Me},{\rm V},{\rm {\rm E}},{\rm Ma},{\rm J},{\rm S},{\rm U},{\rm N},{\rm P} \}$ are the planets
and the attributes $X_1 = \{ {\rm ss},{\rm sm},{\rm sl},{\rm dn},{\rm df},{\rm my},{\rm mn} \}$ are the seven scaled properties
relating to size, distance from the sun, and existence of moons,
with abbreviations
\scriptsize
\begin{center}
\begin{tabular}{ccc}
	\begin{tabular}{|l@{\,\,---\,\,\,}l|} \hline
		\multicolumn{2}{|c|}{\normalsize{\bf entities}\scriptsize} \\ \hline
		Me & Mercury  \\
		V  & Venus   \\
		E  & Earth   \\
		Ma & Mars    \\
		J  & Jupiter \\
		S  & Saturn  \\
		U  & Uranus  \\
		N  & Neptune \\
		P  & Pluto   \\ \hline
	\end{tabular}
	&
	\normalsize and \scriptsize
	&
	\begin{tabular}{|l@{\,\,---\,\,\,}l|} \hline
		\multicolumn{2}{|c|}{\normalsize{\bf attributes}\scriptsize} \\ \hline
		ss & size{\bf :}small    \\
		sm & size{\bf :}medium   \\
		sl & size{\bf :}large    \\
		dn & distance{\bf :}near \\
		df & distance{\bf :}far  \\
		my & moon{\bf :}yes      \\
		mn & moon{\bf :}no       \\ \hline
	\end{tabular}
\end{tabular}
\end{center}
\normalsize
The table itself represents the {\em has\/} relationship $\mu \subset \product{X_0}{X_1}$.
The fact $x_0{\mu}x_1$ that the $x_0$th object has the $x_1$th attribute
is indicated by a `$\times$' in the ${x_0}{x_1}$th entry in Table~\ref{planets}.
\scriptsize
\begin{table}
\begin{center}
	\begin{tabular}{|l|ccccccc|} \hline
		   & ss & sm & sl & dn & df & my & mn     \\ \hline
		Me & $\times$ &&& $\times$ &&& $\times$   \\
		V  & $\times$ &&& $\times$ &&& $\times$   \\
		E  & $\times$ &&& $\times$ && $\times$ &  \\
		Ma & $\times$ &&& $\times$ && $\times$ &  \\
		J  &&& $\times$ && $\times$ & $\times$ &  \\
		S  &&& $\times$ && $\times$ & $\times$ &  \\
		U  && $\times$ &&& $\times$ & $\times$ &  \\
		N  && $\times$ &&& $\times$ & $\times$ &  \\
		P  & $\times$ &&&& $\times$ & $\times$ &  \\ \hline
	\end{tabular}
\end{center}
\caption{{\bf A contextual relationship for planets} \label{planets}}
\end{table}
\normalsize
The concepts for this planetary context are listed in Table~\ref{CLplanets}.
\footnotesize
\begin{table}
\begin{center}
	\begin{tabular}{|c|c|c|}
		\hline
		{\bf concept}
			& {\bf extent}
				& {\bf intent} \\
		{\bf description}
			& $\phi$
				& $\psi$ \\
		\hline\hline
		``everything''
			& $X_0$
				& $\emptyset$ \\
		``with moon''
			& $\{ {\rm E}, {\rm Ma}, {\rm J}, {\rm S}, {\rm U}, {\rm N}, {\rm P} \}$
				& $\{ {\rm my} \}$ \\
		``small size''
			& $\{ {\rm Me}, {\rm V}, {\rm E}, {\rm Ma}, {\rm P} \}$
				& $\{ {\rm ss} \}$ \\
		``small with moon''
			& $\{ {\rm E}, {\rm Ma}, {\rm P} \}$
				& $\{ {\rm ss}, {\rm my} \}$ \\
		``far''
			& $\{ {\rm J}, {\rm S}, {\rm U}, {\rm N}, {\rm P} \}$
				& $\{ {\rm df}, {\rm my} \}$ \\
		``near''
			& $\{ {\rm Me}, {\rm V}, {\rm E}, {\rm Ma} \}$
				& $\{ {\rm ss}, {\rm dn} \}$ \\
		``Plutoness''
			& $\{ {\rm P} \}$
				& $\{ {\rm ss}, {\rm df}, {\rm my} \}$ \\
		``medium size''
			& $\{ {\rm U}, {\rm N} \}$
				& $\{ {\rm sm}, {\rm df}, {\rm my} \}$ \\
		``large size''
			& $\{ {\rm J}, {\rm S} \}$
				& $\{ {\rm sl}, {\rm df}, {\rm my} \}$ \\
		``near with moon''
			& $\{ {\rm E}, {\rm Ma} \}$
				& $\{ {\rm ss}, {\rm dn}, {\rm my} \}$ \\
		``moonless''
			& $\{ {\rm Me}, {\rm V} \}$
				& $\{ {\rm ss}, {\rm dn}, {\rm mn} \}$ \\
		``nothing''
			& $\emptyset$
				& $X_1$ \\
		\hline
	\end{tabular}
\end{center}
\caption{{\bf Concept lattice ${\bf CL}\triple{X_0}{X_1}{\mu}$ for the planetary relationship} \label{CLplanets}}
\end{table}
\normalsize

Entities generate concepts.
There is a function
$\morphism{X_0}{\hat{\imath}_0}{{\bf CL}\triple{X_0}{X_1}{\mu}}$
called the generator function
which maps each entity $x_0 \memberof X_0$
to its associated concept
$\hat{\imath}_0(x_0) \define (\closure{(x_0)}{\mu},\derivedir{(x_0)}{\mu})$.
Similarly, attributes generate concepts
by means of a generator function
$\morphism{X_1}{\hat{\imath}_1}{{\bf CL}\triple{X_0}{X_1}{\mu}}$
which maps attributes $x_1 \memberof X_1$
to their associated concept
$\hat{\imath}_1(x_1) \define (\deriveinv{(x_1)}{\mu},\closure{(x_1)}{\mu})$.
These functions are dense:
the entity generator function is join-dense in the concept lattice,
since $\derivedir{\phi}{\mu} = \bigcap_{x_0 \in \phi} \derivedir{(x_0)}{\mu}$
for any subset $\phi \memberof \powerof{X_0}$;
and the attribute generator function is meet-dense in the concept lattice,
since $\deriveinv{\psi}{\mu} = \bigcap_{x_1 \in \psi} \deriveinv{(x_1)}{\mu}$
for any subset $\psi \memberof \powerof{X_1}$.
The concept lattice contains all of the information in the formal context:
the has relation $\mu$ of the formal context can be expressed
in terms of the generator maps plus the concept order
\[ x_0{\mu}x_1 \mbox{  iff  } \hat{\imath}_0(x_0) \leq_{\rm CL} \hat{\imath}_1(x_1). \]

\begin{Theorem}
	{\bf Basic Theorem for Concept Lattices:} \cite{Wille}
	For any formal context $\triple{X_0}{X_1}{\mu}$ the set of concepts
	${\bf CL}\triple{X_0}{X_1}{\mu}$ with the generalization/specialization order
	forms a complete lattice.
	Any complete lattice $L$ is isomorphic to ${\bf CL}\triple{X_0}{X_1}{\mu}$
	iff
	there are two maps $X_0 \stackrel{f_0}{\longrightarrow} L \stackrel{f_1}{\longleftarrow} X_1$
	where $f_0$ is join-dense in L, $f_1$ is meet-dense in L,
	and $x_0{\mu}x_1 \mbox{  iff  } f_0(x_0) \leq_L f_1(x_1)$.
\end{Theorem}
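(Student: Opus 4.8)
The plan is to route everything through the observation that the two derivation operators form an antitone Galois connection between $\powerof{X_0}$ and $\powerof{X_1}$: for all $\phi \memberof \powerof{X_0}$ and $\psi \memberof \powerof{X_1}$ one has $\psi \subseteq \derivedir{\phi}{\mu}$ iff $\phi \subseteq \deriveinv{\psi}{\mu}$, since both sides merely assert that $x_0 \mu x_1$ holds for every $x_0 \memberof \phi$ and every $x_1 \memberof \psi$. From this single adjunction the standard consequences follow formally: both composites $\phi \mapsto \closure{\phi}{\mu} = \deriveinv{(\derivedir{\phi}{\mu})}{\mu}$ and $\psi \mapsto \closure{\psi}{\mu} = \derivedir{(\deriveinv{\psi}{\mu})}{\mu}$ are closure operators, the derivation maps convert arbitrary unions into intersections, and they restrict to mutually inverse antitone bijections between the closed subsets (extents) of $X_0$ and the closed subsets (intents) of $X_1$. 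A concept is exactly a matched pair of such closed sets, and $\leq_{\rm CL}$ is just inclusion of extents.

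For the first assertion I would show the poset of concepts is a complete lattice by exhibiting arbitrary meets and joins. An intersection of closed extents is again closed, so $\bigcap_k \phi_k$ is an extent; using $\phi_k = \deriveinv{\psi_k}{\mu}$ and the union-to-intersection property gives $\bigcap_k \phi_k = \deriveinv{(\bigcup_k \psi_k)}{\mu}$, whence its derived intent is $\closure{(\bigcup_k \psi_k)}{\mu}$, reproducing the stated meet formula; the join formula is dual. This already witnesses completeness.

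For the characterization, the forward direction is a transport argument. Given an isomorphism $\theta$ from $\mathbf{CL}\triple{X_0}{X_1}{\mu}$ onto $L$, I would set $f_0 \define \theta \circ \hat{\imath}_0$ and $f_1 \define \theta \circ \hat{\imath}_1$; because $\hat{\imath}_0$ is join-dense, $\hat{\imath}_1$ is meet-dense, and $x_0 \mu x_1$ iff $\hat{\imath}_0(x_0) \leq_{\rm CL} \hat{\imath}_1(x_1)$ (all recorded in the excerpt), and because an order isomorphism preserves joins, meets, and the order, the three required properties pass to $f_0, f_1$ verbatim.

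The substance is the converse. Given $f_0, f_1$ with the stated properties, I would define for each $\ell \memberof L$ the candidate extent $\phi_\ell \define \{ x_0 \memberof X_0 \mid f_0(x_0) \leq_L \ell \}$ and intent $\psi_\ell \define \{ x_1 \memberof X_1 \mid \ell \leq_L f_1(x_1) \}$, and send $\ell$ to $(\phi_\ell, \psi_\ell)$. The crux is verifying this pair is a concept, and this is exactly where density is indispensable: the inclusion $\psi_\ell \subseteq \derivedir{(\phi_\ell)}{\mu}$ is immediate from transitivity of $\leq_L$, but the reverse inclusion requires that whenever $f_1(x_1)$ dominates every $f_0(x_0)$ lying below $\ell$ it must also dominate $\ell$, which holds precisely because join-density gives $\ell = \bigvee \{ f_0(x_0) \mid f_0(x_0) \leq_L \ell \}$; the dual computation $\deriveinv{(\psi_\ell)}{\mu} = \phi_\ell$ uses meet-density of $f_1$. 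Monotonicity of $\ell \mapsto (\phi_\ell, \psi_\ell)$ is clear, order-reflection again expands $\ell$ as the join of the $f_0(x_0)$ below it, and for surjectivity I would, given a concept $(\phi, \psi)$, take $\ell \define \bigvee \{ f_0(x_0) \mid x_0 \memberof \phi \}$ and check $\phi_\ell = \phi$ using that each $x_1 \memberof \psi$ is an upper bound of $\{ f_0(x_0) \mid x_0 \memberof \phi \}$ and hence of $\ell$. I expect the main obstacle to be bookkeeping the two density hypotheses correctly so that each closed-set identity closes, rather than any deep difficulty; once the Galois connection and the join/meet expansions of a generic $\ell$ are in place, every remaining step is forced.
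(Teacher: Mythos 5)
Your proof is correct, but there is nothing in the paper to compare it against: the theorem is stated with a citation to Wille and never proved here --- the surrounding introduction only records the ingredients (the derivation operators, the meet/join formulas, the generator functions $\hat{\imath}_0,\hat{\imath}_1$ with their density properties, and the recovery of $\mu$ via $x_0{\mu}x_1$ iff $\hat{\imath}_0(x_0) \leq_{\rm CL} \hat{\imath}_1(x_1)$). Your argument is the classical one: the antitone Galois connection ($\psi \subseteq \derivedir{\phi}{\mu}$ iff $\phi \subseteq \deriveinv{\psi}{\mu}$) yields the closure operators and completeness; the forward implication transports $\hat{\imath}_0,\hat{\imath}_1$ along the isomorphism (legitimate, since the density facts you invoke are both asserted in the paper and one-line consequences of your own setup); and the converse sends $\ell$ to $(\phi_\ell,\psi_\ell)$ with $\phi_\ell = \{ x_0 \mid f_0(x_0) \leq_L \ell \}$ and $\psi_\ell = \{ x_1 \mid \ell \leq_L f_1(x_1) \}$, join-density of $f_0$ and meet-density of $f_1$ being exactly what close the identities $\derivedir{(\phi_\ell)}{\mu} = \psi_\ell$ and $\deriveinv{(\psi_\ell)}{\mu} = \phi_\ell$ and what give order-reflection and surjectivity. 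Your checks all go through; two cosmetic points: in the surjectivity step the phrase ``each $x_1 \memberof \psi$ is an upper bound'' should read ``each $f_1(x_1)$ with $x_1 \memberof \psi$ is an upper bound,'' and $\phi_\ell = \phi$ alone suffices there because a concept is determined by its extent. The one substantive contrast worth drawing: where this paper does prove statements in this vein (the Equivalence Theorem and the appendices), it argues point-free, via relational composition, the two implications $\tensorimplysource$ and $\tensorimplytarget$, and their adjointness; your proof is pointwise. The point-free style is what lets the paper generalize to order-enriched contexts and to the Dedekind-MacNeille comparison that is its actual subject, while yours is the more elementary, self-contained rendering of Wille's original theorem.
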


These notions of
formal context, concept, concept construction via derivation operators, and concept lattice
are comparable to the notions of
order, closed subset, upper/lower operators, and Dedekind-MacNeille completion of the order \cite{MacNeille}.
The Dedekind-MacNeille completion generalizes Dedekind's construction of the real number system
from the rational numbers via cuts.
Indeed, Dedekind's real number construction is the Dedekind-MacNeille completion of the rational number system order.
By the same token,
the Dedekind-MacNeille completion of any order is the concept lattice construction
of the formal context consisting of the ordering relationship,
where the elements from the underlying set of the order
are interpreted as both entities and attributes
(a simple, yet important, example indicating the interchangeability between entities and attributes,
and de-emphasizing their distinctness).
A more abstract version of aspects of the Dedekind-MacNeille completion
was essential in the development of a classical relational logic \cite{Kent}.

Does the Dedekind-MacNeille completion play a strictly subordinate role to the concept lattice construction,
or are the two constructions equivalent in some sense?
This paper provides an affirmative answer to the latter question
by showing how Wille's concept lattice construction can be viewed
as (a local component of) the Dedekind-MacNeille completion of a distributed order
--- the two constructions are equivalent approaches to classification.
The philosophy behind this approach is the view that
a formal context is a kind of constraint which qualifies parallel control/data flow.
All this development centers around the central adjointness of relational logic between composition and implication.

The first section explains with examples the meaning of collective entities and collective attributes.
In the second section we discover the hidden relationship opposite to the original {\em has\/} relationship.
Section three defines our new version of formal context, introduces contextual closure, and describes formal concepts from this new standpoint.
In section four we explain a new notion of order-theoretic sum which centralizes our distributed version of formal contexts,
and we state and prove the Equivalence Theorem which relates the concept lattice construction with Dedekind-MacNeille completion.
Finally, in section five we indicate some areas of new research.


\section{Conceptual Collectives} \label{ConceptualCollectives}

The {\bf first} step that we take in the analysis of concept construction
is the observation, already made by Wille, that a formal context,
although defined {\em a priori\/} in terms of sets and relations,
has order relationships on entities and attributes
induced by the corresponding generator map into the concept lattice.

Part of the basic theorem for concept lattices
states that the binary relation of the original formal context
can be recovered
via the two generator functions plus the concept lattice order:
$x_0{\mu}x_1 \mbox{ iff } \hat{\imath}_0(x_0) \leq_{\rm CL} \hat{\imath}_1(x_1)$.
This same approach can be used to define order relations on both source and target sets
\cite{Wille}:
\begin{enumerate}
	\item ${\cal X}_0^{\mu} \define \pair{X_0}{\leq_0^{\mu}}$,
		where the order relation ${\leq_0^{\mu}}$ is defined by
		$x'_0 \leq_0^{\mu} x_0$
		iff
		$\hat{\imath}_0(x'_0) \leq_{\rm CL} \hat{\imath}_0(x_0)$
		iff
		$x'_0{\mu} \supseteq x_0{\mu}$,
		so that ${\leq_0^{\mu}} = {\mu} \tensorimplytarget {\mu}$;
		${\cal X}_0^{\mu}$ is the largest source order for which ${\mu}$ is closed on the left.
	\item ${\cal X}_1^{\mu} \define \pair{X_1}{\leq_1^{\mu}}$,
		where the order relation ${\leq_1^{\mu}}$ is defined by
		$x_1 \leq_1^{\mu} x'_1$
		iff
		$\hat{\imath}_1(x_1) \leq_{\rm CL} \hat{\imath}_1(x'_1)$
		iff
		${\mu}x_1 \subseteq {\mu}x'_1$,
		so that ${\leq_1^{\mu}} = {\mu} \tensorimplysource {\mu}$;
		${\cal X}_1^{\mu}$ is the largest target order for which ${\mu}$ is closed on the right.
\end{enumerate}
The binary relation from the original context is a closed relation w.r.t.\ these induced orders
\[ \term{{\cal X}_0^{\mu}}{\mu}{{\cal X}_1^{\mu}}. \]
The source and target orders induced by the concept lattice of the planetary context
are displayed in Table~\ref{inducedorders}.
\scriptsize
\begin{table}
\begin{center}
\begin{tabular}{c@{\hspace{1cm}}c}
	\begin{tabular}{c}
	\begin{tabular}{|l|@{\hspace{1.5mm}}c@{\hspace{1.5mm}}c@{\hspace{1.5mm}}c@{\hspace{1.5mm}}c@{\hspace{1.5mm}}c@{\hspace{1.5mm}}c@{\hspace{1.5mm}}c@{\hspace{1.5mm}}c@{\hspace{1.5mm}}c@{\hspace{0.5mm}}|} \hline
		   & Me & V & E & Ma & J & S & U & N & P  \\ \hline
		Me & $\times$ & $\times$ &&&&&&& \\
		V  & $\times$ & $\times$ &&&&&&& \\
		E  &&& $\times$ & $\times$ &&&&& \\
		Ma &&& $\times$ & $\times$ &&&&& \\
		J  &&&&& $\times$ & $\times$ &&& \\
		S  &&&&& $\times$ & $\times$ &&& \\
		U  &&&&&&& $\times$ & $\times$ & \\
		N  &&&&&&& $\times$ & $\times$ & \\
		P  &&&&&&&&& $\times$  \\ \hline
	\end{tabular}
	\\ \\
	\normalsize {\bf Source order} \scriptsize
	\end{tabular}
	&
	\begin{tabular}{c}
	\begin{tabular}{|l|@{\hspace{1mm}}c@{\hspace{1mm}}c@{\hspace{1mm}}c@{\hspace{1mm}}c@{\hspace{1mm}}c@{\hspace{1mm}}c@{\hspace{1mm}}c@{\hspace{0.5mm}}|} \hline
		   & ss & sm & sl & dn & df & my & mn     \\ \hline
		ss & $\times$ &&&&&&                      \\
		sm && $\times$ &&& $\times$ & $\times$ &  \\
		sl &&& $\times$ && $\times$ & $\times$ &  \\
		dn & $\times$ &&& $\times$ &&&            \\
		df &&&&& $\times$ & $\times$ &            \\
		my &&&&&& $\times$ &                      \\
		mn & $\times$ &&& $\times$ &&& $\times$   \\ \hline
	\end{tabular}
	\\ \\
	\normalsize {\bf Target order} \scriptsize
	\end{tabular}
\end{tabular}
\end{center}
\caption{{\bf Induced orders for the planetary context} \label{inducedorders}}
\end{table}
\normalsize

Intuitively,
for the source set the order $\leq_0^{\mu}$ specifies implicational information
between entities:
an order relationship $x'_0 \leq_0^{\mu} x_0$
exists between two entities $x'_0$ and $x_0$
when $x'_0$ has all the attributes of $x_0$
(and possibly some others).
In a sense,
$x_0$ represents its collection of individual attributes
$x_0{\mu} \subseteq x'_0{\mu} \subseteq X_1$
--- in fact,
we can regard the entity $x_0$ as being a kind of {\em collective attribute\/} of $x'_0$.
In particular,
any entity is a collective attribute of itself.
By the same token,
for the target set the order $\leq_1^{\mu}$ specifies implicational information
between attributes:
an order relationship $x_1 \leq_1^{\mu} x'_1$
exists between two attributes $x_1$ and $x'_1$
when entities which have attribute $x_1$ also have attribute $x'_1$;
$x_1$ represents its collection of individual entities
${\mu}x_1 \subseteq {\mu}x'_1 \subseteq X_0$,
we regard the attribute $x_1$ as being a kind of {\em collective entity\/} of $x'_1$,
and any attribute is a collective entity of itself.
These dual relationships are picture in Figure~\ref{collective}.
\begin{figure}
\begin{center}
	\begin{tabular}{c@{\hspace{5mm}}c}
		\begin{tabular}{c}
			$\underbrace{\mbox{\begin{tabular}{c@{\hspace{5mm}}c}
				\raisebox{20pt}{$x'_0 \preceq_0^{\mu} x_0$}
					& \begin{picture}(50,50)(-25,-25)
						\put(0,14){\makebox(0,0){\footnotesize$x'_0{\mu}$\normalsize}}
						\put(0,-5){\makebox(0,0){\footnotesize$x_0{\mu}$\normalsize}}
						\thicklines
						\put(0,0){\circle{50}}
						\put(0,-5){\circle{20}}
						\thinlines
					\end{picture}
						\\
				individual entities
					& \begin{tabular}{c}subsets of\\individual attributes\end{tabular}
			\end{tabular}}}$
				\\
			individual entity $x'_0$ having collective attribute $x_0$
		\end{tabular}
		&
		\begin{tabular}{c}
			$\underbrace{\mbox{\begin{tabular}{c@{\hspace{5mm}}c}
					\begin{picture}(50,50)(-25,-25)
						\put(0,14){\makebox(0,0){\footnotesize${\mu}x'_1$\normalsize}}
						\put(0,-5){\makebox(0,0){\footnotesize${\mu}x_1$\normalsize}}
						\thicklines
						\put(0,0){\circle{50}}
						\put(0,-5){\circle{20}}
						\thinlines
					\end{picture}
						& \raisebox{20pt}{$x_1 \preceq_1^{\mu} x'_1$}
							\\
					\begin{tabular}{c}subsets of\\individual entities\end{tabular}
						& individual attributes
			\end{tabular}}}$
				\\
			collective entity $x_1$ having individual attribute $x'_1$
		\end{tabular}
	\end{tabular}
\end{center}
\caption{{\bf collective entities, collective attributes} \label{collective}}
\end{figure}
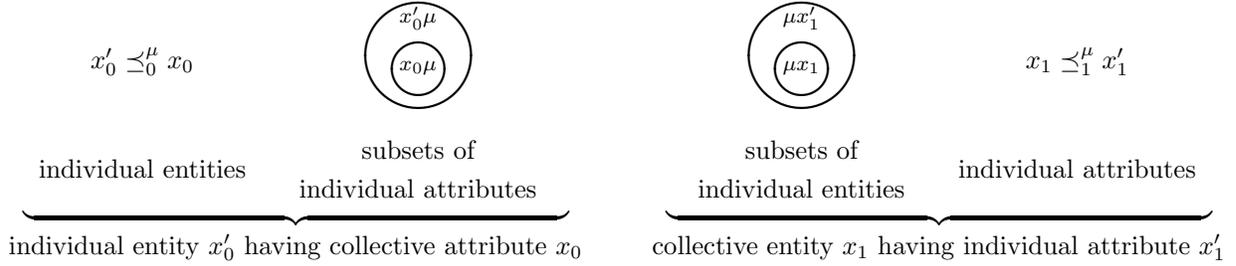
Here we see more evidence of the interchangeability of entities and attributes,
arguing for a certain kind of inherent blending or integration of the two notions.
Ultimately
we will argue for the total blending or integration of entities and attributes,
but in a very structured fashion which allows for a locally relative distinction.

Since the order ${\cal X}_0^{\mu}$ is a legitimate relationship between entities and (collective) attributes,
we can define direct and inverse derivation along the identity relationship ${\cal X}_0^{\mu}$.
These are identical with the upper and lower operators on ${\cal X}_0^{\mu}$.
Suppose $\phi \subseteq X_0$ is a subset of entities.
What is the meaning of the upper operator applied to $\phi$?
The upper operator applied to $\phi$ returns the closed-above subset
\begin{center}
	$\begin{array}{r@{\:=\;}l}
		\upper{\phi}{{\cal X}_0^{\mu}}
		& \derivedir{\phi}{{\cal X}_0^{\mu}} \\
		& \{ x'_0 \memberof X_0 \mid x_0 \leq_0^{\mu} x'_0 \mbox{ for all } x_0 \memberof \phi \} \\
		& \{ x'_0 \memberof X_0 \mid \hat{\imath}_0(x_0) \leq_{\rm CL} \hat{\imath}_0(x'_0) \mbox{ for all } x_0 \memberof \phi \} \\
		& \{ x'_0 \memberof X_0 \mid x_0{\mu} \supseteq x'_0{\mu} \mbox{ for all } x_0 \memberof \phi \},
	 \end{array}$
\end{center}
which consists of all collective attributes of all entities in $\phi$.
An entity in $\upper{\phi}{{\cal X}_0^{\mu}}$
represents a certain `type of commonality' for all the entities in $\phi$.
We wish to emphasize the obvious analogy with the definition of the direct derivation $\derivedir{\phi}{\mu}$ along $\mu$,
the main and only distinction being that
direct derivation $\derivedir{\phi}{\mu}$
returns all {\em individual attributes\/} of all entities in $\phi$,
whereas the upper operator $\upper{\phi}{{\cal X}_0^{\mu}}$
returns all {\em collective attributes\/} of all entities in $\phi$.
A dual discussion can be given for inverse derivation where existence of collective entities in the target set is observed.
Formal contexts and their various constituents are illustrated in Figure~\ref{formalcontexts}.
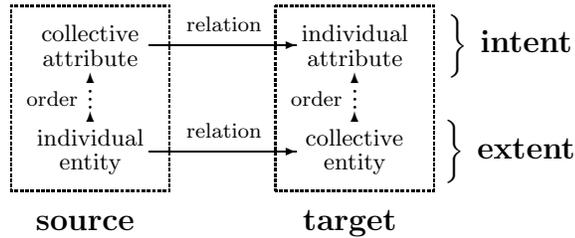
\begin{figure}
\begin{center}
	\begin{picture}(200,80)(-40,-46)
		\put(0,20){\makebox(0,0){\small\shortstack{collective\\attribute}\normalsize}}
		\put(0,-20){\makebox(0,0){\small\shortstack{individual\\entity}\normalsize}}
		\put(0,-47){\makebox(0,0){\large {\bf source} \normalsize}}
		\put(100,20){\makebox(0,0){\small\shortstack{individual\\attribute}\normalsize}}
		\put(100,-20){\makebox(0,0){\small\shortstack{collective\\entity}\normalsize}}
		\put(100,-47){\makebox(0,0){\large {\bf target} \normalsize}}
		\put(160,-20){\makebox(0,0){$\left.\rule{0cm}{5mm}\right\}$\hspace{5pt}\large {\bf extent} \normalsize}}
		\put(160,20){\makebox(0,0){$\left.\rule{0cm}{5mm}\right\}$\hspace{5pt}\large {\bf intent} \normalsize}}
		\put(52,28){\makebox(0,0){\footnotesize relation \normalsize}}
		\put(52,-12){\makebox(0,0){\footnotesize relation \normalsize}}
		\put(-13,0){\makebox(0,0){\footnotesize order \normalsize}}
		\put(87,0){\makebox(0,0){\footnotesize order \normalsize}}
		\put(22,20){\vector(1,0){56}}
		\put(22,-20){\vector(1,0){56}}
		\put(0,10){\vector(0,1){0}}
		\multiput(0,-3.5)(0,3){3}{\makebox(0,0){$\cdot$}}
		\put(0,-5){\vector(0,1){0}}
		\put(100,10){\vector(0,1){0}}
		\multiput(100,-3.5)(0,3){3}{\makebox(0,0){$\cdot$}}
		\put(100,-5){\vector(0,1){0}}
		\put(-30,-35){\dashbox{1}(60,70){}}
		\put(70,-35){\dashbox{1}(60,70){}}
	\end{picture}
\end{center}
\caption{{\bf Formal contexts} \label{formalcontexts}}
\end{figure}

To summarize the discussion above,
given any relation $\term{{\cal X}_0}{\mu}{{\cal X}_1}$
used to represent concepts in knowledge representation,
intuitively each element $x_0 \memberof X_0$ is
both an individual entity following the standard interpretation
and a collective attribute representing the collection
$x_0{\mu} \memberof \powerof{{\cal X}_1}$ of individual attributes,
and dually,
each element $x_1 \memberof X_1$ is
both an individual attribute following the standard interpretation
and a collective entity representing the collection
${\mu}x_1 \memberof \powerof{{\cal X}_0^{\rm op}}$ of individual entities.
We will use this point of view
in order to understand the appropriate course of action for concept construction
when order information is specified {\em a priori\/} for both source (entities) and target (attribute) sets.

\begin{table}
\begin{center}
	\begin{tabular}{|c|ccccc|} \hline
			& \shortstack{{\bf individual}\\{\bf entity}} & {\bf order} & \shortstack{{\bf collective}\\{\bf attribute}} & {\bf relation} & \shortstack{{\bf individual}\\{\bf attribute}}
			\\ \hline\hline
		knowledge representation
			& cat & is-a & animal & requires & oxygen
			\\ \hline
		linguistics
			& ? & is-a & ? & ? & ?
			\\ \hline
		planets
			& Earth & equiv-to & Mars & has & size:small
			\\ \hline
		database
			& Smith  & is-a & engineer & works-for & Aerospace, Inc.
			\\ \hline
		generic database
			& employee & works-for & company & located-in & city
			\\ \hline
	\end{tabular}
\end{center}
\caption{{\bf Examples of collective attributes} \label{collatts}}
\end{table}

\begin{table}
\begin{center}
	\begin{tabular}{|c|ccccc|} \hline
			& \shortstack{{\bf individual}\\{\bf entity}} & {\bf relation} & \shortstack{{\bf collective}\\{\bf entity}} & {\bf order} & \shortstack{{\bf individual}\\{\bf attribute}}
			\\ \hline\hline
		Roget's thesaurus
			& word-string & occurs-in & RIT-category       & sub-type & RIT-class \\
			& ``toast''   &           & {\bf 324}: Cooking &          & {\bf Three}: Physics
			\\ \hline
		planets
			& Pluto & has & distance:far & implies & moon:yes
			\\ \hline
			& Felix & member & cat & sub-type & mammal
			\\ \hline
	\end{tabular}
\end{center}
\caption{{\bf Examples of collective entities} \label{collents}}
\end{table}


\section{Concept Construction}

The {\bf second} step is crucial!
We use Wille's philosophical position regarding the extension/intension duality of concepts,
and argue that the entity and attribute orders are themselves local relationships
which should be used simultaneously with the original relationship of the context.
The argument here centers around the viewpoint
that entities can be seen as collective attributes
and dually
that attributes can be seen as collective entities
--- whence the title of the paper.

Formal contexts have an order-theoretic nature,
in the sense that
at least an implicit order exists on both source set (entities) and target set (attributes).
We can respect this observation 
by defining a formal context {\em a priori\/} in terms of orders and order-closed relations,
effectively changing from the set-theoretic to the order-theoretic realm.
This order-theoretic realm
replaces sets with orders and replaces ordinary relations with closed relations
(other enriched realms will be considered in a subsequent paper,
where a more formal and abstract analysis is given).
Let us use the point of view espoused in the first subsection above
in order to understand the appropriate course of action for concept construction
when order information is specified {\em a priori\/} for both source (entities) and target (attribute) sets.
Let the closed relation $\term{{\cal X}_0}{\mu}{{\cal X}_1}$ represent
a formal context in the enriched order-theoretic realm.
The order information $x'_0 \preceq_{{\cal X}_0} x_0$
specified {\em a priori\/} is interpreted as
``the entity $x_0$ is a collective attribute of $x'_0$''.
By closure of the relation ${\mu}$ at the source order ${\cal X}_0$,
any ${\mu}$-attribute of $x_0$ is an ${\mu}$-attribute of $x'_0$.
Now given a subset $\phi \subseteq X_0$,
when computing the common shared attributes of elements of $\phi$ during concept construction,
it seems appropriate to consider not only application of the direct derivation operator
\[ \longmorphism{\powerof{{\cal X}_0^{\rm op}}}{\derivedir{(\,)}{\mu}}{{(\powerof{{\cal X}_1})}^{\rm op}} \]
getting the order filter
$\derivedir{\phi}{\mu} \in \powerof{{\cal X}_1}$
of all shared individual attributes,
but also application of the upper operator
\[ \longmorphism{\powerof{{\cal X}_0^{\rm op}}}{\upper{(\,)}{{\cal X}_0}}{{(\powerof{{\cal X}_0})}^{\rm op}} \]
getting the order filter
$\upper{\phi}{{\cal X}_0} \in \powerof{{\cal X}_0}$
of all shared collective attributes.
This pair of order filters satisfies the filter assertion
\[ \upper{\phi}{{\cal X}_0} \circ {\mu} \preceq \derivedir{\phi}{\mu} \]
since
$\phi \circ \upper{\phi}{{\cal X}_0} \circ {\mu}
 = \phi \circ (\phi \tensorimplysource {\cal X}_0) \circ {\mu}
 \preceq {\cal X}_0 \circ {\mu}
 = {\mu}$.

To recapitulate,
if we start with a single order ideal in $\powerof{{\cal X}_0^{\rm op}}$,
the direct phase of concept construction returns two 
assertionally constrained order filters,
one in $\powerof{{\cal X}_0}$ and one in $\powerof{{\cal X}_1}$.
Such constrained pairs of order filters
provide a necessary structural constraint on the {\em intensional aspect\/} of concepts:
the {\em intent\/} of a concept is a pair $(\psi_0,\psi_1)$,
an order filter of collective attributes $\psi_0 \in \powerof{{\cal X}_0}$
and an order filter of individual attributes $\psi_1 \in \powerof{{\cal X}_1}$
subject to the filter assertion
\[ \psi_0 \circ {\mu} \preceq \psi_1. \]
The need of this assertional constraint for ``type summability''
is discussed below in the order-theoretic realm.
It places a restriction upon filter pairs,
allowing only certain admissible pairs,
and is described by the slogan
\begin{center}
	\fbox{\begin{tabular}{c}
          	The (image of the) collective component \\
			is contained in the individual component.
	      \end{tabular}}
\end{center}
Continuing the argument above,
in the inverse phase of concept construction,
we start from the intent
--- the common attributes of a concept.
For this inverse phase,
since there are (at least) three relationships
$\mu$, $\preceq_{{\cal X}_0}$ and $\preceq_{{\cal X}_1}$,
there are (at least) three relevant operators:
\begin{enumerate}
	\item the inverse derivation operator
		\[ \longmorphism{{(\powerof{{\cal X}_1})}^{\rm op}}{\deriveinv{(\,)}{\mu}}{\powerof{{\cal X}_0^{\rm op}}} \]
		which when applied to the order filter $\psi_1 \in \powerof{{\cal X}_1}$ of individual attributes 
		returns the order ideal $\deriveinv{(\psi_1)}{\mu} \in \powerof{{\cal X}_0^{\rm op}}$
		of all individual entities which share all of the individual attributes in $\psi_1$,
	\item the lower operator
		\[ \longmorphism{{(\powerof{{\cal X}_1})}^{\rm op}}{\looer{(\,)}{{\cal X}_1}}{\powerof{{\cal X}_1^{\rm op}}} \]
		which when applied to the order filter $\psi_1 \in \powerof{{\cal X}_1}$ of individual attributes 
		returns the order ideal $\looer{(\psi_1)}{{\cal X}_1} \in \powerof{{\cal X}_1^{\rm op}}$
		of all collective entities which share all of the individual attributes in $\psi_1$,
		and
	\item the lower operator
		\[ \longmorphism{{(\powerof{{\cal X}_0})}^{\rm op}}{\looer{(\,)}{{\cal X}_0}}{\powerof{{\cal X}_0^{\rm op}}} \]
		which when applied to the order filter $\psi_0 \in \powerof{{\cal X}_0}$ of collective attributes 
		returns the order ideal $\looer{(\psi_0)}{{\cal X}_0} \in \powerof{{\cal X}_0^{\rm op}}$
		of all individual entities which share all of the collective attributes in $\psi_0$.
\end{enumerate}
Since we are again constructing commonality,
it is appropriate to take the meet 
$\looer{(\psi_0)}{{\cal X}_0} \wedge \deriveinv{(\psi_1)}{\mu}$
of the two order ideals in $\powerof{{\cal X}_0^{\rm op}}$;
this consists of all entities in common with
both collective attributes in $\psi_0$ and individual attributes in $\psi_1$.
We end up with the pair of order ideals
$\pair{\looer{(\psi_0)}{{\cal X}_0} \wedge \deriveinv{(\psi_1)}{\mu}}{\looer{(\psi_1)}{{\cal X}_1}}$.

{\bf SOMETHING IS WRONG!}
--- this pair does {\bf NOT} necessarily satisfy the ideal assertion
\[ {\mu} \circ \looer{(\psi_1)}{{\cal X}_1} \;\;\preceq\;\; \looer{(\psi_0)}{{\cal X}_0} \wedge \deriveinv{(\psi_1)}{\mu} \]
which is the codification of the admissibility slogan
for the {\em extensional aspect\/} of concepts,
that
``the (image of the) collective component is contained in the individual component''.
We note that ``half'' of this constraint does hold:
\[ {\mu} \circ \looer{(\psi_1)}{{\cal X}_1} \;\;\preceq\;\; \deriveinv{(\psi_1)}{\mu} \]
since
$\mu \circ \looer{(\psi_1)}{{\cal X}_1} \circ \psi_1
 = \mu \circ ({\cal X}_1 \tensorimplytarget \psi_1) \circ \psi_1
 \preceq \mu \circ {\cal X}_1
 = \mu$.
In order to satisfy the full assertional constraint,
we need an appropriate factor (order ideal) $\alpha$
which will restrict the collective component of the extent
in correspondence with
the restriction of the individual component of the extent by $\looer{(\psi_0)}{{\cal X}_0}$.
Then the full assertional constraint would be
\[ {\mu} \circ (\alpha \wedge \looer{(\psi_1)}{{\cal X}_1})
   \;\;\preceq\;\;
   \looer{(\psi_0)}{{\cal X}_0} \wedge \deriveinv{(\psi_1)}{\mu} \]
For this to hold,
a sufficient condition on the factor $\alpha$
is the partial (half) assertional constraint
\[ {\mu} \circ \alpha \;\;\preceq\;\; \looer{(\psi_0)}{{\cal X}_0} \]
The maximal order ideal satisfying this constraint is
$\alpha \define \mu \tensorimplysource \looer{(\psi_0)}{{\cal X}_0}
 = \mu \tensorimplysource ({\cal X}_0 \tensorimplytarget \psi_0)
 = (\mu \tensorimplysource {\cal X}_0) \tensorimplytarget \psi_0
 = \deriveinv{(\psi_0)}{(\mu \tensorimplysource {\cal X}_0)}$.
We interpret this as the inverse derivation of the order filter of collective attributes $\psi_0$
along the source negation
$\term{{\cal X}_1}{\mu \tensorimplysource {\cal X}_0}{{\cal X}_0}$.
The source negation is the largest relation
$\term{{\cal X}_1}{\nu}{{\cal X}_0}$
that is opposite to $\mu$
and
satisfies the partial asymmetric orthogonal constraint
$\mu \circ \nu \preceq {\cal X}_0$.

So the error above was an {\bf ERROR OF OMISSION}
--- there is a {\em hidden relationship\/} in the opposite direction to $\mu$ that also must be considered.
This hidden relationship provides for a fourth operator active in the inverse phase of concept construction above.
Actually,
in order for derivation to work correctly
in both the direct {\em and\/} inverse phases of concept construction,
we must use a relation no larger than the {\em negation\/} of $\mu$ \cite{Kent},
the relation
$\term{{\cal X}_1}{\neg{\mu}}{{\cal X}_0}$
defined by
\[ \neg{\mu} \define (\mu \tensorimplysource {\cal X}_0) \wedge ({\cal X}_1 \tensorimplytarget \mu) \]
This is the largest relation
$\term{{\cal X}_1}{\nu}{{\cal X}_0}$
which is opposite to $\mu$
and
satisfies the full symmetric orthogonal constraints
\[
 \mu \circ \nu \preceq {\cal X}_0
 \hspace{1cm} \mbox{and} \hspace{1cm}
 \nu \circ \mu \preceq {\cal X}_1
\]
Note that $x_0{\mu}x_1$ implies
${\downarrow}x_0 \subseteq {\mu}x_1$ and ${\uparrow}x_1 \subseteq x_0{\mu}$.
Since
$\mu \tensorimplysource {\cal X}_0
 = \{ (x_1,x_0) \mid x_1 \memberof X_1, x_0 \memberof X_0, {\mu}x_1 \subseteq {\downarrow}x_0 \}
 = \{ (x_1,x_0) \mid x_1 \memberof X_1, x_0 \memberof X_0, \deriveinv{(x_1)}{\mu} \preceq \looer{(x_0)}{{\cal X}_0} \}$
and
${\cal X}_1 \tensorimplytarget \mu
 = \{ (x_1,x_0) \mid x_1 \memberof X_1, x_0 \memberof X_0, x_0{\mu} \subseteq {\uparrow}x_1 \}
 = \{ (x_1,x_0) \mid x_1 \memberof X_1, x_0 \memberof X_0, \derivedir{(x_0)}{\mu} \preceq \upper{(x_1)}{{\cal X}_1} \}$,
negation is a kind of contrapositive of $\mu$
defined pointwise by
\[ \neg{\mu}
   \define \{ (x_1,x_0) \mid x_1 \memberof X_1, x_0 \memberof X_0, 
              {\mu}x_1 \subseteq {\downarrow}x_0 \;\mbox{and}\; x_0{\mu} \subseteq {\uparrow}x_1 \}. \]
Negations of some special relations are described in Table~\ref{negations}.
\begin{table}
\begin{center}
	$\begin{array}{|cc|r@{\;=\;}l|}
		\hline
		\multicolumn{2}{|c|}{\mbox{\bf relation}}
			& \multicolumn{2}{|c|}{\mbox{\bf negation}}
			\\
		\multicolumn{2}{|c|}{\term{{\cal X}_0}{\mu}{{\cal X}_1}}
			& \multicolumn{2}{|c|}{\term{{\cal X}_1}{{\neg}{\mu}}{{\cal X}_0}}
					\\ \hline\hline
		{\bf join} & \term{{\cal X}_0}{{\mu}{\vee}{\nu}}{{\cal X}_1}
			& {\neg}(\mu \vee \nu) & {\neg}{\mu} \wedge {\neg}{\nu}
				\\ \hline
		{\bf bottom} & \term{{\cal X}_0}{\bot}{{\cal X}_1}
			& {\neg}{\bot} & {\top}
				\\ \hline
		{\bf identity} & \term{{\cal X}}{{\cal X}}{{\cal X}}
			& \hspace{2.65cm}{\neg}{{\cal X}} & {\cal X}
				\\ \hline
		{\bf complement} & \term{{\cal X}}{\not\geq}{{\cal X}}
			& \multicolumn{2}{c|}{\begin{array}{r@{\;=\;}l}
				{\neg}{\not\geq} & \{ (x,x') \mid (\forall_{y \in X})\; x \leq_{\cal X} y \;\mbox{or}\; y \leq_{\cal X} x' \}
				\\
				& \{ (x,x') \mid X = {\uparrow}x \cup {\downarrow}x' \}
			  \end{array}}
				\\ \hline
		{\bf ideal} & \term{{\cal X}}{\phi}{{\bf 1}}
			& \multicolumn{2}{c|}{{\neg}{\phi} \;=\; \phi \tensorimplysource {\cal X} = \upper{\phi}{{\cal X}}}
				\\ \hline
		{\bf filter} & \term{{\bf 1}}{\psi}{{\cal X}}
			& \multicolumn{2}{c|}{{\neg}{\psi} \;=\; {\cal X} \tensorimplytarget \psi = \looer{\psi}{{\cal X}}}
				\\ \hline
	 \end{array}$
\end{center}
\caption{{\bf Negations of relations} \label{negations}}
\end{table}
When source and target orders are the induced orders
${\cal X}_0^{\mu}$ and ${\cal X}_0^{\mu}$,
negation is
\begin{equation}
	\neg{\mu} 
	\;=\;	\mu \tensorimplysource {\cal X}_0^{\mu} 
	\;=\;	{\cal X}_1^{\mu} \tensorimplytarget \mu
	\;=\;	\mu \tensorimplysource \mu \tensorimplytarget \mu. \label{negrel}
\end{equation}
Let us recall the initial discussion in Section~\ref{ConceptualCollectives}
about the order-theoretic constructions
induced by Wille's concept lattice construction.
In addition to the original relation ({\bf source}-{\bf target}),
and the two induced orders ({\bf source}-{\bf source} and {\bf target}-{\bf target}),
the only other definable relation 
with this data is a relation opposite to $\mu$ ({\bf target}-{\bf source}),
a relation closed w.r.t.\ the induced orders
\begin{equation}
	\term{{\cal X}_1^{\mu}}{\overline{\mu}}{{\cal X}_0^{\mu}},
\end{equation}
which is defined
either by
\begin{center}
	$\begin{array}{r@{\;\;\;\mbox{iff}\;\;\;}l}
		x_1{\overline{\mu}}x_0
			& \hat{\imath}_1(x_1) \leq_{\rm CL} \hat{\imath}_0(x_0) \\
			& {\mu}x_1 \subseteq {\downarrow}x_0 \\
			& (\forall x'_0 \memberof X_0)\; x'_0{\mu}x_1 \;\mbox{implies}\; x'_0 \leq_0^\mu x_0 \\
			& x_1{(\mu \tensorimplysource {\cal X}_0^{\mu})}x_0
	 \end{array}$
\end{center}
or by
\begin{center}
	$\begin{array}{r@{\;\;\;\mbox{iff}\;\;\;}l}
		x_1{\overline{\mu}}x_0
			& \hat{\imath}_1(x_1) \leq_{\rm CL} \hat{\imath}_0(x_0) \\
			& x_0{\mu} \subseteq {\uparrow}x_1 \\
			& (\forall x'_1 \memberof X_1)\; x_0{\mu}x'_1 \;\mbox{implies}\; x_1 \leq_1^\mu x'_1 \\
			& x_1{({\cal X}_1^{\mu} \tensorimplytarget \mu)}x_0 .
	 \end{array}$
\end{center}
\begin{Proposition}
	Negation $\term{{\cal X}_1^\mu}{\neg{\mu}}{{\cal X}_0^\mu}$ is the opposite relation induced by the concept lattice
	\[ \neg{\mu} = \overline{\mu}. \]
\end{Proposition}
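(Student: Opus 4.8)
The plan is to show that both $\neg\mu$ and $\overline\mu$ coincide with the single relational implication $\mu \tensorimplysource {\cal X}_0^\mu$ (equivalently ${\cal X}_1^\mu \tensorimplytarget \mu$), whence their equality is immediate. So the task splits into pinning down each side separately and then comparing formulas.

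For the left-hand side I would simply invoke the already-established identity~(\ref{negrel}): when the source and target orders are the induced orders ${\cal X}_0^\mu$ and ${\cal X}_1^\mu$, the defining meet $\neg\mu = (\mu \tensorimplysource {\cal X}_0^\mu) \wedge ({\cal X}_1^\mu \tensorimplytarget \mu)$ collapses, since its two conjuncts already agree, to $\neg\mu = \mu \tensorimplysource {\cal X}_0^\mu = {\cal X}_1^\mu \tensorimplytarget \mu$. No further work is needed here.

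For the right-hand side I would verify the two displayed ``iff''-chains that define $\overline\mu$, each of which terminates in exactly one of these two relational-implication forms. The substantive step is the first equivalence of each chain, unwinding the concept-order condition $\hat{\imath}_1(x_1) \leq_{\rm CL} \hat{\imath}_0(x_0)$. Recall that the extent of the attribute-generated concept $\hat{\imath}_1(x_1)$ is ${\mu}x_1 = \deriveinv{(x_1)}{\mu}$ while the extent of the entity-generated concept $\hat{\imath}_0(x_0)$ is $\closure{(x_0)}{\mu} = {\downarrow}x_0$, the principal order ideal of $x_0$ in ${\cal X}_0^\mu$; since the concept order is extent-inclusion, $\hat{\imath}_1(x_1) \leq_{\rm CL} \hat{\imath}_0(x_0)$ reads ${\mu}x_1 \subseteq {\downarrow}x_0$. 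Rephrasing this containment pointwise gives ``$x'_0{\mu}x_1$ implies $x'_0 \leq_0^\mu x_0$'', and by the definition of source implication (${({\mu}\tensorimplysource\nu)}x_0 = \{x_1 \mid {\mu}x_1 \subseteq {\nu}x_0\}$ with $\nu = {\cal X}_0^\mu$) this is precisely $x_1{(\mu \tensorimplysource {\cal X}_0^\mu)}x_0$. The second chain is the order-dual argument on intents: the intent of $\hat{\imath}_0(x_0)$ is $\derivedir{(x_0)}{\mu} = x_0{\mu}$ and the intent of $\hat{\imath}_1(x_1)$ is $\closure{(x_1)}{\mu} = {\uparrow}x_1$, so the same concept inequality reads $x_0{\mu} \subseteq {\uparrow}x_1$ and terminates in $x_1{({\cal X}_1^\mu \tensorimplytarget \mu)}x_0$. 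Thus $\overline\mu = \mu \tensorimplysource {\cal X}_0^\mu = {\cal X}_1^\mu \tensorimplytarget \mu$.

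Comparing the two sides then yields $\neg\mu = \overline\mu$. The only genuine obstacle is the bookkeeping in the two ``iff''-chains: correctly identifying the extents and intents of the generator concepts with the principal ideals ${\downarrow}x_0$ and filters ${\uparrow}x_1$ (which rests on join-density of $\hat{\imath}_0$ and meet-density of $\hat{\imath}_1$ established earlier) and matching the containment conditions to the source/target implication forms. Once these identifications are in place, the equality is a direct comparison of formulas.
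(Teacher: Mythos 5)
Your proposal is correct and matches the paper's own (implicit) justification: the paper proves this Proposition precisely by the preceding identity~(\ref{negrel}) giving $\neg{\mu} = \mu \tensorimplysource {\cal X}_0^{\mu} = {\cal X}_1^{\mu} \tensorimplytarget \mu$, together with the two displayed ``iff''-chains showing $\overline{\mu}$ equals those same implications. Your added verification of the chains (identifying the extent of $\hat{\imath}_0(x_0)$ with ${\downarrow}x_0$ and the intent of $\hat{\imath}_1(x_1)$ with ${\uparrow}x_1$ under the induced orders) is sound and simply makes explicit what the paper asserts.
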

The negation of the original planetary relationship,
which is the opposite relation induced by the concept lattice of the planetary context,
is displayed in Table~\ref{negrelplanets}.
\scriptsize
\begin{table}
\begin{center}	\begin{tabular}{|l|@{\hspace{1.5mm}}c@{\hspace{1.5mm}}c@{\hspace{1.5mm}}c@{\hspace{1.5mm}}c@{\hspace{1.5mm}}c@{\hspace{1.5mm}}c@{\hspace{1.5mm}}c@{\hspace{1.5mm}}c@{\hspace{1.5mm}}c@{\hspace{0.5mm}}|} \hline
		   & Me & V & E & Ma & J & S & U & N & P  \\ \hline
		ss &&&&&&&&& \\
		sm &&&&&&& $\times$ & $\times$ & \\
		sl &&&&& $\times$ & $\times$ &&& \\
		dn &&&&&&&&& \\
		df &&&&&&&&& \\
		my &&&&&&&&& \\
		mn & $\times$ & $\times$ &&&&&&& \\ \hline
	\end{tabular}
\end{center}
\caption{{\bf Negation relation for the planetary context} \label{negrelplanets}}
\end{table}
\normalsize
Intuitively,
the negation relation $\neg{\mu}$ specifies implicational information.
From one viewpoint,
a negation relationship
$x_1{ \neg{\mu} }x_0$ iff $x_1{(\mu \tensorimplysource {\cal X}_0^{\mu})}x_0$
exists between individual attribute $x_1$ and individual entity $x_0$
when any individual entities having individual attribute $x_1$ also have collective attribute $x_0$.
Since $x_1$ represents its collection of individual entities as a collective entity,
we can say that $x_1{ \neg{\mu} }x_0$ when collective entity $x_1$ has collective attribute $x_0$.
Arguing from the dual position,
a negation relationship
$x_1{ \neg{\mu} }x_0$ iff $x_1{({\cal X}_1^{\mu} \tensorimplytarget \mu)}x_0$
exists between individual attribute $x_1$ and individual entity $x_0$
when any individual attributes of individual entity $x_0$ are also attributes of collective entity $x_1$.
Since $x_0$ represents its collection of individual attributes as a collective attribute,
we can say that $x_1{ \neg{\mu} }x_0$ when $x_0$ is a collective attribute of collective entity $x_1$
(the same meaning as before).


\section{Contextual Fibration}

Now we are at a crucial point in our analysis of concept construction.
The resolution of the above problem requires a change of viewpoint --- a shift in our conceptual framework.
In order to make visible and explicit the hidden relationship of a context,
we must define contexts as follows.
A {\em formal context\/}\footnote{The complexity of this definition,
when compared with Wille's original set-theoretic definition,
is quite noticeable
--- but the main argument of this paper is that this complexity is unavoidable,
if we are committed to the philosophical principles that
(1) intensions should be represented explicitly, and
(2) conceptual structures are coherent units of thought
consisting of an extensional aspect and an intensional aspect
which determine each other
by means of the notion of commonality or sharing.}
${\cal X} = \quadruple{{\cal X}_0}{\mu_{01}}{\mu_{10}}{{\cal X}_1}$
is a pair of orders ${\cal X}_0 = \pair{X_0}{\leq_0}$ and ${\cal X}_1 = \pair{X_1}{\leq_1}$,
and a pair of oppositely directed closed relations
$\term{{\cal X}_0}{{\mu_{01}}}{{\cal X}_1}$
and
$\term{{\cal X}_1}{{\mu_{10}}}{{\cal X}_0}$
between them,
which satisfy the orthogonal constraints
$\mu_{01} \circ \mu_{10} \preceq {\cal X}_0$ and $\mu_{10} \circ \mu_{01} \preceq {\cal X}_1$.
The four components of a formal context can be arranged in a matrix
\[ {\cal X}
	=
	\left( \begin{array}{cc} {\cal X}_0 & \mu_{01} \\ \mu_{10} & {\cal X}_1 \end{array} \right). \]
Formal contexts in this sense are quite general.
Let $\pair{{\cal X}}{t}$ be any pair consisting of
an order ${\cal X} = \pair{X}{\leq_{\cal X}}$
and
a monotonic function $\morphism{{\cal X}}{t}{\overline{\bf 2}}$ from ${\cal X}$ to the binary order $\overline{\bf 2}$
defined by
$\overline{\bf 2} \define \pair{ \{ 0,1 \} }{\vdashv}$,
where $0 \vdashv 1$ and $1 \vdashv 0$.
This pair specifies a formal context
\[ {\cal X}^t = \left( \begin{array}{cc} {\cal X}_0 & \mu_{01} \\ \mu_{10} & {\cal X}_1 \end{array} \right) \]
called the {\em $t$-partition\/} of ${\cal X}$,
whose components are defined by
\begin{center}
	$\begin{array}{c@{\hspace{3mm}}c@{\hspace{3mm}}l}
		\term{{\cal X}_0}{{\leq_0}}{{\cal X}_0}
			& \define
				& t^{\rm -1}(0 \vdashv 0) = \{ (x'_0,x_0) \mid x'_0,x_0 \memberof X_0, x'_0 \leq_X x_0 \} \\
		\term{{\cal X}_1}{{\leq_1}}{{\cal X}_1}
			& \define
				& t^{\rm -1}(1 \vdashv 1) = \{ (x_1,x'_1) \mid x_1,x'_1 \memberof X_1, x_1 \leq_X x'_1 \} \\
		\term{{\cal X}_0}{\mu_{01}}{{\cal X}_1}
			& \define
				& t^{\rm -1}(0 \vdashv 1) = \{ (x_0,x_1) \mid x_0 \memberof X_0, x_1 \memberof X_1, x_0 \leq_X x_1 \} \\
		\term{{\cal X}_0}{\mu_{10}}{{\cal X}_1}
			& \define
				& t^{\rm -1}(1 \vdashv 0) = \{ (x_1,x_0) \mid x_1 \memberof X_1, x_0 \memberof X_0, x_1 \leq_X x_0 \}.
	 \end{array}$
\end{center}
The monotonic function $t$,
called a {\em tag\/} or {\em index\/} function,
indicates from which component order an element in the sum originates $t(x_0) = 0$ and $t(x_1) = 1$,
and functions as a partition or fibration $\{ X_0, X_1 \}$ of the underlying set $X = X_0 + X_1$.

Sum orders define contextual fibrations:
given any two orders ${\cal X}_0$ and ${\cal X}_1$,
the disjoint union order ${\cal X}_0 + {\cal X}_1$
and the linear sum order ${\cal X}_0 \oplus {\cal X}_1$ define the contexts
\begin{center}
\begin{tabular}{c@{\hspace{1cm}}c@{\hspace{1cm}}c}
	$\left( \begin{array}{cc} {\cal X}_0 & \bot \\ \bot & {\cal X}_1 \end{array} \right)$
	& and &
	$\left( \begin{array}{cc} {\cal X}_0 & \top \\ \bot & {\cal X}_1 \end{array} \right)$,
\end{tabular}
\end{center}
respectively.
Product orders define contextual fibrations:
given any order ${\cal X}$,
the binary product projection $\longmorphism{\product{\overline{\bf 2}}{{\cal X}}}{p_{\overline{2}}}{\overline{\bf 2}}$
and the Boolean product projection $\longmorphism{\product{{\bf 2}}{{\cal X}}}{p_{2}}{{\bf 2}}$
define the contexts
\begin{center}
\begin{tabular}{c@{\hspace{1cm}}c@{\hspace{1cm}}c}
	$\left( \begin{array}{cc} {\cal X} & {\cal X} \\ {\cal X} & {\cal X} \end{array} \right)$
	& and &
	$\left( \begin{array}{cc} {\cal X} & {\cal X} \\ \bot & {\cal X} \end{array} \right)$,
\end{tabular}
\end{center}
respectively,
with off-diagonal entries being the identity relation.

Of special significance in the analysis of concept construction is the {\em contextual closure\/} of a binary relation $\term{X_0}{\mu}{X_1}$.
This is the formal context
\[ {\cal X}^{\mu} 
   = \left( \begin{array}{cc} {\cal X}_0^{\mu} & \mu \\ {\neg}{\mu} & {\cal X}_1^{\mu} \end{array} \right)
   = \left( \begin{array}{cc} \mu \tensorimplytarget \mu & \mu \\ \mu \tensorimplysource \mu \tensorimplytarget \mu & \mu \tensorimplysource \mu \end{array} \right)
 \]
consisting of the induced orders
${\cal X}_0^{\mu} \define \mu \tensorimplytarget \mu$
and
${\cal X}_1^{\mu} \define \mu \tensorimplysource \mu$,
the given relation $\mu$ which is closed w.r.t.\ these orders,
and the negation relation ${\neg}{\mu}$.
The contextual closure is the largest formal context containing $\mu$.
Contextual closure transforms the combinatorial object $\triple{X_0}{X_1}{\mu}$,
the original notion of formal context of Wille,
into the algebraic and potentially abstract object ${\cal X}^\mu$
--- thereby tremendously increasing the power, flexibility and expressibility
of the basic mathematical object under study.
The central result of this paper called the Equivalence Theorem,
demonstrates the equivalence of concept lattice construction and Dedekind-MacNeille completion by using contextual closure.

Formal contexts can be compared.
For any two contexts
\[
	{\cal Y} = \left( \begin{array}{cc} {\cal Y}_0 & \nu_{01} \\ \nu_{10} & {\cal Y}_1 \end{array} \right)
	\hspace{5mm}\mbox{and}\hspace{5mm}
	{\cal X} = \left( \begin{array}{cc} {\cal X}_0 & \mu_{01} \\ \mu_{10} & {\cal X}_1 \end{array} \right),
\]
a {\em map of formal contexts\/} $\Morphism{{\cal Y}}{f}{{\cal X}}$
from context ${\cal Y}$ to context ${\cal X}$
is a pair $f \define \pair{f_0}{f_1}$ of monotonic functions between component orders
$\morphism{{\cal Y}_0}{f_0}{{\cal X}_0}$ and $\morphism{{\cal Y}_1}{f_1}{{\cal X}_1}$,
which satisfy both of the symbolic conditions
\begin{center}
\begin{tabular}{c@{\hspace{3cm}}c}
	$\begin{array}{r@{\hspace{0.1mm}}c@{\hspace{2mm}}c@{\hspace{2mm}}c@{\hspace{0.1mm}}l}
			& {\cal Y}_0
				& \stackrel{\nu_{01}}{\rightharpoondown}
					& {\cal Y}_1
						& \\
		\mbox{\footnotesize$f_0^{\triangleright}$\normalsize}
			& \downarrow
				& \mbox{\raisebox{1ex}{$\succeq$}}
					& \downarrow
						& \mbox{\footnotesize$f_1^{\triangleright}$\normalsize} \\
			& {\cal X}_0
				& \stackrel{\mu_{10}}{\rightharpoondown}
					& {\cal X}_1
						&
	 \end{array}$
	&
	$\begin{array}{r@{\hspace{0.1mm}}c@{\hspace{2mm}}c@{\hspace{2mm}}c@{\hspace{0.1mm}}l}
			& {\cal Y}_0
				& \stackrel{\nu_{10}}{\leftharpoondown}
					& {\cal Y}_1
						& \\
		\mbox{\footnotesize$f_0^{\triangleright}$\normalsize}
			& \downarrow
				& \mbox{\raisebox{1ex}{$\preceq$}}
					& \downarrow
						& \mbox{\footnotesize$f_1^{\triangleright}$\normalsize} \\
			& {\cal X}_0
				& \stackrel{\mu_{10}}{\leftharpoondown}
					& {\cal X}_1
						&
	 \end{array}$
\end{tabular}
\end{center}
expressed formally as two sets of equvialent conditions
\begin{center}
	\begin{tabular}{c@{\hspace{3cm}}c}
	$\begin{array}{r@{\;\preceq\;}l}
		f_0^{\triangleleft} \circ \nu_{01} \circ f_1^{\triangleright} & \mu_{01} \\
		\nu_{01} \circ f_1^{\triangleright} & f_0^{\triangleright} \circ \mu_{01} \\
		\nu_{01} & f_0^{\triangleright} \circ \mu_{01} \circ f_1^{\triangleleft}
	 \end{array}$
	&
	$\begin{array}{r@{\;\preceq\;}l}
		f_1^{\triangleleft} \circ \nu_{10} \circ f_0^{\triangleright} & \mu_{10} \\
		\nu_{10} \circ f_0^{\triangleright} & f_1^{\triangleright} \circ \mu_{10} \\
		\nu_{10} & f_1^{\triangleright} \circ \mu_{10} \circ f_0^{\triangleleft}
	 \end{array}$
	\end{tabular}
\end{center}
Maps of formal contexts preserve the {\em has\/} relationships:
if $y_0$ has attribute $y_1$ w.r.t.\ the relationship $\term{{\cal Y}_0}{\nu_{01}}{{\cal Y}_1}$,
symbolically $y_0{\nu_{01}}y_1$,
then $f_0(x_0)$ has attribute $f_1(x_1)$ w.r.t.\ the relationship $\term{{\cal X}_0}{\mu_{01}}{{\cal X}_1}$,
symbolically $f_0(y_0){\mu_{01}}f_1(y_1)$.
Similarly for the opposite direction.

Formal contexts and their maps form the category
${\bf Cxt}$.
There is a fully-faithful embedding functor
called {\em inclusion-of-identity-contexts\/}
\[ {\bf Ord} \stackrel{\rm Inc}{\longrightarrow} {\bf Cxt} \]
which maps orders ${\cal X}$ to identity contexts
\[ {\rm Inc}({\cal X})
   \hspace{5mm} \define \hspace{5mm}
   \left( \begin{array}{cc} {\cal X} & {\cal X} \\ {\cal X} & {\cal X} \end{array} \right) \]
and maps monotonic functions $\morphism{{\cal Y}}{f}{{\cal X}}$
to their doubles
${\rm Inc}({\cal Y}) \stackrel{{\rm Inc}(f)}{\Longrightarrow} {\rm Inc}({\cal X})$,
defined as the pair ${\rm Inc}(f) \define \pair{f}{f}$.

\begin{enumerate}
	\item {\bf Opposite:} The opposite involution,
		taking contexts
		\[ {\cal X} = \left( \begin{array}{cc} {\cal X}_0 & \mu_{01} \\ \mu_{10} & {\cal X}_1 \end{array} \right) \]
		to their opposites
		\[ {\cal X}^{\rm op}
			 = \left( \begin{array}{cc} {\cal X}_1^{\rm op} & \mu_{10}^{\rm op} \\ \mu_{01}^{\rm op} & {\cal X}_0^{\rm op} \end{array} \right), \]
		interchanges individual attributes with individual entities
		and
		interchanges collective attributes with collective entities.
	\item {\bf Terminal:} There is a very simple formal context
		\[ {\bf 1} = \left( \begin{array}{cc} {\bf 1} & {\bf 1} \\ {\bf 1} & {\bf 1} \end{array} \right) \]
		having exactly one entity and one related attribute.
		It consists of two copies of the unit order
		${\cal X}_0 = {\bf 1} = {\cal X}_1$,
		plus two copies of the identity relation
		$\mu_{01} = {\bf 1} = \mu_{10}$.
		This context is terminal,
		since from any formal context ${\cal X}$
		there is a unique map of contexts $\Morphism{{\cal X}}{\top_{\cal X}}{{\bf 1}}$
		to the {\em terminal context\/} {\bf 1},
		consisting of the pair of unique monotonic functions to the unit order
		$\top_{\cal X} = \pair{\top_{{\cal X}_0}}{\top_{{\cal X}_1}}$.
	\item {\bf Inverse image:}
		For any contexts
		\[ {\cal X} = \left( \begin{array}{cc} {\cal X}_0 & \mu_{01} \\ \mu_{10} & {\cal X}_1 \end{array} \right) \]
		and any pair $f = \pair{f_0}{f_1}$ of monotonic functions
		$\morphism{{\cal Y}_0}{f_0}{{\cal X}_0}$ and $\morphism{{\cal Y}_1}{f_1}{{\cal X}_1}$,
		there is an inverse image context
		\[ f^{\rm -1}({\cal X})
		 = \left( \begin{array}{cc} {\cal Y}_0 & f_0^\triangleright \circ \mu_{01} \circ f_1^\triangleleft \\
		                            f_1^\triangleright \circ \mu_{10} \circ f_0^\triangleleft & {\cal Y}_1 \end{array} \right), \]
		with canonical map of formal contexts
		$\Morphism{f^{\rm -1}({\cal X})}{f}{{\cal X}}$.
	\item {\bf Product:} Given any pair of formal contexts
		\[
			{\cal Y} = \left( \begin{array}{cc} {\cal Y}_0 & \nu_{01} \\ \nu_{10} & {\cal Y}_1 \end{array} \right)
				\hspace{5mm}\mbox{and}\hspace{5mm}
			{\cal X} = \left( \begin{array}{cc} {\cal X}_0 & \mu_{01} \\ \mu_{10} & {\cal X}_1 \end{array} \right) 
		\]
		the context
		\[ \product{{\cal Y}}{{\cal X}}
			=
		   \left( \begin{array}{cc} \product{{\cal Y}_0}{{\cal X}_0} & \product{{\nu}_{01}}{{\mu}_{01}} \\
		                            \product{{\nu}_{10}}{{\mu}_{10}} & \product{{\cal Y}_1}{{\cal X}_1} \end{array} \right) \]
		is the product context.
	\item {\bf Meet:} Given any pair of formal contexts
		\[
			{\cal X}' = \left( \begin{array}{cc} {\cal X}'_0 & \mu'_{01} \\ \mu'_{10} & {\cal X}'_1 \end{array} \right)
				\hspace{5mm}\mbox{and}\hspace{5mm}
			{\cal X} = \left( \begin{array}{cc} {\cal X}_0 & \mu_{01} \\ \mu_{10} & {\cal X}_1 \end{array} \right) 
		\]
		over the same pair of underlying sets $X_0$ and $X_1$,
		the context
		\[ {\cal X}' \wedge {\cal X}
			=
		   \left( \begin{array}{cc} {\cal X}'_0 \wedge {\cal X}_0 & {\mu}'_{01} \wedge {\mu}_{01} \\
		                            {\mu}'_{10} \wedge {\mu}_{10} & {\cal X}'_1 \wedge {\cal X}_1 \end{array} \right) \]
		is the meet context over the same pair.
\end{enumerate}

\begin{Proposition}
	The category of formal contexts ${\bf Cxt}$ is complete;
	limits exist for all diagrams.
	It is also involutionary and fibered.
\end{Proposition}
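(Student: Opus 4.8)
The plan is to exhibit ${\bf Cxt}$ as the total category of a fibration over $\product{{\bf Ord}}{{\bf Ord}}$ and to harvest all three assertions from this picture together with the completeness of ${\bf Ord}$. Define $\morphism{{\bf Cxt}}{p}{\product{{\bf Ord}}{{\bf Ord}}}$ by sending a context ${\cal X} = \quadruple{{\cal X}_0}{\mu_{01}}{\mu_{10}}{{\cal X}_1}$ to the pair of component orders $\pair{{\cal X}_0}{{\cal X}_1}$ and a map $\pair{f_0}{f_1}$ to itself. The fibre over a fixed pair $\pair{{\cal X}_0}{{\cal X}_1}$ is the poset of admissible relation-pairs $\pair{\mu_{01}}{\mu_{10}}$ ordered componentwise by $\preceq$; it is a complete lattice, its meets being relational intersection as in the {\em meet context\/} construction of the preceding list, and the orthogonal constraints survive arbitrary meets since $\bigwedge_k\mu_{01}^k\circ\bigwedge_k\mu_{10}^k\preceq\mu_{01}^k\circ\mu_{10}^k\preceq{\cal X}_0$ for every $k$ (and dually for ${\cal X}_1$). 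Thus the fibres are complete, the base is complete, and what remains is to organise the reindexing.

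First I would verify that $p$ is a fibration, with the {\em inverse image context\/} $f^{\rm -1}({\cal X})$ serving as the cartesian lift of a base morphism $\pair{f_0}{f_1}$ along ${\cal X}$. Its relations $f_0^{\triangleright}\circ\mu_{01}\circ f_1^{\triangleleft}$ and $f_1^{\triangleright}\circ\mu_{10}\circ f_0^{\triangleleft}$ are, by the third form of the map inequalities, the largest relation-pair rendering $f$ a context map; consequently, for any ${\cal W}$ whose $p$-image factors as $\pair{f_0}{f_1}\circ\pair{w_0}{w_1}$, a context map $\Morphism{{\cal W}}{g}{{\cal X}}$ lying over this composite satisfies the map inequalities for ${\cal X}$ exactly when $\pair{w_0}{w_1}$ satisfies them for $f^{\rm -1}({\cal X})$. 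Using the functoriality $(g\circ f)^{\triangleright}=f^{\triangleright}\circ g^{\triangleright}$ and $(g\circ f)^{\triangleleft}=g^{\triangleleft}\circ f^{\triangleleft}$ of the graph operators, this yields the required unique factorisation, so $f^{\rm -1}({\cal X})$ is cartesian and $p$ is a fibration.

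With the fibration in hand I would assemble limits in the standard fibred manner. Given $\longmorphism{I}{D}{{\bf Cxt}}$ with $D(i)=\quadruple{{\cal X}_0^i}{\mu_{01}^i}{\mu_{10}^i}{{\cal X}_1^i}$, form the base limits ${\cal L}_0=\lim_i{\cal X}_0^i$ and ${\cal L}_1=\lim_i{\cal X}_1^i$ in ${\bf Ord}$ with projections $\pair{\pi_0^i}{\pi_1^i}$, and in the fibre over $\pair{{\cal L}_0}{{\cal L}_1}$ take the meet of the reindexed relations
\[ \lambda_{01}\define\bigwedge_i(\pi_0^i)^{\triangleright}\circ\mu_{01}^i\circ(\pi_1^i)^{\triangleleft}, \qquad \lambda_{10}\define\bigwedge_i(\pi_1^i)^{\triangleright}\circ\mu_{10}^i\circ(\pi_0^i)^{\triangleleft}. \]
Each summand is closed w.r.t. ${\cal L}_0,{\cal L}_1$ because ${\cal L}_0\circ(\pi_0^i)^{\triangleright}\preceq(\pi_0^i)^{\triangleright}\circ{\cal X}_0^i$ (the relational form of monotonicity of $\pi_0^i$) combines with the source/target closure of $\mu_{01}^i$, and closure passes to the meet. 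The cone projections $\pair{\pi_0^i}{\pi_1^i}$ are context maps precisely because $\lambda_{01}\preceq(\pi_0^i)^{\triangleright}\circ\mu_{01}^i\circ(\pi_1^i)^{\triangleleft}$ is one of the equivalent map inequalities; and the universal property transfers from the componentwise limits in ${\bf Ord}$ together with the largest-relation property of the cartesian lift, exactly as equalizers would be built as inverse images of inclusions. This construction specialises to the terminal context (empty $I$), the product context, and the meet context already listed.

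Finally, the {\em opposite\/} construction supplies the involution: $(-)^{\rm op}$ is a covariant endofunctor carrying $\pair{f_0}{f_1}$ to $\pair{f_1^{\rm op}}{f_0^{\rm op}}$, and since opposing orders and relations and transposing the two off-diagonal slots are each involutive, $((-)^{\rm op})^{\rm op}$ is the identity, with the two map inequalities interchanged by the self-duality of the context axioms under $0\leftrightarrow1$, $\mu_{01}\leftrightarrow\mu_{10}$. The main obstacle throughout is the orthogonality verification for $\lambda_{01},\lambda_{10}$, which leans on the composition--implication adjointness emphasised earlier, namely the counit $(\pi_1^i)^{\triangleleft}\circ(\pi_1^i)^{\triangleright}\preceq{\cal X}_1^i$ of the graph operators: for each $i$ one obtains
\[ \lambda_{01}\circ\lambda_{10}\;\preceq\;(\pi_0^i)^{\triangleright}\circ(\mu_{01}^i\circ{\cal X}_1^i\circ\mu_{10}^i)\circ(\pi_0^i)^{\triangleleft}\;\preceq\;(\pi_0^i)^{\triangleright}\circ{\cal X}_0^i\circ(\pi_0^i)^{\triangleleft}, \]
using the closure of $\mu_{01}^i$ and the orthogonality of $D(i)$, and since $\bigwedge_i(\pi_0^i)^{\triangleright}\circ{\cal X}_0^i\circ(\pi_0^i)^{\triangleleft}={\cal L}_0$ (the limit order is the meet of the reindexed orders) this delivers $\lambda_{01}\circ\lambda_{10}\preceq{\cal L}_0$, and dually $\lambda_{10}\circ\lambda_{01}\preceq{\cal L}_1$.
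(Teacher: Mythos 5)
Your overall strategy --- assembling the paper's terminal, meet, inverse-image and opposite constructions into a fibred-limits argument --- is the natural one (the paper states this Proposition with no written proof, letting the preceding list of constructions stand for it), but two of your supporting claims are false, and one of them is load-bearing. The fibres are not complete lattices, and, more seriously, $\morphism{{\bf Cxt}}{p}{\product{{\bf Ord}}{{\bf Ord}}}$ is not a fibration: the inverse-image pair $\pair{f_0^{\triangleright} \circ \mu_{01} \circ f_1^{\triangleleft}}{f_1^{\triangleright} \circ \mu_{10} \circ f_0^{\triangleleft}}$ can violate the orthogonal constraints, and then it is not an object of ${\bf Cxt}$ at all. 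Concretely, take ${\cal Y}_0 = {\cal Y}_1$ a two-element antichain, $f_0,f_1$ the unique maps to ${\bf 1}$, and pull back the terminal context: both inverse-image relations are the full relation $\top$, and $\top \circ \top = \top \not\preceq {\cal Y}_0$. This is not merely a failure of your chosen lift --- no cartesian lift of $\pair{f_0}{f_1}$ at ${\bf 1}$ exists at all: the contexts over $\pair{{\cal Y}_0}{{\cal Y}_1}$ with relation pairs $\pair{\top}{\bot}$ and $\pair{\bot}{\top}$ are admissible and each maps to ${\bf 1}$ over $\pair{f_0}{f_1}$, and factoring both through a single ${\cal W}$ over the identity forces $\omega_{01} = \top$ and $\omega_{10} = \top$, contradicting orthogonality. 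The same two contexts have no admissible upper bound, so the fibre has no join of this pair, hence no top, and is not even a lattice (it has only nonempty meets and a bottom). This defect is inherited from the paper's own ``inverse image'' item, which carries the same gap, but your proof makes it carry both the fibration claim and the transfer of the universal property.

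The good news is that your explicit limit construction survives once the fibration scaffolding is removed, because it never needs the individual reindexed pairs to be admissible --- only their meet. Your displayed computation shows exactly $\lambda_{01} \circ \lambda_{10} \preceq (\pi_0^i)^{\triangleright} \circ {\cal X}_0^i \circ (\pi_0^i)^{\triangleleft}$ for every $i$, and the meet of these bounds is ${\cal L}_0$ because the concretely computed limit cones in ${\bf Ord}$ are jointly order-reflecting; closedness passes to meets; and the empty diagram lands over $\pair{{\bf 1}}{{\bf 1}}$, where the empty meet is the pair of full relations, which is admissible and is the terminal context. For the universal property, replace the appeal to cartesianness by a direct computation: for a closed relation $\rho$ one has $u_0^{\triangleright} \circ \rho \circ u_1^{\triangleleft} = \{ \pair{w_0}{w_1} \mid \pair{u_0(w_0)}{u_1(w_1)} \memberof \rho \}$, so inverse image along functions commutes with arbitrary meets, and the cone inequalities $\omega_{01} \preceq (g_0^i)^{\triangleright} \circ \mu_{01}^i \circ (g_1^i)^{\triangleleft} = u_0^{\triangleright} \circ ((\pi_0^i)^{\triangleright} \circ \mu_{01}^i \circ (\pi_1^i)^{\triangleleft}) \circ u_1^{\triangleleft}$ meet to give $\omega_{01} \preceq u_0^{\triangleright} \circ \lambda_{01} \circ u_1^{\triangleleft}$. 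With these repairs completeness is proved, and your treatment of the involution is correct as written. The assertion that ${\bf Cxt}$ is ``fibered,'' however, must be given a different meaning than the one you chose: the sense the paper can sustain is the one made precise in the following section by ${\bf Cxt} \cong {\bf Ord}/\overline{\bf 2}$, whose domain projection to ${\bf Ord}$ is a genuine fibration (this is the paper's tag/partition language); your own counterexample shows that fibering over pairs of component orders is impossible.
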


Let us take stock of our current situation.
In order to define derivation in a coherent fashion in concept construction,
closely following the philosophy that a concept consists of an extent and an intent that determine each other,
and respecting any and all relationships that are actually present,
we have been forced to change our starting framework
--- our notion of a formal context.
It seems appropriate that we should start over in our analysis of concept construction.
In the definition of a concept \cite{Wille}
the extent and the intent determine each other.
This means that in concept construction we must end where we started
--- since we have ended with a pair of order ideals,
instead of starting with just a single order ideal of entities $\phi \in \powerof{{\cal X}_0^{\rm op}}$
we should have started with a pair $(\phi_0,\phi_1)$,
an order ideal of individual entities $\phi_0 \in \powerof{{\cal X}_0^{\rm op}}$
and an order ideal of collective entities $\phi_1 \in \powerof{{\cal X}_1^{\rm op}}$,
subject to the ideal assertions
${\mu_{01}} \circ \phi_1 \preceq \phi_0$
and
${\mu_{10}} \circ \phi_0 \preceq \phi_1$.
A formal concept will consist of a quadruple
$\pair{[\phi_0,\phi_1]}{(\psi_0,\psi_1)}$,
a pair of extent-intent pairs
with extent and intent described by
\begin{center}
	$\begin{array}{r@{\hspace{2mm}}lp{9cm}}
		{\bf extent}
			& \left\{
				\begin{array}{ll}
					{\bf individual}
						& \phi_0 \memberof \powerof{{\cal X}_0^{\rm op}} \\
					{\bf collective}
						& \phi_1 \memberof \powerof{{\cal X}_1^{\rm op}}
				\end{array}
			 \right\}
				& satisfying ideal assertions
			  $\left\{
				\begin{array}{l}
					\mu_{01} \circ \phi_1 \;\preceq\; \phi_0 \\
					\mu_{10} \circ \phi_0 \;\preceq\; \phi_1
				   \end{array}
			  \right.$, \\ \\
		{\bf intent}
			& \left\{
				\begin{array}{ll}
					{\bf collective}
						& \psi_0 \memberof \powerof{{\cal X}_0} \\
					{\bf individual}
						& \psi_1 \memberof \powerof{{\cal X}_1}
				\end{array}
			 \right\}
				& satisfying filter assertions
			  $\left\{
				\begin{array}{l}
					\psi_0 \circ \mu_{01} \;\preceq\; \psi_1 \\
					\psi_1 \circ \mu_{10} \;\preceq\; \psi_0
				   \end{array}
			   \right.$.
	 \end{array}$
\end{center}
The direct and inverse phases of derivation each consist of four operations.
The component operators for derivation data flow are described as follows.
\begin{itemize}
	\item {\bf direct derivation}
			$\left( \begin{array}{cc} \upper{(\;)}{{\cal X}_0} & \derivedir{(\;)}{\mu_{01}} \\ \derivedir{(\;)}{\mu_{10}} & \upper{(\;)}{{\cal X}_1} \end{array} \right)$
			$\left\{\mbox{\begin{tabular}{c@{\hspace{2mm}}l}
				$\upper{(\;)}{{\cal X}_0}$
					& collective intent of individual extent \\
				$\derivedir{(\;)}{\mu_{10}}$
					& collective intent of collective extent \\
				$\derivedir{(\;)}{\mu_{01}}$
					& individual intent of individual extent \\
				$\upper{(\;)}{{\cal X}_1}$
					& individual intent of collective extent
			\end{tabular}}\right.$
	\item {\bf inverse derivation}
			$\left( \begin{array}{cc} \looer{(\;)}{{\cal X}_0} & \deriveinv{(\;)}{\mu_{01}} \\ \deriveinv{(\;)}{\mu_{10}} & \looer{(\;)}{{\cal X}_1} \end{array} \right)$
			$\left\{\mbox{\begin{tabular}{c@{\hspace{2mm}}l}
				$\looer{(\;)}{{\cal X}_0}$
					& individual extent of collective intent \\
				$\deriveinv{(\;)}{\mu_{01}}$
					& individual extent of individual intent \\
				$\deriveinv{(\;)}{\mu_{10}}$
					& collective extent of collective intent \\
				$\looer{(\;)}{{\cal X}_1}$
					& collective extent of individual intent.
			\end{tabular}}\right.$
\end{itemize}
The data flow in the two phases of concept construction is illustrated in Figure~\ref{dataflow}.
\begin{figure}
\begin{center}
	\begin{picture}(180,80)(-5,-40)
		\put(2,32){\makebox(0,0){\small$\powerof{{\cal X}_0^{\rm op}}$\normalsize}}
		\put(2,-28){\makebox(0,0){\small$\powerof{{\cal X}_1^{\rm op}}$\normalsize}}
		\put(82,32){\makebox(0,0){\small${(\powerof{{\cal X}_0})}^{\rm op}$\normalsize}}
		\put(82,-28){\makebox(0,0){\small${(\powerof{{\cal X}_1})}^{\rm op}$\normalsize}}
		\put(162,32){\makebox(0,0){\small$\powerof{{\cal X}_0^{\rm op}}$\normalsize}}
		\put(162,-28){\makebox(0,0){\small$\powerof{{\cal X}_1^{\rm op}}$\normalsize}}
		\put(45,10){\makebox(0,0){\footnotesize$\derivedir{(\,)}{\mu_{01}}$\normalsize}}
		\put(45,-10){\makebox(0,0){\footnotesize$\derivedir{(\,)}{\mu_{10}}$\normalsize}}
		\put(123,10){\makebox(0,0){\footnotesize$\deriveinv{(\,)}{\mu_{10}}$\normalsize}}
		\put(123,-10){\makebox(0,0){\footnotesize$\deriveinv{(\,)}{\mu_{01}}$\normalsize}}
		\put(40,40){\makebox(0,0){\footnotesize$\upper{(\,)}{{\cal X}_0}$\normalsize}}
		\put(40,-40){\makebox(0,0){\footnotesize$\upper{(\,)}{{\cal X}_1}$\normalsize}}
		\put(124,40){\makebox(0,0){\footnotesize$\looer{(\,)}{{\cal X}_0}$\normalsize}}
		\put(124,-40){\makebox(0,0){\footnotesize$\looer{(\,)}{{\cal X}_1}$\normalsize}}
		\put(-10,0){\makebox(0,0){\footnotesize$\idealexists{\mu_{01}}{}$\normalsize}}
		\put(68,0){\makebox(0,0){\footnotesize${\filterforall{\mu_{01}}{}}^{\rm op}$\normalsize}}
		\put(95,0){\makebox(0,0){\footnotesize${\filterexists{\mu_{01}}{}}^{\rm op}$\normalsize}}
		\put(172,0){\makebox(0,0){\footnotesize$\idealforall{\mu_{01}}{}$\normalsize}}
		\put(0,-20){\vector(0,1){40}}
		\put(83,20){\vector(0,-1){40}}
		\put(77,-20){\vector(0,1){40}}
		\put(160,20){\vector(0,-1){40}}
		\put(15,30){\vector(1,0){50}}
		\put(15,-30){\vector(1,0){50}}
		\put(95,30){\vector(1,0){50}}
		\put(95,-30){\vector(1,0){50}}
		\put(13,21){\vector(4,-3){55}}
		\put(13,-21){\vector(4,3){55}}
		\put(93,-21){\vector(4,3){55}}
		\put(93,21){\vector(4,-3){55}}
		\put(-23,-40){\dashbox{1}(46,80){}}
		\put(56,-40){\dashbox{1}(50,80){}}
		\put(137,-40){\dashbox{1}(46,80){}}
	\end{picture}
\end{center}
\caption{{\bf Data flow in derivation} \label{dataflow}}
\end{figure}
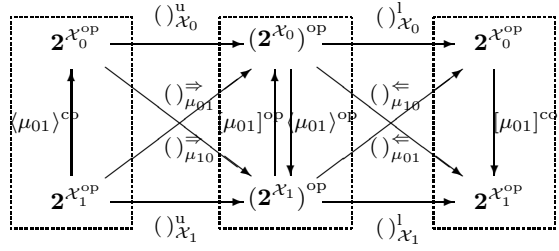
The requirement that conceptual extent and conceptual intent determine each other
is expressed by the constraining definitions
\begin{center}
	\begin{tabular}{c@{\hspace{1cm}}c}
		$\begin{array}{r@{\hspace{2mm}}l}
			{\bf extent}
				& \left\{
					\begin{array}{r@{\;\define\;}l}
						\phi_0 & \looer{(\psi_0)}{{\cal X}_0} \wedge \deriveinv{(\psi_1)}{\mu_{01}} \\
						\phi_1 & \deriveinv{(\psi_0)}{\mu_{10}} \wedge \looer{(\psi_1)}{{\cal X}_1}
					 \end{array}
				  \right.
		 \end{array}$
		&
		$\begin{array}{r@{\hspace{2mm}}l}
			{\bf intent}
				& \left\{
					\begin{array}{r@{\;\define\;}l}
						\psi_0 & \upper{(\phi_0)}{{\cal X}_0} \wedge \derivedir{(\phi_1)}{\mu_{10}} \\
						\psi_1 & \derivedir{(\phi_0)}{\mu_{01}} \wedge \upper{(\phi_1)}{{\cal X}_1}.
					 \end{array}
				  \right.
		 \end{array}$
	\end{tabular}
\end{center}


\section{Contextual Summation}

A coherent approach to concept construction
is the definition of an appropriate notion of a sum of a formal context
\[ {\cal X}
	=
	\left( \begin{array}{cc} {\cal X}_0 & \mu_{01} \\ \mu_{10} & {\cal X}_1 \end{array} \right). \]
The formal context ${\cal X}$
(pair of orthogonal relations
 $\term{{\cal X}_0}{{\mu_{01}}}{{\cal X}_1}$
 and
 $\term{{\cal X}_1}{{\mu_{10}}}{{\cal X}_0}$)
specifies external constraints
between the component orders ${\cal X}_0$ and ${\cal X}_1$ in two senses:
either a source constraint or a target constraint.
Two relations
$\term{{\cal X}_0}{\rho_0}{{\cal Y}}$ and $\term{{\cal X}_1}{\rho_1}{{\cal Y}}$
from the component source orders ${\cal X}_0$ and ${\cal X}_1$
to a common target order ${\cal Y}$
satisfy the external source constraints specified by the formal context when
\begin{equation}
	{\mu}_{01} \circ \rho_1 \preceq \rho_0
		\hspace{1cm} \mbox{and} \hspace{1cm}
	{\mu}_{10} \circ \rho_0 \preceq \rho_1. \label{source}
\end{equation}
When ${\cal Y} = {\bf 1}$ the two relations are order ideals
$\rho_0 \memberof \powerof{{\cal X}_0^{\rm op}}$
and
$\rho_1 \memberof \powerof{{\cal X}_1^{\rm op}}$,
which satisfy the ideal assertions~\ref{source}.
Two relations
$\term{{\cal W}}{\sigma_0}{{\cal X}_0}$ and $\term{{\cal W}}{\sigma_1}{{\cal X}_1}$
from a common source order ${\cal W}$
to the component target orders ${\cal X}_0$ and ${\cal X}_1$
satisfy the external target constraint specified by the formal context when
\begin{equation}
	\sigma_0 \circ {\mu}_{01} \preceq \sigma_1
		\hspace{1cm} \mbox{and} \hspace{1cm}
	\sigma_1 \circ {\mu}_{10} \preceq \sigma_0. \label{target}
\end{equation}
When ${\cal W} = {\bf 1}$ the two relations are order filters
$\sigma_0 \memberof \powerof{{\cal X}_0}$
and
$\sigma_1 \memberof \powerof{{\cal X}_1}$,
which satisfy the filter assertions~\ref{target}.

Any formal context ${\cal X}$,
which specifies such collections of external constraints,
can be internalized or centralized
as a {\em sum\/} order ${\oplus}{\cal X} = \pair{X}{\leq_X}$
consisting of the disjoint union of elements $X = X_0 {+} X_1$
with order relation $\leq_X$ defined by
\begin{center}
	\begin{tabular}{r@{\hspace{3mm}}c@{\hspace{3mm}}l}
		$x'_0 \leq_X x_0$    & iff     & $x'_0 \leq_0 x_0$	\\
		$x_0 \leq_X x_1$     & iff     & $x_0{\mu_{01}}x_1$	\\
		$x_1 \leq_X x_0$     & iff     & $x_1{\mu_{10}}x_0$	\\
		$x_1 \leq_X x'_1$    & iff     & $x_1 \leq_1 x'_1$
	\end{tabular}
\end{center}
for all elements $x_0 \in X_0$ and $x_1 \in X_1$.
The coproduct injections for the underlying disjoint union are monotonic functions
${\cal X}_0 \stackrel{i_0}{\longrightarrow} {\oplus}{\cal X}
            \stackrel{i_1}{\longleftarrow}  {\cal X}_1$,
which satisfy the defining conditions
\begin{center}
	\begin{tabular}{rl}
		$\left. \begin{array}{r@{\;=\;}lcr@{\;=\;}l}
			i_0^{\triangleright} \circ i_0^{\triangleleft} & {\cal X}_0 && i_0^{\triangleright} \circ i_1^{\triangleleft} & {\mu}_{01} \\
			i_1^{\triangleright} \circ i_0^{\triangleleft} & {\mu}_{10} && i_1^{\triangleright} \circ i_1^{\triangleleft} & {\cal X}_1 \\
		 \end{array} \right\}$
		& ``suborder disjointness equations''
		\\
		$\left. \begin{array}{r@{\;=\;}l}
			(i_0^{\triangleleft} \circ i_0^{\triangleright})
			 \vee
			(i_1^{\triangleleft} \circ i_1^{\triangleright})
				& {\oplus}{\cal X}
		 \end{array} \right\}$
		& ``covering equation''
	\end{tabular}
\end{center}
The sum of the terminal context
\[ {\bf 1} = \left( \begin{array}{cc} {\bf 1} & {\bf 1} \\ {\bf 1} & {\bf 1} \end{array} \right) \]
is the binary order $\overline{\bf 2} = {\oplus}{\bf 1}$ .

Given any pair of relations
${\cal X}_0 \stackrel{\rho_0}{\rightharpoondown} {\cal Y} \stackrel{\rho_1}{\leftharpoondown} {\cal X}_1$
which satisfy the external source contraints
${\mu}_{01} \circ \rho_1 \preceq \rho_0$ and ${\mu}_{10} \circ \rho_0 \preceq \rho_1$
specified by the formal context,
there is a unique ``mediating'' relation
${\oplus}{\cal X} \stackrel{\rho}{\rightharpoondown} {\cal Y}$,
symbolized by $\rho = \relcopair{\rho_0}{\rho_1}{}{}$
and called the relative copairing of $\rho_0$ and $\rho_1$,
which satisfies the rules
\begin{center}
	$\begin{array}{r@{\;=\;}l}
        i_0^{\triangleright} \circ \relcopair{\rho_0}{\rho_1}{}{} & \rho_0 \\
        i_1^{\triangleright} \circ \relcopair{\rho_0}{\rho_1}{}{} & \rho_1
	 \end{array}$
\end{center}
Just define
$\relcopair{\rho_0}{\rho_1}{}{}
= (i_0^{\triangleleft} \circ \rho_0) \vee (i_1^{\triangleleft} \circ \rho_1)$.
These properties say that the sum order ${\oplus}{\cal X}$
is a coproduct {\em relative\/} to the external constraints specified by the formal context.
Clearly, the copairing operator $\relcopair{\;}{\;}{}{}$ is an order-isomorphism:
$\rho_0 \preceq \rho'_0$ and $\rho_1 \preceq \rho'_1)$ implies
$\relcopair{\rho_0}{\rho_1}{}{} \preceq \relcopair{\rho'_0}{\rho'_1}{}{}$.
Then
$\relcopair{\rho_0}{\rho_1}{}{} \wedge \relcopair{\rho'_0}{\rho'_1}{}{}
 = \relcopair{\rho_0 \wedge \rho'_0}{\rho_1 \wedge \rho'_1}{}{}$.
For any relation $\term{{\cal Y}}{\sigma}{{\cal Z}}$ it is immediate that
$\relcopair{\rho_0}{\rho_1}{}{} \circ \sigma
 = \relcopair{\rho_0 \circ \sigma}{\rho_1 \circ \sigma}{}{}$.
An alternate definition of copairing,
in terms of implications instead of product,
is given by
$\relcopair{\rho_0}{\rho_1}{}{}
= (i_0^{\triangleright} \tensorimplysource \rho_0) \wedge (i_1^{\triangleright} \tensorimplysource \rho_1)$.
The ``overlap'' of the $\rho_0$-part and the $\rho_1$-part of the source pairing is the relation
$i_0^{\triangleleft} \circ \mu \circ \rho_1
= (i_0^{\triangleleft} \circ \rho_0) \wedge (i_1^{\triangleleft} \circ \rho_1)$.
When ${\cal Y} = {\bf 1}$
the copairing operator defines an isomorphism
\[ \powerof{{\cal X}_0^{\rm op}} {\times}_{\mu} \powerof{{\cal X}_1^{\rm op}}
   \stackrel{\relcopair{}{}{}{}}{\cong}
   \powerof{ {{\oplus}{\cal X}}^{\rm op} }. \]

Dually, given any pair of relations
${\cal X}_0 \stackrel{\sigma_0}{\leftharpoondown} {\cal W} \stackrel{\sigma_1}{\rightharpoondown} {\cal X}_1$
which satisfy the external target contraints
$\sigma_0 \circ {\mu}_{01} \preceq \sigma_1$ and $\sigma_1 \circ {\mu}_{10} \preceq \sigma_0$
specified by the formal context,
there is a unique ``mediating'' relation
${\cal W} \stackrel{\sigma}{\rightharpoondown} {\oplus}{\cal X}$,
symbolized by $\sigma = \relpair{\sigma_0}{\sigma_1}{}{}$
and called the relative pairing of $\sigma_0$ and $\sigma_1$,
which satisfies the rules
\begin{center}
	$\begin{array}{r@{\;=\;}l}
       \relpair{\sigma_0}{\sigma_1}{}{} \circ i_0^{\triangleleft} & \sigma_0 \\
       \relpair{\sigma_0}{\sigma_1}{}{} \circ i_1^{\triangleleft} & \sigma_1
	 \end{array}$
\end{center}
Just define
$\relpair{\sigma_0}{\sigma_1}{}{}
= (\sigma_0 \circ i_0^{\triangleright}) \vee (\sigma_1 \circ i_1^{\triangleright})$.
These properties say that the sum ${\oplus}{\cal X}$
is a product {\em relative\/} to the external constraints specified by the formal context.
Clearly, the pairing operator $\relpair{\;}{\;}{}{}$ is an order-embedding
and preserves meets.
For any relation $\term{{\cal V}}{\rho}{{\cal W}}$ it is immediate that
$\rho \circ \relpair{\sigma_0}{\sigma_1}{}{}
 = \relpair{\rho \circ \sigma_0}{\rho \circ \sigma_1}{}{}$.
An alternate definition of pairing,
in terms of implications instead of product,
is given by
$\relpair{\sigma_0}{\sigma_1}{}{}
= (\sigma_0 \tensorimplytarget i_0^{\triangleleft}) \wedge (\sigma_1 \tensorimplytarget i_1^{\triangleleft})$.
When ${\cal W} = {\bf 1}$
the pairing operator defines an isomorphism
\[ \powerof{{\cal X}_0} {\times}_{\mu} \powerof{{\cal X}_1}
   \stackrel{\relpair{}{}{}{}}{\cong}
   \powerof{ {\oplus}{\cal X} }. \]

There is a canonical monotonic function
\[ \morphism{{\oplus}{\cal X}}{\tau}{{\oplus}{\bf 1}} \]
from the sum order ${\oplus}{\cal X}$ to the binary order.
This canonical function,
called the {\em tag\/} or {\em index\/} function,
indicates from which component order an element in the sum originates:
$\tau(x_0) = 0$ and $\tau(x_1) = 1$,
and functions as a partition (fibration) of the underlying set $X$.
The components of the distributed context,
which are used in the summation (centralization) process,
are recoverable by the definitions
\begin{center}
	$\begin{array}{c@{\hspace{3mm}}c@{\hspace{3mm}}l}
		X_0
			& \define
				& \tau^{\rm -1}(0) = \{ x \mid x \memberof X, \tau(x) = 0 \} \\
		X_1
			& \define
				& \tau^{\rm -1}(1) = \{ x \mid x \memberof X, \tau(x) = 1 \} \\
		\term{{\cal X}_0}{{\leq_0}}{{\cal X}_0}
			& \define
				& \tau^{\rm -1}(0 \vdashv 0) = \{ (x'_0,x_0) \mid x'_0,x_0 \memberof X_0, x'_0 \leq_X x_0 \} \\
		\term{{\cal X}_1}{{\leq_1}}{{\cal X}_1}
			& \define
				& \tau^{\rm -1}(1 \vdashv 1) = \{ (x_1,x'_1) \mid x_1,x'_1 \memberof X_1, x_1 \leq_X x'_1 \} \\
		\term{{\cal X}_0}{\mu_{01}}{{\cal X}_1}
			& \define
				& \tau^{\rm -1}(0 \vdashv 1) = \{ (x_0,x_1) \mid x_0 \memberof X_0, x_1 \memberof X_1, x_0 \leq_X x_1 \} \\
		\term{{\cal X}_0}{\mu_{10}}{{\cal X}_1}
			& \define
				& \tau^{\rm -1}(1 \vdashv 0) = \{ (x_1,x_0) \mid x_1 \memberof X_1, x_0 \memberof X_0, x_1 \leq_X x_0 \}
	 \end{array}$
\end{center}
The point of view that we foster in this paper is that
summation and fibration are inverse transformations
between specification of formal contexts as matrices of relations
\[ {\cal X} = \left( \begin{array}{cc} {\cal X}_0 & \mu_{01} \\ \mu_{10} & {\cal X}_1 \end{array} \right) \]
and specification of formal contexts as indexed orders
$\morphism{{\oplus}{\cal X}}{t}{{\oplus}{\bf 1}}$.

Consider any map of formal contexts $\Morphism{{\cal Y}}{f}{{\cal X}}$.
By using the monotonicity of $f_0$ and $f_1$,
the defining conditions for the map $f = \pair{f_0}{f_1}$
and the orthogonality conditions satisfied by the target context,
it is straightforward to show that the sum function
$\longmorphism{Y_0 {+} Y_1}{{\oplus}f}{X_0 {+} X_1}$
defined by ${\oplus}f \define f_0 {+} f_1$
is a monotonic function between the sum orders
$\longmorphism{{\oplus}{\cal Y}}{{\oplus}f}{{\oplus}{\cal X}}$.
Since the monotonic index function 
$\morphism{{\oplus}{\cal X}}{\tau}{{\oplus}{\bf 1}}$
is the sum of the terminal map
$\tau = {\oplus}{\top_{\cal X}}$
(similarly for ${\cal Y}$),
the sum function ${\oplus}f$ preserves partitions,
in the sense that
${\oplus}{\top_{\cal Y}} = {\oplus}f \cdot {\oplus}{\top_{\cal X}}$.
In the reverse direction,
any monotonic function
$\morphism{{\oplus}{\cal Y}}{f}{{\oplus}{\cal X}}$
resolves into two functions $\morphism{Y_0}{f_0}{X_0}$ and $\morphism{Y_1}{f_1}{X_1}$
defined by pullback (restriction to inverse image) of the component orders $X_0$ and $X_1$.
The monotonicity of $f$
implies that $f_0$ and $f_1$ are monotonic functions
$\morphism{{\cal Y}_0}{f_0}{{\cal X}_0}$
and
$\morphism{{\cal Y}_1}{f_1}{{\cal X}_1}$,
and implies the defining conditions
$\nu_{01} \preceq f_0^{\triangleright} \circ \mu_{01} \circ f_1^{\triangleleft}$
and
$\nu_{10} \preceq f_1^{\triangleright} \circ \mu_{10} \circ f_0^{\triangleleft}$.
In summary,
the category of formal contexts is isomorphic to the category of orders over the binary order
\[ {\bf Cxt} \cong {\bf Ord}/\overline{\bf 2}. \]
Define a summation functor
$\longmorphism{{\bf Cxt}}{{\oplus}}{{\bf Ord}}$
to be the composite ${\oplus} \define {\cong} \cdot {\partial_0}$,
where
$\longmorphism{{\bf Ord}/\overline{\bf 2}}{\partial_0}{{\bf Ord}}$
is the projection functor
mapping monotonic functions $\morphism{{\cal X}}{t}{\overline{\bf 2}}$
to their source order ${\cal X}$.
Summation is left adjoint
\[ {\oplus} \dashv {\rm Inc} \]
to the inclusion functor $\longmorphism{{\bf Ord}}{{\rm Inc}}{{\bf Cxt}}$
with adjunction unit context map $\Morphism{{\cal X}}{i}{{\rm Inc}({\oplus}{\cal X})}$
consisting of the pair $i = \pair{i_0}{i_1}$ of sum injection functions.

For any formal context
\[ {\cal X} = \left( \begin{array}{cc} {\cal X}_0 & \mu_{01} \\ \mu_{10} & {\cal X}_1 \end{array} \right) \]
we can construct the Dedekind-MacNeille completion of the sum
${\bf DM}({\oplus}{\cal X})$
whose elements,
called concepts,
are formalized as extent-intent pairs
$\pair{[\phi_0,\phi_1]}{(\psi_0,\psi_1)}$:
the extent pair of order ideals satisfies the ideal assertions
$\mu_{01} \circ \phi_1 \;\preceq\; \phi_0$ and $\mu_{10} \circ \phi_0 \;\preceq\; \phi_1$
and the extent constraints
$\phi_0 \;=\; \looer{(\psi_0)}{{\cal X}_0} \wedge \deriveinv{(\psi_1)}{\mu_{01}}$
and
$\phi_1 \;=\; \deriveinv{(\psi_0)}{\mu_{10}} \wedge \looer{(\psi_1)}{{\cal X}_1}$;
and
the intent pair of order filters satisfies the filter assertions
$\psi_0 \circ \mu_{01} \;\preceq\; \psi_1$ and $\psi_1 \circ \mu_{10} \;\preceq\; \psi_0$
and the intent constraints
$\psi_0 \;=\; \upper{(\phi_0)}{{\cal X}_0} \wedge \derivedir{(\phi_1)}{\mu_{10}}$
and
$\psi_1 \;=\; \derivedir{(\phi_0)}{\mu_{01}} \wedge \upper{(\phi_1)}{{\cal X}_1}$.
The embedding generator functions
${\cal X}_0 \stackrel{\hat{\imath}_0}{\longrightarrow} {\bf DM}({\oplus}{\cal X}) \stackrel{\hat{\imath}_1}{\longleftarrow} {\cal X}_1$,
which are the compositions 
of coproduct injection followed by Dedekind-MacNeille completion,
are described in detail as the following concepts (extent-intent pairs):
\footnotesize
\begin{center}
	$\begin{array}{|c||c|c|c|c|}
		\hline
		{\bf generating} & \multicolumn{2}{c|}{{\bf extent}} & \multicolumn{2}{|c|}{{\bf intent}} \\
		\cline{2-5}
		{\bf element} & \phi_0 & \phi_1 & \psi_0 & \psi_1 \\
		\hline\hline
		x_0 \memberof {\cal X}_0
			& {\downarrow} x_0 \wedge \closure{(x_0)}{\mu}
				& \looer{(x_0{\mu})}{{\cal X}_1}
					& {\uparrow} x_0
						& x_0{\mu} \\
		\hline
		x_1 \memberof {\cal X}_1
			& {\mu}x_1
				& {\downarrow} x_1
					& \upper{({\mu}x_1)}{{\cal X}_0}
						& {\uparrow} x_1 \wedge \closure{(x_1)}{\mu} \\
		\hline
	 \end{array}$
\end{center}
\normalsize
The generated concept
${\imath_0}(x_0) =
	\left\langle
	\left( {\downarrow} x_0 \wedge \closure{(x_0)}{\mu}, \looer{(x_0{\mu})}{{\cal X}_1} \right),
	\left( {\uparrow} x_0, x_0{\mu} \right)
	\right\rangle$
can start from the pair
$\left( {\downarrow}x_0, \emptyset \right) \in \product{\powerof{{\cal X}_0^{\rm op}}}{\powerof{{\cal X}_1^{\rm op}}}$,
and the generated concept
${\imath_1}(x_1) =
	\left\langle
	\left( {\mu}x_1, {\downarrow} x_1 \right),
	\left( \upper{({\mu}x_1)}{{\cal X}_0}, {\uparrow} x_1 \wedge \closure{(x_1)}{\mu} \right)
	\right\rangle$
can start from the pair
$\left( \emptyset, {\uparrow}x_1 \right) \in \product{\powerof{{\cal X}_0}}{\powerof{{\cal X}_1}}$.
Note that
the ideal pair $\pair{{\uparrow} x_0}{x_0{\mu}}$
satisfies the ideal assertion
$({\uparrow} x_0) \circ \mu \preceq x_0{\mu}$,
and
the filter pair $\pair{{\mu}x_1}{{\downarrow} x_1}$
satisfies the filter assertion
$\mu \circ ({\downarrow} x_1) \preceq {\mu}x_1$.

\begin{Theorem} {\bf [Equivalence]}
	The concept lattice of a relation
	is (isomorphic to)
	the Dedekind-MacNeille completion (Figure~\ref{DedekindMacNeille})
	of the sum of the contextual closure of the relation:
	\[ {\bf CL}\triple{X_0}{X_1}{\mu} \cong {\bf DM}({\oplus}{\cal X}^{\mu}). \]
\end{Theorem}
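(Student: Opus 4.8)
The plan is to invoke the Basic Theorem for Concept Lattices rather than to build an explicit order-isomorphism by hand. Writing $L \define {\bf DM}({\oplus}{\cal X}^\mu)$ and $e$ for the canonical Dedekind-MacNeille embedding of the sum order into $L$, I would take $f_0 \define \hat{\imath}_0 = e \cdot i_0$ and $f_1 \define \hat{\imath}_1 = e \cdot i_1$, the embedding generator functions already described just above the statement. It then suffices to verify the four hypotheses of the Basic Theorem: that $L$ is a complete lattice, that $f_0$ is join-dense, that $f_1$ is meet-dense, and that $x_0{\mu}x_1$ iff $f_0(x_0) \leq_L f_1(x_1)$.

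First, $L$ is a complete lattice because every Dedekind-MacNeille completion is; this is the classical completion-by-cuts fact and needs nothing beyond citation. Second, the \emph{has}-relation condition is immediate: since $e$ and the coproduct injections $i_0,i_1$ are order embeddings, $f_0(x_0) \leq_L f_1(x_1)$ holds iff $i_0(x_0) \leq_{{\oplus}{\cal X}^\mu} i_1(x_1)$, which by the defining clause $x_0 \leq_X x_1$ iff $x_0{\mu_{01}}x_1$ of the sum order is exactly $x_0{\mu}x_1$.

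The heart of the proof is density. I would prove the sharper statement that each attribute generator is already a join of entity generators,
\[ \hat{\imath}_1(x_1) = \bigvee \{ \hat{\imath}_0(x_0) \mid x_0{\mu}x_1 \}, \]
and dually that each entity generator is a meet of attribute generators. Since the full image $e({\oplus}{\cal X}^\mu)$ is join-dense in any Dedekind-MacNeille completion, this sharper statement upgrades, by associativity of joins, to join-density of $\hat{\imath}_0$ alone (and dually meet-density of $\hat{\imath}_1$). To establish the displayed equation I would compute, inside the sum order, the set of upper bounds of $\{x_0 \mid x_0{\mu}x_1\} = {\mu}x_1$: the upper bounds lying in $X_1$ are precisely $\upper{(x_1)}{{\cal X}_1^\mu}$, while those lying in $X_0$ are precisely $\{x'_0 \mid x_1{(\neg\mu)}x'_0\}$, using the pointwise description $x_1{(\neg\mu)}x'_0$ iff ${\mu}x_1 \subseteq {\downarrow}x'_0$ from the preceding section. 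This combined upper-bound set is exactly the principal filter ${\uparrow}(i_1 x_1)$ of the sum order, so the join of the cuts $e(i_0 x_0)$ over $x_0 \in {\mu}x_1$ is the principal cut $e(i_1 x_1) = \hat{\imath}_1(x_1)$. The dual computation on the lower bounds of $x_0{\mu}$, again invoking the opposite clause $x_0{\mu} \subseteq {\uparrow}x_1$ of $\neg\mu$, gives the meet statement.

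The main obstacle — and the conceptual payoff of the whole paper — is precisely this cross-sort bound computation. It is exactly the presence of the hidden relation $\neg{\mu} = \mu \tensorimplysource \mu \tensorimplytarget \mu$ in the target-to-source corner of the contextual closure that forces the $X_0$-upper-bounds of ${\mu}x_1$ to coincide with ${\uparrow}(i_1 x_1)$; had we summed the bare relation $\mu$ together with only the two induced orders (the earlier ``error of omission''), these cross-sort bounds would be strictly too large and $\hat{\imath}_0$ would fail to be join-dense, breaking the isomorphism. Once density is in hand, the Basic Theorem delivers a lattice isomorphism $L \cong {\bf CL}\triple{X_0}{X_1}{\mu}$ carrying $f_0,f_1$ to Wille's entity and attribute generators, which is the desired conclusion.
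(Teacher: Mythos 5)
Your proposal is correct, but it takes a genuinely different route from the paper's. The paper never invokes the Basic Theorem: it constructs the isomorphism explicitly at the level of concepts, sending a Wille concept $\pair{\phi}{\psi}$ to the four-component concept $\pair{\relcopair{\phi}{\mu \tensorimplysource \phi}{}{}}{\relpair{\psi \tensorimplytarget \mu}{\psi}{}{}}$ --- the collective extent and intent are manufactured from the individual ones by relational implication --- and then verifies the required closure identities and orthogonality assertions by relational algebra (the composition/implication adjointness, plus the identities $\mu \tensorimplysource \phi = \looer{\psi}{{\cal X}_1^\mu}$ and $\psi \tensorimplytarget \mu = \upper{\phi}{{\cal X}_0^\mu}$, resting on the special derivation inequalities of Appendix B). You instead reduce everything to Wille's dense-maps characterization, and your technical core is the cross-sort bound computation ${\rm ub}_{{\oplus}{\cal X}^\mu}({\mu}x_1) = {\uparrow}(i_1 x_1)$, which is indeed exactly where $\neg{\mu}$ enters (your appeal to the one-clause description of $\neg{\mu}$ is legitimate here, since by equation~\ref{negrel} the two clauses coincide over the induced orders); this is essentially the same mathematical content as the paper's appendix inequalities, repackaged order-theoretically rather than relationally. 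What each approach buys: the paper's computation exhibits the concrete formula relating the two kinds of concepts (which it also needs for its generator-function table and the planetary example), and exercises the relational machinery that is the point of the paper; your route is shorter, black-boxes completeness and density of the cut completion, and --- notably --- delivers the full order-isomorphism automatically, whereas the paper's proof as written only checks that the forward assignment lands in ${\bf DM}({\oplus}{\cal X}^{\mu})$, leaving surjectivity and order-reflection implicit. Your closing observation, that join-density of $\hat{\imath}_0$ fails if the $\mu_{10}$ corner is taken to be $\bot$ instead of $\neg{\mu}$, is also correct and gives a clean retrospective justification of the paper's ``error of omission'' discussion, though it forms no part of the paper's own proof.
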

\begin{proof}
	Assume that $\pair{\phi}{\psi}$ is a concept w.r.t.\ ${\bf CL}\triple{X_0}{X_1}{\mu}$,
	hence satisfying the closure identities
	$\phi = \deriveinv{\psi}{\mu}$ and $\psi = \derivedir{\phi}{\mu}$.
	We will show that
	$\pair{\relcopair{\phi}{\mu \tensorimplysource \phi}{}{}}{\relpair{\psi \tensorimplytarget \mu}{\psi}{}{}}$
	is a concept w.r.t.\ ${\bf DM}({\oplus}{\cal X}^{\mu})$.
	The isomorphism between concepts will then be
	\[ \pair{\phi}{\psi} \hspace{5mm} \longleftrightarrow \hspace{5mm} \pair{\relcopair{\phi}{\mu \tensorimplysource \phi}{}{}}{\relpair{\psi \tensorimplytarget \mu}{\psi}{}{}} \]
	We need to verify the closure identities
	\begin{center}
	\begin{tabular}{c@{\hspace{1cm}}c}
		$\begin{array}{r@{\;=\;}c@{\;=\;}l}
			\phi & \looer{(\psi \tensorimplytarget \mu)}{{\cal X}_0^\mu} \wedge \deriveinv{\psi}{\mu} & \deriveinv{\psi}{\mu} \\
			\mu \tensorimplysource \phi & \deriveinv{(\psi \tensorimplytarget \mu)}{{\neg}{\mu}} \wedge \looer{\psi}{{\cal X}_1^\mu} & \looer{\psi}{{\cal X}_1^\mu}
		 \end{array}$
		&
		$\begin{array}{r@{\;=\;}c@{\;=\;}l}
			\psi \tensorimplytarget \mu & \upper{\phi}{{\cal X}_0^\mu} \wedge \derivedir{(\mu \tensorimplysource \phi)}{{\neg}{\mu}} & \upper{\phi}{{\cal X}_0^\mu} \\
			\psi & \derivedir{\phi}{\mu} \wedge \upper{(\mu \tensorimplysource \phi)}{{\cal X}_1^\mu} & \derivedir{\phi}{\mu}
		 \end{array}$
	\end{tabular}
	\end{center}
	(see Appendix B).
	These hold,
	since
	$\mu \tensorimplysource \phi
	 = \mu \tensorimplysource \deriveinv{\psi}{\mu}
	 = \mu \tensorimplysource (\mu \tensorimplytarget \psi)
	 = (\mu \tensorimplysource \mu) \tensorimplytarget \psi
	 = {\cal X}_1^\mu \tensorimplytarget \psi
	 = \looer{\psi}{{\cal X}_1^\mu}$
	and
	$\psi \tensorimplytarget \mu 
	 = \derivedir{\phi}{\mu} \tensorimplytarget \mu 
	 = (\phi \tensorimplysource \mu) \tensorimplytarget \mu 
	 = \phi \tensorimplysource (\mu \tensorimplytarget \mu)
	 = \phi \tensorimplysource {\cal X}_0^\mu
	 = \upper{\phi}{{\cal X}_0^\mu}$.
	We also need to verify the assertions
	\begin{center}
	\begin{tabular}{c@{\hspace{1cm}}c}
		$\begin{array}{r@{\;\preceq\;}l}
			\mu \circ (\mu \tensorimplysource \phi) & \phi \\
			{\neg}\mu \circ \phi & (\mu \tensorimplysource \phi)
		 \end{array}$
		&
		$\begin{array}{r@{\;\preceq\;}l}
			(\psi \tensorimplytarget \mu) \circ \mu & \psi \\
			\psi \circ {\neg}\mu & (\psi \tensorimplytarget \mu)
		 \end{array}$
	\end{tabular}
	\end{center}
	These hold by modus ponens,
	and orthogonality constraints
	$\mu \circ \neg{\mu} \circ \phi \preceq \phi$
	and
	$\psi \circ \neg{\mu} \circ \mu \preceq \psi$.
\end{proof}

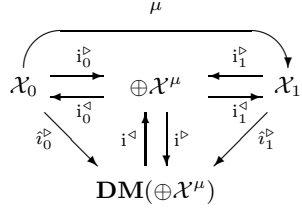
\begin{figure}
\begin{center}
	\begin{picture}(120,70)(-10,-40)
		\put(0,0){\makebox(0,0){\small${\cal X}_0$\normalsize}}
		\put(50,0){\makebox(0,0){\small${\oplus}{\cal X}^{\mu}$\normalsize}}
		\put(100,0){\makebox(0,0){\small${\cal X}_1$\normalsize}}
		\put(50,-40){\makebox(0,0){\small${\bf DM}({\oplus}{\cal X}^{\mu})$\normalsize}}
		\put(50,30){\makebox(0,0){\scriptsize$\mu$\normalsize}}
		\put(23,11){\makebox(0,0){\scriptsize${\rm i}_0^{\triangleright}$\normalsize}}
		\put(23,-10){\makebox(0,0){\scriptsize${\rm i}_0^{\triangleleft}$\normalsize}}
		\put(82,11){\makebox(0,0){\scriptsize${\rm i}_1^{\triangleright}$\normalsize}}
		\put(82,-10){\makebox(0,0){\scriptsize${\rm i}_1^{\triangleleft}$\normalsize}}
		\put(60,-20){\makebox(0,0){\scriptsize${\rm i}^{\triangleright}$\normalsize}}
		\put(40,-20){\makebox(0,0){\scriptsize${\rm i}^{\triangleleft}$\normalsize}}
		\put(8,-20){\makebox(0,0){\scriptsize$\hat{\imath}_0^\triangleright$\normalsize}}
		\put(92,-20){\makebox(0,0){\scriptsize$\hat{\imath}_1^\triangleright$\normalsize}}
		\put(10,4){\vector(1,0){20}}
		\put(90,4){\vector(-1,0){20}}
		\put(30,-4){\vector(-1,0){20}}
		\put(70,-4){\vector(1,0){20}}
		\put(54,-10){\vector(0,-1){20}}
		\put(46,-30){\vector(0,1){20}}
		\put(8,-10){\vector(1,-1){20}}
		\put(92,-10){\vector(-1,-1){20}}
		\put(50,7){\oval(100,28)[t]}
		\put(100,7){\vector(0,-1){0}}
	\end{picture}
\end{center}
\caption{{\bf Dedekind-MacNeille completion of the sum of the contextual closure} \label{DedekindMacNeille}}
\end{figure}

The sum order of the contextual closure of the original planetary relationship
is displayed in Table~\ref{sumclosure}.
This table is a matrix sum of the block in Table~\ref{planets} representing the original relationship,
the two blocks in Table~\ref{inducedorders} representing the induced orders,
and the block in Table~\ref{negrelplanets} representing the negation of the original planetary relationship.
\scriptsize
\begin{table}
\begin{center}	\begin{tabular}{|l|@{\hspace{1.0mm}}c@{\hspace{1.0mm}}c@{\hspace{1.0mm}}c@{\hspace{1.0mm}}c@{\hspace{1.0mm}}c@{\hspace{1.0mm}}c@{\hspace{1.0mm}}c@{\hspace{1.0mm}}c@{\hspace{1.0mm}}c@{\hspace{1.0mm}}|@{\hspace{1.0mm}}c@{\hspace{1.0mm}}c@{\hspace{1.0mm}}c@{\hspace{1.0mm}}c@{\hspace{1.0mm}}c@{\hspace{1.0mm}}c@{\hspace{1.0mm}}c@{\hspace{0.5mm}}|} \hline
		   & Me & V & E & Ma & J & S & U & N & P & ss & sm & sl & dn & df & my & mn \\ \hline
		Me & $\times$ & $\times$ &&&&&&&& $\times$ &&& $\times$ &&& $\times$   \\
		V  & $\times$ & $\times$ &&&&&&&& $\times$ &&& $\times$ &&& $\times$   \\
		E  &&& $\times$ & $\times$ &&&&&& $\times$ &&& $\times$ && $\times$ &  \\
		Ma &&& $\times$ & $\times$ &&&&&& $\times$ &&& $\times$ && $\times$ &  \\
		J  &&&&& $\times$ & $\times$ &&&&&& $\times$ && $\times$ & $\times$ &  \\
		S  &&&&& $\times$ & $\times$ &&&&&& $\times$ && $\times$ & $\times$ &  \\
		U  &&&&&&& $\times$ & $\times$ &&& $\times$ &&& $\times$ & $\times$ &  \\
		N  &&&&&&& $\times$ & $\times$ &&& $\times$ &&& $\times$ & $\times$ &  \\
		P  &&&&&&&&& $\times$ & $\times$ &&&& $\times$ & $\times$ &            \\ \hline
		ss &&&&&&&&&& $\times$ &&&&&&                                          \\
		sm &&&&&&& $\times$ & $\times$ &&& $\times$ &&& $\times$ & $\times$ &  \\
		sl &&&&& $\times$ & $\times$ &&&&&& $\times$ && $\times$ & $\times$ &  \\
		dn &&&&&&&&&& $\times$ &&& $\times$ &&&                                \\
		df &&&&&&&&&&&&&& $\times$ & $\times$ &                                \\
		my &&&&&&&&&&&&&&& $\times$ &                                          \\
		mn & $\times$ & $\times$ &&&&&&&& $\times$ &&& $\times$ &&& $\times$   \\ \hline
	\end{tabular}
\end{center}
\caption{{\bf Sum ${\oplus}{\cal X}^{\mu}$ of the contextual closure of the original planetary relationship} \label{sumclosure}}
\end{table}
\normalsize
The concepts in the Dedekind-MacNeille completion of the planetary context
(the sum of the contextual closure of the planetary relationship)
are listed in Table~\ref{DMplanets}.
A comparison of these concepts with
the concepts in the concept lattice of the planetary relationship as listed in Table~\ref{CLplanets}
will confirm their isormorphism.
\footnotesize
\begin{table}
\begin{center}
	\begin{tabular}{|c||c|c|c|c|}
		\hline
		{\bf concept} & \multicolumn{2}{c|}{{\bf extent}} & \multicolumn{2}{|c|}{{\bf intent}} \\
		\cline{2-5}
		{\bf description} & $\phi_0$ & $\phi_1$ & $\psi_0$ & $\psi_1$ \\
		\hline\hline
		``everything''
			& $X_0$
				 & $X_1$
					 & $\emptyset$
						 & $\emptyset$
		\\
		``with moon''
			& $\{ {\rm {\rm E}},{\rm Ma},{\rm J},{\rm S},{\rm U},{\rm N},{\rm P} \}$
				 & $\{ {\rm sm},{\rm sl},{\rm df},{\rm my} \}$
					 & $\emptyset$
						 & $\{ {\rm my} \}$
		\\
		``small size''
			& $\{ {\rm Me},{\rm V},{\rm E},{\rm Ma},{\rm P} \}$
				& $\{ {\rm ss},{\rm dn},{\rm mn} \}$
					& $\emptyset$
						& $\{ {\rm ss} \}$
		\\
		``small with moon''
			& $\{ {\rm E},{\rm Ma},{\rm P} \}$
				& $\emptyset$
					& $\emptyset$
						& $\{ {\rm ss},{\rm my} \}$
		\\
		``far''
			& $\{ {\rm J},{\rm S},{\rm U},{\rm N},{\rm P} \}$
				& $\{ {\rm sm},{\rm sl},{\rm df} \}$
					& $\emptyset$
						& $\{ {\rm df},{\rm my} \}$
		\\
		``near''
			& $\{ {\rm Me},{\rm V},{\rm E},{\rm Ma} \}$
				& $\{ {\rm dn},{\rm mn} \}$
					& $\emptyset$
						& $\{ {\rm ss},{\rm dn} \}$
		\\
		``Plutoness''
			& $\{ {\rm P} \}$
				& $\emptyset$
					& $\{ {\rm P} \}$
						& $\{ {\rm ss},{\rm df},{\rm my} \}$
		\\
		``medium size''
			& $\{ {\rm U},{\rm N} \}$
				& $\{ {\rm sm} \}$
					& $\{ {\rm U},{\rm N} \}$
						& $\{ {\rm sm},{\rm df},{\rm my} \}$
		\\
		``large size''
			& $\{ {\rm J},{\rm S} \}$
				& $\{ {\rm sl} \}$
					& $\{ {\rm J},{\rm S} \}$
						& $\{ {\rm sl},{\rm df},{\rm my} \}$
		\\
		``near with moon''
			& $\{ {\rm E},{\rm Ma} \}$
				& $\emptyset$
					& $\{ {\rm E},{\rm Ma} \}$
						& $\{ {\rm ss},{\rm dn},{\rm my} \}$
		\\
		``moonless''
			& $\{ {\rm Me},{\rm V} \}$
				& $\{ {\rm mn} \}$
					& $\{ {\rm Me},{\rm V} \}$
						& $\{ {\rm ss},{\rm dn},{\rm mn} \}$
		\\
		``nothing''
			& $\emptyset$
				& $\emptyset$
					& $X_0$
						& $X_1$
		 \\ \hline
	\end{tabular}
\end{center}
\caption{{\bf Dedekind-MacNeille completion ${\bf DM}({\oplus}{\cal X}^{\mu})$ for the sum} \label{DMplanets}}
\end{table}
\normalsize


\section{Future Work}

In a follow-up paper
we generalize formal contexts to a distributed version more suitable for knowledge representation called formal situations.
There we give an equivalent categorical rendition known as distributed orders.
In this follow-up paper
we incorporate Woods's notions of abstract conceptual descriptions, subsumption, extended quantifiers, etc.
At the same time we rationalize the assertional/terminological distinction --- A-boxes versus T-boxes.

In a second paper
we abstract formal situations and their concept construction
from the special order-theoretic realm
to the general realm of semiadditive Heyting categories.
In this abstraction,
formal situations and distributed orders become distributed monads,
and derivation operators become a kind of logical negation.
We resolve distributed monads into the two constructions of distribution (matrices) and monad-enrichment,
and
identify the direct/inverse derivation operators of concept construction
and the upper/lower operators of Dedekind-MacNeille completion
with the two dual implication operators from relational logic \cite{Lawvere,Kent}.
Since the structural aspect of the mathematics here is very close to a Grothendieck topos,
the topos nature of formal concept analysis needs to be investigated.

Initial applications have been carried out in terms of {\bf C}$+$$+$ software which implements the semantic version of formal context defined in this paper,
the modified approach to concept lattice construction,
and query processing against a lattice,
in a windows environment on a personal computer or a work station.
In a companion paper we have abstracted the related but distinct approach of Pawlak
to classification and predicate approximation using rough sets \cite{Pawlak}.
The mathematics shows an intimate connection between the two approaches.
Further work needs to be done on extending the new approach for concept construction
to conceptual scaling,
the situation more common in both object-orientation, database and knowledge representation,
where multi-valued attributes exist.


\newpage
\appendix

\section{The Central Adjointness of Logic}

A {\em closed relation\/} $\term{{\cal X}}{{\alpha}}{{\cal Y}}$
between two orders
is a binary relation $\alpha \subseteq \product{X}{Y}$ between the underlying sets
which is closed on the left w.r.t.\ ${\cal X}$ in the sense that
$x' \leq_X x$ and $x{\alpha}y$ implies $x'{\alpha}y$ for all $x',x \in X$ and $y \in Y$,
and closed on the right w.r.t.\ ${\cal Y}$ in the dual sense that
$x{\alpha}y$ and $y \leq_Y y'$ implies $x{\alpha}y'$ for all $x \in X$ and $y,y' \in Y$.
Clearly,
an alternate description is that a closed relation $\term{{\cal X}}{{\alpha}}{{\cal Y}}$
is a monotonic function to the special Boolean order
$\morphism{\product{{\cal X}^{\rm op}}{{\cal Y}}}{\alpha}{{\bf 2}}$.
Closed relations between ${\cal X}$ and ${\cal Y}$ are ordered by subset inclusion (homset order)
in the product powerset:
two closed relations $\term{{\cal X}}{{\alpha},{\beta}}{{\cal Y}}$ are ordered ${\alpha} \preceq {\beta}$
when ${\alpha} \subseteq {\beta}$ as subsets ${\alpha},{\beta} \in \wp(\product{X}{Y})$.
Any closed relation $\term{{\cal X}}{{\alpha}}{{\cal Y}}$
has
an {\em opposite\/} closed relation $\term{{\cal Y}^{\rm op}}{{\alpha}^{\rm op}}{{\cal X}^{\rm op}}$
defined by $y{\alpha}^{\rm op}x$ iff $x{\alpha}y$.
Two closed relations $\term{{\cal X}}{{\alpha}}{{\cal Y}}$ and $\term{{\cal Y}}{{\beta}}{{\cal Z}}$
with matching target and source, respectively, are said to be composable.
The composition of two such closed relations
is defined by
${\alpha} \circ {\beta} \define \{ (x,z) \mid (\exists y \in Y)\; x{\alpha}y \mbox{ and } y{\beta}z \}$
--- ordinary relational composition.
For every preorder ${\cal X} = \pair{X}{\leq_{\cal X}}$
the order relation ${\leq_{\cal X}} \subseteq \product{X}{X}$ is a closed relation
$\term{{\cal X}}{{\cal X}}{{\cal X}}$,
defined by $x{\cal X}x'$ when $x \leq_{\cal X} x'$,
which is the identity relation at ${\cal X}$ w.r.t.\ relational composition:
${\cal X} \circ r = r$ for any relation $\term{{\cal X}}{r}{{\cal Y}}$.
Composition preserves homset order.
The opposite operator is an involution.

For any monotonic function $\morphism{{\cal X}}{h}{{\cal Y}}$
there are two associated closed relations:
its {\em direct graph\/} $\term{{\cal X}}{h^\triangleright}{{\cal Y}}$
defined by $h^\triangleright \define \{ (x,y) \mid h(x) \leq_Y y \}$,
and
its {\em inverse graph\/} $\term{{\cal Y}}{h^\triangleleft}{{\cal X}}$
defined by $h^\triangleleft \define \{ (y,x) \mid y \leq_Y h(x) \}$.
These relations form an adjoint pair $h^\triangleright \dashv h^\triangleleft$.
The graph of a composite map
$\longmorphism{{\cal X}}{f \cdot g}{{\cal Z}}$
is the composition of the component graphs,
in
both the direct sense $(f \cdot g)^\triangleright = f^\triangleright \circ g^\triangleright$
and the inverse sense $(f \cdot g)^\triangleleft = g^\triangleleft \circ f^\triangleleft$.
The graph of the identity function $\longmorphism{{\cal X}}{{\rm Id}_X}{{\cal X}}$
is the identity relation,
in
both the direct sense ${\rm Id}_X^\triangleright = {\cal X}$
and the inverse sense ${\rm Id}_X^\triangleleft = {\cal X}$.

The central adjointness for both classical and intuitionistic logic
is the adjunction between conjunction and implication:
\begin{center}
	$\begin{array}{rclcccrcl}
		\mbox{conjunction} && \mbox{implication} \\
		(\;) \wedge p & \dashv & (\;) \Leftarrow p
	 \end{array}$
\end{center}
as verified by the adjointness equivalence
\[	q \wedge p \;\vdash\; r
	\hspace{.2in} \mbox{iff} \hspace{.2in}
	q \;\vdash\; r \Leftarrow p \]
for any three propositional symbols $p,q,r \in {\bf 2} = \{ 0,1 \}$.
One reason why this adjointness is so central is that it has several powerful analogs
in other contexts.
We are especially interested in the relational analog,
as illustrated in Table~\ref{analogies},
and the central adjointness in the relational context.
\begin{table}
\begin{center}
\begin{tabular}{|rc|rc|}
	\hline
	\multicolumn{2}{|c|}{{\sc Traditional Logic}} &
	\multicolumn{2}{c|}{{\sc Relational Logic}}
	\\ \hline\hline
	propositions & $p$               & relations   & $\term{{\cal X}}{{\alpha}}{{\cal Y}}$ \\
	conjunction  & $p \wedge q$      & composition & ${\alpha} \circ {\beta}$   \\
	entailment   & $p \vdash q$      & order       & ${\alpha} \preceq {\beta}$ \\
	implication  & $p \Rightarrow q$ & implication & ${\beta} \tensorimplysource {\alpha}$ \\ \hline
\end{tabular}
\end{center}
	\caption{Analogies between traditional and relational logic \label{analogies}}
\end{table}
To extend this analogy to implication,
we need a notion of relational implication.
Unlike the case of propositional logic
where the combining form of conjunction is symmetric,
in relational logic the combining form of composition is asymmetric.
This implies existence of two (related, but nonequivalent) relational implications
\begin{center}
$\begin{array}{rcl}
	{\alpha} \tensorimplytarget {\gamma}
	 & \define
	  & \{ (x,z) \mid (\forall y \in Y)\; z{\gamma}y \mbox{ implies } x{\alpha}y \}
	   \\
	{\beta} \tensorimplysource {\alpha}
	 & \define
	  & \{ (z,y) \mid (\forall x \in X)\; x{\beta}z \mbox{ implies } x{\alpha}y \}
 \end{array}$
\end{center}
for any three closed relations
$\term{{\cal X}}{{\beta}}{{\cal Z}}$,
$\term{{\cal Z}}{{\gamma}}{{\cal Y}}$ and $\term{{\cal X}}{{\alpha}}{{\cal Y}}$.
These implications are closed relations
$\term{{\cal X}}{{\alpha} \tensorimplytarget {\gamma}}{{\cal Z}}$
and
$\term{{\cal Z}}{{\beta} \tensorimplysource {\alpha}}{{\cal Y}}$.
The relationship between the two forms of implication
is expressible in terms of the opposite involution
\[ {\alpha} \tensorimplytarget {\gamma} = {({\gamma}^{\rm op} \tensorimplysource {\alpha}^{\rm op})}^{\rm op}
   \hspace{.2in} \mbox{and} \hspace{.2in}
   {\beta} \tensorimplysource {\alpha} = {({\alpha}^{\rm op} \tensorimplytarget {\beta}^{\rm op})}^{\rm op}. \]
The central adjointness for relational logic
is
the adjunction between relational composition and relational implication:
\begin{center}
	$\begin{array}{rclcccrcl}
		\shortstack{relational\\composition} && \shortstack{relational\\implication} \\
		{\beta} \circ (\;) & \dashv & {\beta} \tensorimplysource (\;) \\
		(\;) \circ {\gamma} & \dashv & (\;) \tensorimplytarget {\gamma}
	 \end{array}$
\end{center}
as verified by the adjointness equivalences
\[  {\gamma} \;\preceq\; {\beta} \tensorimplysource {\alpha}
	\hspace{.2in} \mbox{iff} \hspace{.2in}
	{\beta} \circ {\gamma} \;\preceq\; {\alpha}
	\hspace{.2in} \mbox{iff} \hspace{.2in}
	{\beta} \;\preceq\; {\alpha} \tensorimplytarget {\gamma}. \]

\subsection{Upper/Lower Operators}

A {\em closed-below subset (order ideal)\/} of an order ${\cal X} = \pair{X}{\leq}$
is a subset $\phi \subseteq X$ satisfying the condition:
$x' \leq_X x$ and $x \in \phi$ implies $x' \in \phi$ for all $x,x' \in X$.
Clearly,
an order ideal can be expressed alternately as a closed relation $\term{{\cal X}}{\phi}{{\bf 1}}$
or as a monotonic function $\morphism{{\cal X}^{\rm op}}{\phi}{{\bf 2}}$.
So the set of order ideals of ${\cal X}$ is the special exponential order
$\powerof{{\cal X}^{\rm op}}$.
Dually,
a {\em closed-above subset (order filter)\/} of ${\cal X}$
is a subset $B \subseteq X$ satisfying the condition:
$x \in B$ and $x \leq_X x'$ implies $x' \in B$ for all $x,x' \in X$.
An order filter is a closed relation $\term{{\bf 1}}{B}{{\cal X}}$
or a monotonic function $\morphism{{\cal X}}{\psi}{{\bf 2}}$.
So the set of order filters of ${\cal X}$ is the special exponential order
$\powerof{{\cal X}}$.
If $\term{{\cal X}}{\phi}{{\bf 1}}$ is an order ideal
and $\term{{\bf 1}}{\psi}{{\cal X}}$ is an order filter,
then the relational composition
$(\term{{\cal X}}{\phi \circ \psi}{{\cal X}}) = \product{\phi}{\psi}$
is the Cartesian product of $\phi$ and $\psi$ as an endorelation.
The relational interior of $\phi \circ \psi$
is the intersection of $A$ and $\psi$ as a relational comonoid.

Given any order ${\cal X}$,
the {\em upper operator\/} $\upper{(\;)}{{\cal X}}$
maps any subset $\phi \subseteq X$ to its subset of upper bounds
\begin{center}
	$\begin{array}{r@{\hspace{0.4em}}c@{\hspace{0.4em}}l}
		\upper{\phi}{{\cal X}} & \define & \{ x_2 \memberof X \mid x_1 \leq x_2 \mbox{ for all } x_1 \in \phi \} \\
		                    & =       & \{ x_2 \memberof X \mid (\forall x_1 \memberof X)\; x_1 \in \phi \mbox{ implies } x_1 \leq x_2 \} \\
		                    & =       & \{ x_2 \memberof X \mid \phi \subseteq {\downarrow}x_2 \} \\
		                    & =       & \bigcup_{x_1 \in \phi} {\uparrow} x_1,
	 \end{array}$
\end{center}
which is an order filter.
The upper operator is invariant w.r.t.\ closure below,
$\upper{({\downarrow}{\phi})}{{\cal X}} = \upper{\phi}{{\cal X}}$
for any subset $\phi \subseteq X$,
and hence we can restrict application of the operator to just order ideals
$\phi \memberof \powerof{{\cal X}^{\rm op}}$,
so that
$\longmorphism{\powerof{{\cal X}^{\rm op}}}{\upper{(\,)}{{\cal X}}}{{(\powerof{{\cal X}})}^{\rm op}}$.
It is important to observe the fact that
the upper operator is a special case of relational implication
\[ \upper{\phi}{{\cal X}} = \phi \tensorimplysource {\cal X}. \]
The varying quantity $\phi$ is in the contravariant position w.r.t.\ implication.
Dually,
the {\em lower operator\/} $\looer{(\;)}{{\cal X}}$
maps any subset $\psi \subseteq X$ to its subset of lower bounds
\begin{center}
	$\begin{array}{r@{\hspace{0.4em}}c@{\hspace{0.4em}}l}
		\looer{\psi}{{\cal X}} & \define & \{ x_1 \memberof X \mid x_1 \leq x_2 \mbox{ for all } x_2 \in \psi \} \\
		                    & =       & \{ x_1 \memberof X \mid (\forall x_2 \memberof X)\; x_2 \in \psi \mbox{ implies } x_1 \leq x_2 \} \\
		                    & =       & \{ x_1 \memberof X \mid \psi \subseteq {\uparrow} x_2 \} \\
		                    & =       & \bigcap_{x_2 \in \psi} {\downarrow} x_2,
	 \end{array}$
\end{center}
an order ideal,
is invariant w.r.t.\ closure above,
$\looer{({\uparrow}{\psi})}{{\cal X}} = \looer{\psi}{{\cal X}}$
for any subset $\psi \subseteq X$,
and hence when restricted to just order filters
$\psi \powerof{{\cal X}}$
has type
$\longmorphism{{(\powerof{{\cal X}})}^{\rm op}}{\upper{(\,)}{{\cal X}}}{\powerof{{\cal X}^{\rm op}}}$.
The lower operator is also a special case of relational implication
\[ \looer{\psi}{{\cal X}} = {\cal X} \tensorimplytarget \psi. \]
The upper and lower operators form an adjoint pair of monotonic functions
\[ \upper{(\;)}{{\cal X}} \dashv \looer{(\;)}{{\cal X}}. \]
These operators can be expressed in terms of elements as follows:
\[
	\upper{\phi}{{\cal X}} = \bigcap \{ {\uparrow}x   \mid x \in \phi \} = \bigcap \{ \upper{x}{{\cal X}} \mid x \in \phi \}
	\hspace{1cm}
	\looer{\psi}{{\cal X}} = \bigcap \{ {\downarrow}y \mid y \in \psi \} = \bigcap \{ \looer{y}{{\cal X}} \mid y \in \psi \}
\]

\subsection{Direct/Inverse Derivation}

Let $\term{{\cal X}_0}{\mu}{{\cal X}_1}$ by any closed relation.
The {\em direct derivation (or intent) operator\/} $\derivedir{(\;)}{\mu}$
maps any subset $\phi \subseteq X_0$ to the subset
\begin{center}
	$\begin{array}{r@{\hspace{0.4em}}c@{\hspace{0.4em}}l}
		\derivedir{\phi}{\mu} & \define & \{ x_1 \memberof X_1 \mid x_0{\mu}x_1 \mbox{ for all } x_0 \memberof \phi \} \\
		                         & =       & \{ x_1 \memberof X_1 \mid (\forall x_0 \memberof X_0)\; x_0 \memberof \phi \mbox{ implies } x_0{\mu}x_1 \},
	 \end{array}$
\end{center}
an order filter of ${\cal X}_1$.
The direct derivation operator is invariant w.r.t.\ closure below,
$\derivedir{({\downarrow}{\phi})}{\mu} = \derivedir{\phi}{\mu}$
for any subset $\phi \subseteq X_0$,
and hence we can restrict application of the operator to
just order ideals $\phi \memberof \powerof{{\cal X}_0^{\rm op}}$,
so that
$\longmorphism{\powerof{{\cal X}_0^{\rm op}}}{\derivedir{(\,)}{\mu}}{{(\powerof{{\cal X}_1})^{\rm op}}}$.
The intuitive interpretation in terms of formal concept analysis is that $\derivedir{\phi}{\mu}$
is the collection of all {\em individual\/} attributes
that the entities in $\phi$ share, or have in common.
It is important to observe the fact that,
just as for the upper operator in the more homogeneous case of a single poset,
the direct derivation operator is also a special case of relational implication
\[ \derivedir{\phi}{\mu} = \phi \tensorimplysource {\mu}. \]
Dually,
the {\em inverse derivation (or extent) operator\/} $\deriveinv{(\;)}{\mu}$
maps any subset $\psi \subseteq X_1$ to the subset
\begin{center}
	$\begin{array}{r@{\hspace{0.4em}}c@{\hspace{0.4em}}l}
		\deriveinv{\psi}{\mu} & \define & \{ x_0 \memberof X_0 \mid x_0{\mu}x_1 \mbox{ for all } x_1 \memberof \psi \} \\
		                         & =       & \{ x_0 \memberof X_0 \mid (\forall x_1 \memberof X_1)\; x_1 \memberof \psi \mbox{ implies } x_0{\mu}x_1 \},
	 \end{array}$
\end{center}
an order ideal of ${\cal X}_0$,
is invariant w.r.t.\ closure above,
$\deriveinv{({\uparrow}{\psi})}{\mu} = \deriveinv{\psi}{\mu}$
for any subset $\psi \subseteq X_1$,
and hence when restricted to just order filters
$\psi \memberof \powerof{{\cal X}_1}$
has type
$\longmorphism{(\powerof{{\cal X}_1})^{\rm op}}{\deriveinv{(\,)}{\mu}}{\powerof{{\cal X}_0^{\rm op}}}$.
The intuitive interpretation in terms of formal concept analysis is that $\deriveinv{\psi}{\mu}$
is the collection of all {\em individual\/} entities that share the attributes in $\psi$.
The inverse derivation operator is also a special case of relational implication
\[ \deriveinv{\psi}{\mu} = {\mu} \tensorimplytarget \psi. \]
The direct and the inverse derivation operator
form an adjoint pair of monotonic functions
\[ \derivedir{(\,)}{\mu} \dashv \deriveinv{(\,)}{\mu}.\]


\section{Properties of Derivation}

The following facts concerning sum orders
are useful in determining derivation along pairings and copairings.
\begin{enumerate}
	\item For any relations
		$\term{{\cal W}}{x}{{\cal X}_0}, \term{{\cal W}}{y}{{\cal X}_1}$ and $\term{{\cal W}}{z}{{\cal Z}}$
		\[ z \tensorimplysource \relpair{x}{y}{}{} = \relpair{z \tensorimplysource x}{z \tensorimplysource y}{}{} \]
	\item For any relations
		$\term{{\cal Z}}{x}{{\cal X}_0}, \term{{\cal Z}}{y}{{\cal X}_1}, \term{{\cal W}}{x'}{{\cal X}_0}$ and $\term{{\cal W}}{y'}{{\cal X}_1}$
		\[ \relpair{x'}{y'}{}{} \tensorimplytarget \relpair{x}{y}{}{} = (x' \tensorimplytarget x) \wedge (y' \tensorimplytarget y) \]
	\item For any relations
		$\term{{\cal X}_0}{x}{{\cal Z}}, \term{{\cal X}_1}{y}{{\cal Z}}, \term{{\cal X}_0}{x'}{{\cal W}}$ and $\term{{\cal X}_1}{y'}{{\cal W}}$
		\[ \relcopair{x}{y}{}{} \tensorimplysource \relcopair{x'}{y'}{}{} = (x \tensorimplysource x') \wedge (y \tensorimplysource y') \]
	\item For any relations
		$\term{{\cal X}_0}{x}{{\cal Z}}, \term{{\cal X}_1}{y}{{\cal Z}}$ and $\term{{\cal W}}{z}{{\cal Z}}$
		\[ \relcopair{x}{y}{}{} \tensorimplytarget z = \relcopair{x \tensorimplytarget z}{y \tensorimplytarget z}{}{} \]
\end{enumerate}
The following facts concerning relational implication
are useful in determining derivation along inverse image relations.
\begin{enumerate}
	\item For any monotonic function $\morphism{{\cal Y}}{f}{{\cal X}}$
		and any relations
		$\term{{\cal Y}}{\sigma}{{\cal W}}$ and $\term{{\cal X}}{\rho}{{\cal Z}}$
		\[ \sigma \tensorimplysource (f^\triangleright \circ \rho) = (f^\triangleleft \circ \sigma) \tensorimplysource \rho \]
	\item For any monotonic function $\morphism{{\cal Y}}{f}{{\cal X}}$
		and any relations
		$\term{{\cal Z}}{\sigma}{{\cal W}}$ and $\term{{\cal Z}}{\rho}{{\cal X}}$
		\[ \sigma \tensorimplysource (\rho \circ f^\triangleleft) = (\sigma \tensorimplysource \rho) \circ f^\triangleleft \]
\end{enumerate}
Basic properties of derivation are classified according to
(1) type: order/continuity versus structure, and
(2) varying quantity: ideal/filter versus relation.
\begin{enumerate}
	\item {\bf Order/Continuity:}
		\begin{enumerate}
			\item {\bf Ideal/Filter:}
				\begin{description}
					\item[Order] Derivation, either direct or inverse,
						is contravariant in the subset argument
						\begin{center}\mbox{ }\hfill
							$\phi_1 \preceq_{\subpowerof{{\cal X}_0^{\rm op}}} \phi_2   \mbox{ implies } \derivedir{(\phi_1)}{\mu} \succeq_{\subpowerof{{\cal X}_1}} \derivedir{(\phi_2)}{\mu}$
						\hfill
							$\psi_1 \preceq_{\subpowerof{{\cal X}_1}} \psi_2 \mbox{ implies } \deriveinv{(\psi_1)}{\mu} \succeq_{\subpowerof{{\cal X}_0^{\rm op}}} \deriveinv{(\psi_2)}{\mu}$.
						\hfill\mbox{ }\end{center}
					\item[Continuity] Derivation is continuous in the subset argument
						\begin{center}\mbox{ }\hfill
							$\derivedir{(\bigvee_{i \in I} \phi_i)}{\mu} \;=\; \bigwedge_{i \in I} \derivedir{(\phi_i)}{\mu}$
						\hfill
							$\deriveinv{(\bigvee_{j \in J} \psi_j)}{\mu} \;=\; \bigwedge_{j \in J} \deriveinv{(\psi_j)}{\mu}$.
						\hfill\mbox{ }\end{center}
						In particular,
						when the index set is empty $I = \emptyset$ derivation is
						\begin{center}\mbox{ }\hfill
							$\derivedir{(\bot_{\subpowerof{{\cal X}_0^{\rm op}}})}{\mu}   = \derivedir{\emptyset}{\mu} \;=\; X_1 = \top_{\subpowerof{{\cal X}_1}}$
						\hfill
							$\deriveinv{(\bot_{\subpowerof{{\cal X}_1}})}{\mu} = \deriveinv{\emptyset}{\mu} \;=\; X_0 = \top_{\subpowerof{{\cal X}_0^{\rm op}}}$
						\hfill\mbox{ }\end{center}
						and when the index set is two $I = 2$ derivation is
						\begin{center}\mbox{ }\hfill
							$\derivedir{(\phi_1 \vee_{\subpowerof{{\cal X}_0^{\rm op}}} \phi_2)}{\mu} \;=\; \derivedir{\phi_1}{\mu} \wedge_{\subpowerof{{\cal X}_1}} \derivedir{\phi_2}{\mu}$
						\hfill
							$\deriveinv{(\psi_1 \vee_{\subpowerof{{\cal X}_1}} \psi_2)}{\mu} \;=\; \deriveinv{\psi_1}{\mu} \wedge_{\subpowerof{{\cal X}_0^{\rm op}}} \deriveinv{\psi_2}{\mu}$.
						\hfill\mbox{ }\end{center}
				\end{description}
			\item {\bf Relation:}
				\begin{description}
					\item[Order] Derivation, either direct or inverse,
						is covariant in the relation argument:
						for any two parallel contexts
						$\term{{\cal X}_0}{\mu,\mu'}{{\cal X}_1}$
						which are order as ${\mu} \preceq {\mu}'$,
						derivation is ordered by
						\begin{center}\mbox{ }\hfill
							$\derivedir{\phi}{\mu} \preceq_{\subpowerof{{\cal X}_1}} \derivedir{\phi}{\mu'}$
						\hfill
							$\deriveinv{\psi}{\mu} \preceq_{\subpowerof{{\cal X}_0^{\rm op}}} \deriveinv{\psi}{\mu'}$
						\hfill\mbox{ }\end{center}
					\item[Continuity] Derivation is continuous in the relational argument:
						for any collection of parallel contexts
						$\{ \term{{\cal X}_0}{\mu_i}{{\cal X}_1} \mid i \memberof I \}$
						derivation along the meet is expressed as
						\begin{center}\mbox{ }\hfill
							$\derivedir{\phi}{\wedge_{i \in I} \mu_i} \;=\; \bigwedge_{\subpowerof{{\cal X}_1}} \{ \derivedir{\phi}{\mu_i} \mid i \in I \}$
						\hfill
							$\deriveinv{\psi}{\wedge_{i \in I} \mu_i} \;=\; \bigwedge_{\subpowerof{{\cal X}_0^{\rm op}}} \{ \deriveinv{\psi}{\mu_i} \mid i \in I \}$.
						\hfill\mbox{ }\end{center}
						In particular,
						when the index set is empty $I = \emptyset$,
						and the meet is the top relation
						$\term{{\cal X}_0}{\top}{{\cal X}_1}$ defined by $\top = \product{X_0}{X_1}$,
						derivation is
						\begin{center}\mbox{ }\hfill
							$\derivedir{\phi}{{\top}} \;=\; X_1 = \top_{\subpowerof{{\cal X}_1}}$
						\hfill
							$\deriveinv{\psi}{{\top}} \;=\; X_0 = \top_{\subpowerof{{\cal X}_0^{\rm op}}}$
						\hfill\mbox{ }\end{center}
						and when the index set is two $I = 2$,
						and the meet is the relation
						$\term{{\cal X}_0}{{\mu} \wedge {\mu}'}{{\cal X}_1}$,
						derivation is
						\begin{center}\mbox{ }\hfill
							$\derivedir{\phi}{{{\mu} \wedge {\mu}'}} \;=\; \derivedir{\phi}{\mu} \wedge_{\subpowerof{{\cal X}_1}} \derivedir{\phi}{{\mu}'}$
						\hfill
							$\deriveinv{\psi}{{{\mu} \wedge {\mu}'}} \;=\; \deriveinv{\psi}{\mu} \wedge_{\subpowerof{{\cal X}_0^{\rm op}}} \deriveinv{\psi}{{\mu}'}$.
						\hfill\mbox{ }\end{center}
				\end{description}
		\end{enumerate}
	\item {\bf Structure:} Some general constructions on closed relations are listed below,
		along with the expression of their derivation operators in terms of
		component derivation and basic relational operators.
		\begin{enumerate}
			\item {\bf Ideal/Filter:}
				\begin{description}
					\item[Generators] Derivation, either direct or inverse,
						can be generated (constructed) from elements using intersection
						\begin{center}\mbox{ }\hfill
							$\begin{array}{r@{\;=\;}l}
								\derivedir{\phi}{\mu} 
									& \bigwedge_{\subpowerof{{\cal X}_1}} \{ x_0{\mu} \mid x_0 \memberof \phi \} \\
									& \bigwedge_{\subpowerof{{\cal X}_1}} \{ \derivedir{(x_0)}{\mu} \mid x_0 \memberof \phi \}
							\end{array}$
						\hfill
							$\begin{array}{r@{\;=\;}l}
								\deriveinv{\psi}{\mu}
									& \bigwedge_{\subpowerof{{\cal X}_0^{\rm op}}}   \{ {\mu}x_1 \mid x_1 \memberof \psi \} \\
									& \bigwedge_{\subpowerof{{\cal X}_0^{\rm op}}}   \{ \deriveinv{(x_1)}{\mu} \mid x_1 \memberof \psi \}.
							\end{array}$
						\hfill\mbox{ }\end{center}
				\end{description}
			\item {\bf Relation:}
				\begin{description}
					\item[Identity] Derivation across the {\em identity\/} relation
						$\term{{\cal X}}{{\cal X}}{{\cal X}}$
						reduces to the upper/lower operators
						\begin{center}\mbox{ }\hfill
							$\derivedir{\phi}{{\cal X}} \;=\; \upper{\phi}{{\cal X}}$
						\hfill
							$\deriveinv{\psi}{{\cal X}} \;=\; \looer{\psi}{{\cal X}}$,
						\hfill\mbox{ }\end{center}
						showing Dedekind-MacNeille completion to be a special case of concept-lattice construction.
					\item[Map] Derivation along the {\em direct graph\/}
						$\term{{\cal Y}}{f^\triangleright}{{\cal X}}$
						of a monotonic function
						$\morphism{{\cal Y}}{f}{{\cal X}}$
						factors as direct/inverse ideal image and upper/lower operator
						\begin{flushleft}$\begin{array}{r@{\hspace{2mm}}l}
						{\bf direct} &
						\left\{\mbox{\begin{tabular}{c}
							``the upper bounds of the direct ideal image'' \\
							$\derivedir{\psi}{f^\triangleright}
								= \psi \tensorimplysource f^\triangleright
								= ((f^\triangleleft \circ \psi) \tensorimplysource {\cal X})
								= \upper{(f^\triangleleft \circ \psi)}{{\cal X}}$
						\end{tabular}}\right. \\ \\
						{\bf inverse} &
						\left\{\mbox{\begin{tabular}{c}
							``the inverse ideal image of the lower bounds'' \\
							$\deriveinv{\phi}{f^\triangleright}
								= f^\triangleright \tensorimplytarget \phi
								= f^\triangleright \circ ({\cal X} \tensorimplytarget \phi)
								= f^\triangleright \circ \looer{\phi}{{\cal X}}$
						\end{tabular}}\right.
						\end{array}$\end{flushleft}

						Derivation along the {\em inverse graph\/}
						$\term{{\cal X}_1}{f^\triangleleft}{{\cal X}_0}$
						of a monotonic function
						$\morphism{{\cal Y}}{f}{{\cal X}}$
						factors as direct/inverse filter image and upper/lower operator
						\begin{flushleft}$\begin{array}{r@{\hspace{2mm}}l}
						{\bf direct} &
						\left\{\mbox{\begin{tabular}{c}
							``the inverse filter image of the upper bounds'' \\
							$\derivedir{\phi}{f^\triangleleft}
								= \phi \tensorimplysource f^\triangleleft
								= (\phi \tensorimplysource {\cal X}) \circ f^\triangleleft
								= \upper{\phi}{{\cal X}} \circ f^\triangleleft$
						\end{tabular}}\right. \\ \\
						{\bf inverse} &
						\left\{\mbox{\begin{tabular}{c}
							``the lower bounds of the direct filter image'' \\
							$\deriveinv{\psi}{f^\triangleleft}
								= f^\triangleleft \tensorimplytarget \psi
								= {\cal X} \tensorimplytarget (\psi \circ f^\triangleright)
								= \looer{(\psi \circ f^\triangleright)}{{\cal X}}$
						\end{tabular}}\right.
						\end{array}$\end{flushleft}
					\item[Inverse Image] Given any pair of monotonic functions
						$\morphism{{\cal Y}_0}{f_0}{{\cal X}_0}$
						and
						$\morphism{{\cal Y}_1}{f_1}{{\cal X}_1}$
						derivation along the {\em inverse image\/}
						$\term{{\cal Y}_0}{f_0^\triangleright \circ \mu \circ f_1^\triangleleft}{{\cal Y}_1}$
						of a relation $\term{{\cal X}_0}{\mu}{{\cal X}_1}$
						factors as 
						\begin{flushleft}$\begin{array}{r@{\hspace{2mm}}l}
						{\bf direct} &
						\left\{\mbox{\begin{tabular}{c}
							``direct ideal $f_0$-image -- direct $\mu$-derivation -- inverse filter $f_1$-image'' \\
							$\derivedir{\psi_0}{(f_0^\triangleright \circ \mu \circ f_1^\triangleleft)}
								= \psi_0 \tensorimplysource (f_0^\triangleright \circ \mu \circ f_1^\triangleleft)
								= ((f_0^\triangleleft \circ \psi_0) \tensorimplysource \mu) \circ f_1^\triangleleft
								= \derivedir{(f_0^\triangleleft \circ \psi_0)}{\mu} \circ f_1^\triangleleft$
						\end{tabular}}\right. \\ \\
						{\bf inverse} &
						\left\{\mbox{\begin{tabular}{c}
							``direct filter $f_1$-image -- inverse $\mu$-derivation -- inverse ideal $f_0$-image'' \\
							$\deriveinv{\psi_1}{(f_0^\triangleright \circ \mu \circ f_1^\triangleleft)}
								= (f_0^\triangleright \circ \mu \circ f_1^\triangleleft) \tensorimplytarget \psi_1
								= f_0^\triangleright \circ (\mu \tensorimplytarget (\psi_1 \circ f_1^\triangleright))
								= f_0^\triangleright \circ \deriveinv{(\psi_1 \circ f_1^\triangleright)}{\mu}$
						\end{tabular}}\right.
						\end{array}$\end{flushleft}
						Derivation along direct/inverse graphs of a monotonic function
						are special cases of inverse image.
					\item[Negation] Derivation along the {\em source negation\/} of a relation
						$\term{{\cal X}_1}{\mu \tensorimplysource {\cal X}_0}{{\cal X}_0}$
						factors as existential/universal ideal quantification and upper/lower operator
						\begin{flushleft}$\begin{array}{r@{\hspace{2mm}}l}
						{\bf direct} &
						\left\{\mbox{\begin{tabular}{c}
							``the upper bounds of the existential ideal quantification'' \\
							$\derivedir{\phi}{(\mu \tensorimplysource {\cal X}_0)}
								= \phi \tensorimplysource (\mu \tensorimplysource {\cal X}_0)
								= (\mu \circ \phi) \tensorimplysource {\cal X}_0
								= \upper{(\mu \circ \phi)}{{\cal X}_0}$
						\end{tabular}}\right. \\ \\
						{\bf inverse} &
						\left\{\mbox{\begin{tabular}{c}
							``the universal ideal quantification of the lower bounds'' \\
							$\deriveinv{\psi}{({\cal X}_1 \tensorimplysource \mu)}
								= (\mu \tensorimplysource {\cal X}_0) \tensorimplytarget \psi
								= \mu \tensorimplysource ({\cal X}_0 \tensorimplytarget \psi)
								= \mu \tensorimplysource \looer{\psi}{{\cal X}_0}$
						\end{tabular}}\right.
						\end{array}$\end{flushleft}

						Derivation along the {\em target negation\/} of a relation
						$\term{{\cal X}_1}{{\cal X}_1 \tensorimplytarget \mu}{{\cal X}_0}$
						factors as existential/universal filter quantification and upper/lower operator
						\begin{flushleft}$\begin{array}{r@{\hspace{2mm}}l}
						{\bf direct} &
						\left\{\mbox{\begin{tabular}{c}
							``the universal filter quantification of the upper bounds'' \\
							$\derivedir{\phi}{({\cal X}_1 \tensorimplytarget \mu)}
								= \phi \tensorimplysource ({\cal X}_1 \tensorimplytarget \mu)
								= (\phi \tensorimplysource {\cal X}_1) \tensorimplytarget \mu
								= (\upper{\phi}{{\cal X}_1}) \tensorimplytarget \mu$
						\end{tabular}}\right. \\ \\
						{\bf inverse} &
						\left\{\mbox{\begin{tabular}{c}
							``the lower bounds of the existential filter quantification'' \\
							$\deriveinv{\psi}{({\cal X}_1 \tensorimplytarget \mu)}
								= ({\cal X}_1 \tensorimplytarget \mu) \tensorimplytarget \psi
								= {\cal X}_1 \tensorimplytarget (\psi \circ \mu)
								= \looer{(\psi \circ \mu)}{{\cal X}_1}$
						\end{tabular}}\right.
						\end{array}$\end{flushleft}

						Derivation along the total {\em negation\/} of a relation
						$\term{{\cal X}_1}{{\neg}{\mu}}{{\cal X}_0}$
						is the meet
						\begin{center}\mbox{ }\hfill
							$\derivedir{\phi}{{\neg}{\mu}} \;=\; \upper{(\mu \circ \phi)}{{\cal X}_0} \wedge (\upper{\phi}{{\cal X}_1} \tensorimplytarget \mu)$
						\hfill
							$\deriveinv{\psi}{{\neg}{\mu}} \;=\; (\mu \tensorimplysource \looer{\psi}{{\cal X}_0}) \wedge \looer{(\psi \circ \mu)}{{\cal X}_1}$.
						\hfill\mbox{ }\end{center}
					\item[Pairing] Derivation along a {\em pairing\/} of two relations
						$\term{{\cal X}_0}{({\mu},{\nu})}{{\cal Y}{+}{\cal Z}}$
						is the pairing of filters and the meet of ideals
						\begin{center}\mbox{ }\hfill
							$\derivedir{\phi}{({\mu},{\nu})} \;=\; \left(\derivedir{\phi}{\mu},\derivedir{\phi}{\nu}\right)$
						\hfill
							$\deriveinv{(\psi,\zeta)}{({\mu},{\nu})} \;=\; \deriveinv{\psi}{\mu} \wedge_{\subpowerof{{\cal X}_0^{\rm op}}} \deriveinv{\zeta}{\nu}$.
						\hfill\mbox{ }\end{center}
					\item[CoPairing] Derivation along a {\em copairing\/} of two relations
						$\term{{\cal Y}{+}{\cal Z}}{[{\mu},{\nu}]}{{\cal X}_1}$
						is the meet of filters and the pairing of ideals
						\begin{center}\mbox{ }\hfill
							$\derivedir{[\phi,\theta]}{[{\mu},{\nu}]} \;=\; \derivedir{\phi}{\mu} \wedge_{\subpowerof{{\cal X}_1}} \derivedir{\theta}{\nu}$
						\hfill
							$\deriveinv{\psi}{[{\mu},{\nu}]} \;=\; \left[\deriveinv{\psi}{\mu},\deriveinv{\psi}{\nu}\right]$.
						\hfill\mbox{ }\end{center}
				\end{description}
		\end{enumerate}
\end{enumerate}

\begin{table}
\begin{center}
$\begin{array}{|cc|r@{\;=\;}l|r@{\;=\;}l|}
	\hline
	\multicolumn{2}{|c|}{\mbox{\bf relation}}
	 & \multicolumn{2}{|c|}{\mbox{\bf direct derivation operator}}
	  & \multicolumn{2}{|c|}{\mbox{\bf inverse derivation operator}}
	   \\
	\multicolumn{2}{|c|}{\term{{\cal X}_0}{\mu}{{\cal X}_1}}
	 & \multicolumn{2}{|c|}{\derivedir{\phi}{\mu} = \phi \tensorimplysource {\mu}}
	  & \multicolumn{2}{|c|}{\deriveinv{\psi}{\mu} = {\mu} \tensorimplytarget \psi}
	   \\ \hline\hline

	\mbox{\bf equality} & \term{{\cal X}}{{\cal X}}{{\cal X}}
	 & \derivedir{\phi}{{=}} & \upper{\phi}{{=}}
	  & \deriveinv{\psi}{{=}} & \looer{\psi}{{=}}
	  \\
	\mbox{\bf identity} & {\cal X} = \pair{X}{=}
	&& \{ x_1 \memberof X \mid x_0 = x_1 \mbox{ for all } x_0 \memberof \phi \}
	  && \{ x_0 \memberof X \mid x_0 = x_1 \mbox{ for all } x_1 \memberof \psi \}
	  \\
	&&& \left\{ \begin{array}{ll}
                   \emptyset, & \mbox{ if } |\phi| > 1 \\
                   \phi,         & \mbox{ if } |\phi| = 1 \\
                   X,         & \mbox{ if } \phi = \emptyset
                \end{array}
        \right.
	 && \left\{ \begin{array}{ll}
                   \emptyset, & \mbox{ if } |\psi| > 1 \\
                   \psi,         & \mbox{ if } |\psi| = 1 \\
                   X,         & \mbox{ if } \psi = \emptyset
                \end{array}
        \right.
	  \\ \hline

	\mbox{\bf bottom} & \term{{\cal X}_0}{\bot}{{\cal X}_1}
	 & \derivedir{\phi}{{\bot}} & \{ x_1 \memberof X_1 \mid (\forall x_0 \memberof X_0)\; x_0 \not\in \phi \}
	  & \deriveinv{\psi}{{\bot}} & \{ x_0 \memberof X_0 \mid (\forall x_1 \memberof X_1)\; x_1 \not\in \psi \} \\
	\multicolumn{2}{|c|}{\bot = \emptyset \subseteq \product{X_0}{X_1}}
	 && \left\{ \begin{array}{ll}
                   \bot_{\subpowerof{{\cal X}_1}} = \emptyset, & \mbox{ if } \phi \neq \emptyset = \bot_{\subpowerof{{\cal X}_0^{\rm op}}} \\
                   \top_{\subpowerof{{\cal X}_1}} = X_1,         & \mbox{ if } \phi =    \emptyset = \bot_{\subpowerof{{\cal X}_0^{\rm op}}}
                \end{array}
        \right.
	 && \left\{ \begin{array}{ll}
                   \bot_{\subpowerof{{\cal X}_0^{\rm op}}} = \emptyset, & \mbox{ if } \psi \neq \emptyset = \bot_{\subpowerof{{\cal X}_1}} \\
                   \top_{\subpowerof{{\cal X}_0^{\rm op}}} = X_0,         & \mbox{ if } \psi =    \emptyset = \bot_{\subpowerof{{\cal X}_1}}
                \end{array}
        \right. \\
	 && \multicolumn{2}{|c|}{\mbox{ the nonbottom test operator }}
	  & \multicolumn{2}{|c|}{\mbox{ the top test operator }}
	  \\ \hline

	\mbox{\bf complement} & \term{{\cal X}}{\not\geq}{{\cal X}}
	 & \derivedir{\phi}{\not\geq} & X \setminus \phi
	  & \deriveinv{\psi}{\not\geq} & X \setminus \psi
	  \\
	&&& \{ x' \memberof X \mid x \not\geq x' \mbox{ for all } x \memberof \phi \}
	  && \{ x \memberof X \mid x \not\geq x' \mbox{ for all } x' \memberof \psi \}
	  \\ \hline
 \end{array}$
\end{center}
\caption{{\bf Special relations} \label{relations}}
\end{table}

The following special derivation inequalities are used in the proof of the Equivalence theorem.
For any closed relation $\term{{\cal X}_0^\mu}{\mu}{{\cal X}_0^\mu}$
between induced orders,
\begin{flushleft}$\begin{array}{r@{\hspace{2mm}}l}
	{\bf direct} &
	\left\{\mbox{\begin{tabular}{c}
		``the direct derivation is within \\
		the upper bounds of the universal ideal quantification'' \\
		$\derivedir{\phi}{\mu}
			= \phi \tensorimplysource \mu
			\preceq (\mu \tensorimplysource \phi) \tensorimplysource (\mu \tensorimplysource \mu)
			= (\mu \tensorimplysource \phi) \tensorimplysource {\cal X}_1^{\mu}
			= \upper{(\mu \tensorimplysource \phi)}{{\cal X}_1^{\mu}}$ \\
		hence
		$\derivedir{\phi}{\mu}
			= \derivedir{\phi}{\mu} \wedge \upper{(\mu \tensorimplysource \phi)}{{\cal X}_1^\mu}$
	\end{tabular}}\right. \\ \\
	{\bf inverse} &
	\left\{\mbox{\begin{tabular}{c}
		``the inverse derivation is within \\
		the lower bounds of the universal filter quantification'' \\
		$\deriveinv{\psi}{\mu}
			= \mu \tensorimplytarget \psi
			\preceq (\mu \tensorimplytarget \mu) \tensorimplytarget (\psi \tensorimplytarget \mu)
			= {\cal X}_0^{\mu} \tensorimplytarget (\psi \tensorimplytarget \mu)
			= \looer{(\psi \tensorimplytarget \mu)}{{\cal X}_0^{\mu}}$ \\
		hence
		$\deriveinv{\psi}{\mu}
			= \looer{(\psi \tensorimplytarget \mu)}{{\cal X}_0^\mu} \wedge \deriveinv{\psi}{\mu}$
	\end{tabular}}\right. \\ \\
	{\bf upper} &
	\left\{\mbox{\begin{tabular}{c}
		``the upper bounds are within \\
		the direct derivation along negation of the universal ideal quantification'' \\
		$\upper{\phi}{{\cal X}_0^{\mu}}
			= \phi \tensorimplysource {\cal X}_0^{\mu}
			\preceq (\mu \tensorimplysource \phi) \tensorimplysource (\mu \tensorimplysource {\cal X}_0^{\mu})
			= (\mu \tensorimplysource \phi) \tensorimplysource {\neg}{\mu}
			= \derivedir{(\mu \tensorimplysource \phi)}{{\neg}{\mu}}$ \\
		hence
		$\upper{\phi}{{\cal X}_0^{\mu}}
			= \upper{\phi}{{\cal X}_0^\mu} \wedge \derivedir{(\mu \tensorimplysource \phi)}{{\neg}{\mu}}$
	\end{tabular}}\right. \\ \\
	{\bf lower} &
	\left\{\mbox{\begin{tabular}{c}
		``the lower bounds are within \\
		the inverse derivation along negation of the universal filter quantification'' \\
		$\looer{\psi}{{\cal X}_1^{\mu}}
			= {\cal X}_1^{\mu} \tensorimplytarget \psi
			\preceq ({\cal X}_1^{\mu} \tensorimplytarget \mu) \tensorimplytarget (\psi \tensorimplytarget \mu)
			= {\neg}{\mu} \tensorimplytarget (\psi \tensorimplytarget \mu)
			= \deriveinv{(\psi \tensorimplytarget \mu)}{{\neg}{\mu}}$ \\
		$\looer{\psi}{{\cal X}_1^{\mu}}
			= \deriveinv{(\psi \tensorimplytarget \mu)}{{\neg}{\mu}} \wedge \looer{\psi}{{\cal X}_1^\mu}$
	\end{tabular}}\right.
\end{array}$\end{flushleft}


\section{Wille's Concept Lattice Construction}

\begin{Proposition} Let ${\cal X}_0$ and ${\cal X}_1$ be any pair of orders.
	The concept lattice for the top relation
	$\term{{\cal X}_0}{\top}{{\cal X}_1}$
	is the unit:
	\begin{equation}
		{\bf CL}\triple{{\cal X}_0}{{\cal X}_1}{\top}
		= \{ \pair{X_0}{X_1} \}
		= {\bf 1}
		\label{top}
	\end{equation}
\end{Proposition}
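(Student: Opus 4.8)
The plan is to reduce the claim to the closure characterization of concepts together with the explicit value of the derivation operators along the top relation, both of which are already available in the excerpt. Recall that an element of ${\bf CL}\triple{X_0}{X_1}{\mu}$ is an extent-intent pair $\pair{\phi}{\psi}$ satisfying the closure identities $\derivedir{\phi}{\mu} = \psi$ and $\phi = \deriveinv{\psi}{\mu}$. For the top relation $\term{{\cal X}_0}{\top}{{\cal X}_1}$ with $\top = \product{X_0}{X_1}$, the Properties of Derivation (Appendix B) already record that $\derivedir{\phi}{\top} = X_1$ for every $\phi \memberof \powerof{{\cal X}_0^{\rm op}}$ and dually $\deriveinv{\psi}{\top} = X_0$ for every $\psi \memberof \powerof{{\cal X}_1}$, since the defining universally quantified condition $x_0{\top}x_1$ holds for every pair and is therefore satisfied no matter which $\phi$ (resp.\ $\psi$) is chosen.

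First I would substitute these two constant values into the closure identities. If $\pair{\phi}{\psi}$ is any concept of $\triple{{\cal X}_0}{{\cal X}_1}{\top}$, then $\psi = \derivedir{\phi}{\top} = X_1$ and $\phi = \deriveinv{\psi}{\top} = X_0$, so the only possible concept is $\pair{X_0}{X_1}$. Next I would check that $\pair{X_0}{X_1}$ genuinely is a concept by confirming $\derivedir{X_0}{\top} = X_1$ and $\deriveinv{X_1}{\top} = X_0$, which are just two more instances of the same tabulated identities. Hence ${\bf CL}\triple{{\cal X}_0}{{\cal X}_1}{\top} = \{ \pair{X_0}{X_1} \}$ is a singleton, and a one-element complete lattice is by definition the unit ${\bf 1}$, which yields the asserted equation \ref{top}.

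There is no genuine obstacle here: the statement is an immediate corollary of the derivation computation cited above, and I would present it as a short citation-driven verification rather than re-deriving the operator values. The only point I would flag for care is the degenerate boundary behavior of the quantifier, namely that the condition defining $\derivedir{\phi}{\top}$ is met uniformly whether or not $\phi$ is empty (and symmetrically for $\deriveinv{\psi}{\top}$); this uniformity is exactly what makes the operators constant at $X_1$ and $X_0$, and it is precisely the empty-index-set case already handled in the continuity-in-the-relation entry of Appendix B. Thus the orders ${\cal X}_0$ and ${\cal X}_1$ play no role: the top relation collapses all of $\powerof{{\cal X}_0^{\rm op}}$ and $\powerof{{\cal X}_1}$ onto the single fixed pair, independently of the underlying order structure.
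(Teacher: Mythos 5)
Your proof is correct: the paper states this Proposition without any proof, and your argument is exactly the routine verification it leaves implicit. Substituting the tabulated values $\derivedir{\phi}{\top} = X_1$ and $\deriveinv{\psi}{\top} = X_0$ from Appendix B into the closure identities forces every concept to equal $\pair{X_0}{X_1}$, and that pair is itself a concept, so the lattice is the one-element (unit) lattice as claimed.
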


\begin{Proposition} Let
	$\morphism{{\cal Y}_0}{f_0}{{\cal X}_0}$ and $\morphism{{\cal Y}_1}{f_1}{{\cal X}_1}$
	be any pair of {\em surjective\/} monotonic functions
	and $\term{{\cal X}_0}{\mu}{{\cal X}_1}$ be any closed relation.
	The concept lattice for the inverse image relation
	$\term{{\cal Y}_0}{f_0^\triangleright \circ \mu \circ f_1^\triangleleft}{{\cal Y}_1}$
	consists of inverse ideal image of extents:
	\begin{equation}
		{\bf CL}_{\rm ext}\triple{{\cal Y}_0}{{\cal Y}_1}{f_0^\triangleright \circ \mu \circ f_1^\triangleleft}
		= \{ f_0^\triangleright \circ \phi_0 \mid
		     \phi_0 \in {\bf CL}_{\rm ext}\triple{{\cal X}_0}{{\cal X}_1}{\mu} \}
		\label{invimage}
	\end{equation}
\end{Proposition}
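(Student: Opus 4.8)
The plan is to describe the extents of each relation as the fixed points of its double-derivation closure operator and then to move fixed points back and forth along $f_0^\triangleright$ using the inverse-image derivation formulas already established in Appendix B. Recall that, for the relation $\mu$, the extents are precisely the order ideals $\phi_0 \in \powerof{{\cal X}_0^{\rm op}}$ satisfying $\phi_0 = \deriveinv{(\derivedir{\phi_0}{\mu})}{\mu} = \closure{\phi_0}{\mu}$, and similarly for $\nu \define f_0^\triangleright \circ \mu \circ f_1^\triangleleft$. From Appendix B I would quote the two factorizations
\[ \derivedir{\theta}{\nu} = \derivedir{(f_0^\triangleleft \circ \theta)}{\mu} \circ f_1^\triangleleft \qquad\mbox{and}\qquad \deriveinv{\eta}{\nu} = f_0^\triangleright \circ \deriveinv{(\eta \circ f_1^\triangleright)}{\mu}. \]

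The only point at which surjectivity is needed is the identity $f^\triangleleft \circ f^\triangleright = {\cal X}$ for a surjective monotonic function $\morphism{{\cal Y}}{f}{{\cal X}}$. I would isolate this as a preliminary lemma: the adjunction $f^\triangleright \dashv f^\triangleleft$ always gives the counit inequality $f^\triangleleft \circ f^\triangleright \preceq {\cal X}$, and surjectivity supplies, for each pair $x \leq_X x'$, a witness $y$ with $f(y) = x$ and hence $x \leq f(y) \leq x'$, yielding the reverse inequality $(x,x') \in f^\triangleleft \circ f^\triangleright$. I would record this for both $f_0$ and $f_1$.

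Next I would compose the two factorizations to compute the $\nu$-closure operator explicitly. Starting from an order ideal $\theta \in \powerof{{\cal Y}_0^{\rm op}}$ and abbreviating $\chi \define f_0^\triangleleft \circ \theta \in \powerof{{\cal X}_0^{\rm op}}$, the forward formula gives $\derivedir{\theta}{\nu} = \derivedir{\chi}{\mu} \circ f_1^\triangleleft$, and then the inverse formula together with $f_1^\triangleleft \circ f_1^\triangleright = {\cal X}_1$ and the identity law ${\cal X}_1 \circ r = r$ gives
\[ \deriveinv{(\derivedir{\theta}{\nu})}{\nu} = f_0^\triangleright \circ \deriveinv{\left( \derivedir{\chi}{\mu} \circ f_1^\triangleleft \circ f_1^\triangleright \right)}{\mu} = f_0^\triangleright \circ \deriveinv{(\derivedir{\chi}{\mu})}{\mu} = f_0^\triangleright \circ \closure{\chi}{\mu}. \]
This single identity drives both inclusions. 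For $\subseteq$: if $\theta$ is a $\nu$-extent then $\theta = f_0^\triangleright \circ \closure{\chi}{\mu}$, which exhibits $\theta$ as the inverse ideal image of the $\mu$-extent $\closure{\chi}{\mu}$. For $\supseteq$: given any $\mu$-extent $\phi_0$, set $\theta \define f_0^\triangleright \circ \phi_0$ (an order ideal, being a composite of closed relations); by the surjectivity lemma for $f_0$ and the identity law, $\chi = f_0^\triangleleft \circ f_0^\triangleright \circ \phi_0 = {\cal X}_0 \circ \phi_0 = \phi_0$, so $\closure{\chi}{\mu} = \phi_0$ and the $\nu$-closure of $\theta$ is $f_0^\triangleright \circ \phi_0 = \theta$, proving $\theta$ is a $\nu$-extent. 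The two inclusions yield the claimed equality.

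I expect the main obstacle to be bookkeeping rather than conceptual depth: keeping the source and target orders of every factor straight through the composites, and applying the surjectivity identity $f^\triangleleft \circ f^\triangleright = {\cal X}$ on the correct side in each of the two inclusions. Since the derivation factorizations are taken verbatim from Appendix B, the genuinely new ingredient is the surjectivity lemma, after which everything reduces to substitution and the relational identity law ${\cal X} \circ r = r$.
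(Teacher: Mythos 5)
Your proof is correct, but there is nothing in the paper to compare it against: this proposition is stated bare in Appendix C (only the pairing proposition there receives a proof), so you have filled a gap rather than reproduced an argument. Your route is evidently the intended one, since the paper introduces the factorizations $\derivedir{\psi_0}{(f_0^\triangleright \circ \mu \circ f_1^\triangleleft)} = \derivedir{(f_0^\triangleleft \circ \psi_0)}{\mu} \circ f_1^\triangleleft$ and $\deriveinv{\psi_1}{(f_0^\triangleright \circ \mu \circ f_1^\triangleleft)} = f_0^\triangleright \circ \deriveinv{(\psi_1 \circ f_1^\triangleright)}{\mu}$ in Appendix B expressly ``for determining derivation along inverse image relations.'' The pieces you supply beyond that are all sound: the lemma $f^\triangleleft \circ f^\triangleright = {\cal X}$ for surjective monotonic $f$ is right (transitivity gives $\preceq$, and a preimage of the lower element of any pair $x \leq_X x'$ gives $\succeq$), and it is deployed exactly where each hypothesis is needed --- surjectivity of $f_1$ to collapse $\derivedir{\chi}{\mu} \circ f_1^\triangleleft \circ f_1^\triangleright$ to $\derivedir{\chi}{\mu}$ so that the $\nu$-closure of $\theta$ becomes $f_0^\triangleright \circ \closure{\chi}{\mu}$, and surjectivity of $f_0$ to get $\chi = f_0^\triangleleft \circ f_0^\triangleright \circ \phi_0 = \phi_0$ in the reverse inclusion. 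The remaining ingredients (extents are the fixed points of double derivation; $\closure{\chi}{\mu}$ is itself a $\mu$-extent, being the inverse derivation of a filter; composites of closed relations are closed, so $f_0^\triangleright \circ \phi_0$ is an order ideal) are standard consequences of the Galois adjunction $\derivedir{(\,)}{\mu} \dashv \deriveinv{(\,)}{\mu}$ set up in the appendices, so both inclusions go through as you wrote them.
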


\begin{Corollary} Let
	$\term{{\cal X}_0}{\mu}{{\cal X}_1}$ and $\term{{\cal Y}_0}{\nu}{{\cal Y}_1}$
	be any closed relations.
	The concept lattice for the product relation
	$\term{\product{{\cal X}_0}{{\cal Y}_0}}{\product{\mu}{\nu}}{\product{{\cal X}_1}{{\cal Y}_1}}$
	consists of products of extents:
	\begin{equation}
		{\bf CL}_{\rm ext}\triple{\product{{\cal X}_0}{{\cal Y}_0}}{\product{{\cal X}_1}{{\cal Y}_1}}{\product{\mu}{\nu}}
		= \{ \product{\phi_0}{\phi_1} \mid
		     \phi_0 \in {\bf CL}_{\rm ext}\triple{{\cal X}_0}{{\cal X}_1}{\mu},
		     \psi_0 \in {\bf CL}_{\rm ext}\triple{{\cal Y}_0}{{\cal Y}_1}{\nu} \}
		\label{product}
	\end{equation}
	Thus,
	the lattice of a product is the product of the lattices
	\[	{\bf CL}_{\rm ext}\triple{\product{{\cal X}_0}{{\cal Y}_0}}{\product{{\cal X}_1}{{\cal Y}_1}}{\product{\mu}{\nu}}
		\cong
		\product{{\bf CL}_{\rm ext}\triple{{\cal X}_0}{{\cal X}_1}{\mu}}{{\bf CL}_{\rm ext}\triple{{\cal Y}_0}{{\cal Y}_1}{\nu}} \]
\end{Corollary}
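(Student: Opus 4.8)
The plan is to prove the set equality~(\ref{product}) directly, by showing that \emph{both} derivation operators for the product relation factor coordinatewise, and then to read off the lattice statement. This specializes the pattern of the preceding inverse image Proposition, since $\product{\mu}{\nu}$ is the meet of the two projection inverse images $\term{\product{{\cal X}_0}{{\cal Y}_0}}{p^\triangleright \circ \mu \circ r^\triangleleft}{\product{{\cal X}_1}{{\cal Y}_1}}$ and $q^\triangleright \circ \nu \circ s^\triangleleft$, where $p,q$ are the coordinate projections of $\product{{\cal X}_0}{{\cal Y}_0}$ and $r,s$ those of $\product{{\cal X}_1}{{\cal Y}_1}$; but it is cleaner to compute the closure operator outright.

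First I would establish the key factorization lemma. Writing $\pi_0\Phi$ and $\pi_1\Phi$ for the first and second coordinate images of a subset $\Phi \subseteq \product{X_0}{Y_0}$, one has $\derivedir{\Phi}{\product{\mu}{\nu}} = \product{\derivedir{(\pi_0\Phi)}{\mu}}{\derivedir{(\pi_1\Phi)}{\nu}}$ for every $\Phi$, and dually $\deriveinv{\Psi}{\product{\mu}{\nu}} = \product{\deriveinv{(\pi_0\Psi)}{\mu}}{\deriveinv{(\pi_1\Psi)}{\nu}}$ for every intent $\Psi$. Both are immediate from the pointwise definition, since the incidence $(x_0,y_0)\,(\product{\mu}{\nu})\,(x_1,y_1)$ decouples as $x_0{\mu}x_1$ together with $y_0{\nu}y_1$, so the two coordinates impose independent constraints. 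The inverse identity already delivers one inclusion: every extent of $\product{\mu}{\nu}$, being an inverse derivation image $\deriveinv{\Psi}{\product{\mu}{\nu}}$, equals $\product{\deriveinv{(\pi_0\Psi)}{\mu}}{\deriveinv{(\pi_1\Psi)}{\nu}}$, i.e.\ a product of an extent of $\mu$ and an extent of $\nu$.

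For the reverse inclusion I would compose the two identities to compute the closure of a rectangle: $\deriveinv{(\derivedir{(\product{\phi_0}{\psi_0})}{\product{\mu}{\nu}})}{\product{\mu}{\nu}}$ factors, for nonempty $\phi_0,\psi_0$, as $\product{(\deriveinv{(\derivedir{\phi_0}{\mu})}{\mu})}{(\deriveinv{(\derivedir{\psi_0}{\nu})}{\nu})}$, the product of the closures of $\phi_0$ and $\psi_0$; when $\phi_0,\psi_0$ are already extents this returns $\product{\phi_0}{\psi_0}$, so every such rectangle is closed. The degenerate case $\phi_0 = \emptyset$ or $\psi_0 = \emptyset$ is handled separately: then $\product{\phi_0}{\psi_0} = \emptyset$, which is an extent precisely because an empty extent factor forces the bottom extent $\product{\bot_\mu}{\bot_\nu}$ of $\product{\mu}{\nu}$ to be empty as well. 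The two inclusions together give~(\ref{product}) in full generality.

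The main obstacle is the concluding claim that the lattice is the \emph{direct} product of the component lattices. The coordinatewise correspondence $\pair{\phi_0}{\psi_0} \mapsto \product{\phi_0}{\psi_0}$ is order preserving and, by~(\ref{product}), surjective; the difficulty is injectivity. For nonempty rectangles, $\product{\phi_0}{\psi_0} = \product{\phi'_0}{\psi'_0}$ forces $\phi_0 = \phi'_0$ and $\psi_0 = \psi'_0$, so the map is a bijection, hence an order isomorphism, exactly when no extent is empty, equivalently when both bottom extents $\bot_\mu = \deriveinv{X_1}{\mu}$ and $\bot_\nu = \deriveinv{Y_1}{\nu}$ are nonempty. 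When a bottom extent is empty the parametrization collapses every pair carrying that empty coordinate onto $\emptyset$, and the honest description of ${\bf CL}({\product{\mu}{\nu}})$ is the tensor product of the component lattices, coinciding with the direct product only under this nonemptiness proviso. I would therefore prove~(\ref{product}) unconditionally and attach the proviso to the isomorphism statement, flagging this collapse as the one genuinely delicate point.
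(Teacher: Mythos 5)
Your coordinatewise factorization lemma is correct, and it does give the forward inclusion cleanly: every extent of $\product{\mu}{\nu}$, being of the form $\deriveinv{\Psi}{(\product{\mu}{\nu})} = \product{\deriveinv{(\pi_0\Psi)}{\mu}}{\deriveinv{(\pi_1\Psi)}{\nu}}$, is a product of extents. The gap is in the reverse inclusion, at the step where you compose the two identities to compute the closure of a rectangle $\product{\phi_0}{\psi_0}$. The second application of the lemma is to the intermediate intent $\Psi = \product{\derivedir{(\phi_0)}{\mu}}{\derivedir{(\psi_0)}{\nu}}$, and it requires $\pi_0\Psi = \derivedir{(\phi_0)}{\mu}$ and $\pi_1\Psi = \derivedir{(\psi_0)}{\nu}$. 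Nonemptiness of $\phi_0$ and $\psi_0$ --- your stated hypothesis --- does not deliver this, because a Cartesian product forgets its factors as soon as one factor is empty: if $\derivedir{(\phi_0)}{\mu} = \emptyset$ (a top extent with empty intent), then $\Psi = \emptyset$, both projections of $\Psi$ are empty, and the closure of $\product{\phi_0}{\psi_0}$ is all of $\product{X_0}{Y_0}$, not the product of the closures. So the genuinely delicate point sits at the \emph{top} of the component lattices, not only at the bottom where you located it.

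This gap cannot be repaired, because equation~(\ref{product}) is false as stated. Take discrete orders with $X_0 = \{a,b\}$, $X_1 = \{p,q\}$, $\mu = \{(a,p),(b,q)\}$, and $Y_0 = \{c,d\}$, $Y_1 = \{r\}$, $\nu = \{(c,r)\}$. Then $X_0$ is an extent of $\mu$ and $\{c\}$ is an extent of $\nu$, yet $\product{X_0}{\{c\}} = \{(a,c),(b,c)\}$ is not an extent of $\product{\mu}{\nu}$: its derived intent is $\product{\derivedir{(X_0)}{\mu}}{\derivedir{(\{c\})}{\nu}} = \product{\emptyset}{\{r\}} = \emptyset$, whose inverse derivation is everything. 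Indeed the extents of $\product{\mu}{\nu}$ here are exactly $\emptyset$, $\{(a,c)\}$, $\{(b,c)\}$ and $\product{X_0}{Y_0}$ (four concepts), while the right-hand side of~(\ref{product}) contains seven sets and the product of the two concept lattices has eight elements, so the concluding isomorphism fails as well. What is true unconditionally is that the extents of $\product{\mu}{\nu}$ are the top $\product{X_0}{Y_0}$ together with all rectangles $\product{\deriveinv{(\Psi_1)}{\mu}}{\deriveinv{(\Theta_1)}{\nu}}$ with $\Psi_1 \neq \emptyset \neq \Theta_1$. Consequently equation~(\ref{product}) needs the proviso $\derivedir{(X_0)}{\mu} \neq \emptyset \neq \derivedir{(Y_0)}{\nu}$ (tops have nonempty intents), and the isomorphism needs, in addition, exactly your proviso that the bottom extents be nonempty; under these two hypotheses your argument does go through. (Your closing suggestion that the unconditional answer is the tensor product of the component lattices is also not correct; for instance with $\mu = \nu$ the $2{\times}2$ bijective relation above, the lattice of $\product{\mu}{\mu}$ has six elements, while the tensor product of two four-element Boolean lattices has sixteen.) For comparison: the paper offers no proof of this Corollary at all, and the derivation it implicitly suggests --- writing $\product{\mu}{\nu}$ as the meet of the two projection inverse-image relations and then appealing to the pairing Proposition --- founders on the same point, since the meet of two parallel relations and their pairing into a disjoint sum have different closure operators and hence different concept lattices.
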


\begin{Proposition}
	Let $\term{{\cal X}}{\mu}{{\cal Y}}$ and $\term{{\cal X}}{\nu}{{\cal Z}}$
	be any closed relations,
	and let $\term{{\cal X}}{(\mu,\nu)}{{\cal Y}{+}{\cal Z}}$ be their pairing
	(called {\bf apposition} in \cite{Wille}).
	The concept lattice for the pairing consists of meets of extents:
	\begin{equation}
		{\bf CL}_{\rm ext}\triple{{\cal X}}{{\cal Y}{+}{\cal Z}}{(\mu,\nu)}
		= \{ \alpha \wedge \beta \mid
		     \alpha \in {\bf CL}_{\rm ext}\triple{{\cal X}}{{\cal Y}}{\mu},
		     \beta \in {\bf CL}_{\rm ext}\triple{{\cal X}}{{\cal Z}}{\nu} \}
		\label{pairing}
	\end{equation}
\end{Proposition}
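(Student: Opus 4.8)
The plan is to reduce the whole statement to the behaviour of the extent-closure operator under pairing, using the derivation identities for pairings from Appendix~B. Write $c_{\mu}(\phi) \define \deriveinv{(\derivedir{\phi}{\mu})}{\mu}$, $c_{\nu}(\phi) \define \deriveinv{(\derivedir{\phi}{\nu})}{\nu}$ and $c(\phi) \define \deriveinv{(\derivedir{\phi}{(\mu,\nu)})}{(\mu,\nu)}$ for the three extent-closure operators. Since $\derivedir{(\,)}{\mu} \dashv \deriveinv{(\,)}{\mu}$ (and likewise for $\nu$ and for the pairing), each of these is a genuine closure operator --- monotone, extensive and idempotent --- and for every context the members of ${\bf CL}_{\rm ext}$ are exactly the fixed points of its closure operator. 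First I would invoke the two pairing identities $\derivedir{\phi}{(\mu,\nu)} = (\derivedir{\phi}{\mu}, \derivedir{\phi}{\nu})$ and $\deriveinv{(\psi,\zeta)}{(\mu,\nu)} = \deriveinv{\psi}{\mu} \wedge \deriveinv{\zeta}{\nu}$, and compose them to obtain the key decomposition
\[ c(\phi) \;=\; c_{\mu}(\phi) \wedge c_{\nu}(\phi), \]
which expresses closure in the apposition context as the meet of the two component closures of the \emph{same} subset $\phi$.

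With this identity available, the proposition becomes the set equation $\{\phi \mid c(\phi) = \phi\} = \{\alpha \wedge \beta \mid c_{\mu}(\alpha) = \alpha,\ c_{\nu}(\beta) = \beta\}$, which I would prove by two inclusions. For $(\subseteq)$: given a $c$-closed $\phi$, the decomposition gives $\phi = c_{\mu}(\phi) \wedge c_{\nu}(\phi)$, so setting $\alpha \define c_{\mu}(\phi)$ and $\beta \define c_{\nu}(\phi)$ exhibits $\phi$ as a meet of a $\mu$-extent and a $\nu$-extent, each being closed by idempotence of its own operator. For $(\supseteq)$: given a $\mu$-extent $\alpha$ and a $\nu$-extent $\beta$, I must show $\alpha \wedge \beta$ is $c$-closed. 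Extensivity gives $\alpha \wedge \beta \preceq c(\alpha \wedge \beta)$ at once; for the reverse, monotonicity of $c_{\mu}$ together with $\alpha \wedge \beta \preceq \alpha$ and $c_{\mu}(\alpha) = \alpha$ yields $c_{\mu}(\alpha \wedge \beta) \preceq \alpha$, and symmetrically $c_{\nu}(\alpha \wedge \beta) \preceq \beta$, whence $c(\alpha \wedge \beta) = c_{\mu}(\alpha \wedge \beta) \wedge c_{\nu}(\alpha \wedge \beta) \preceq \alpha \wedge \beta$. Combining the two estimates gives $c(\alpha \wedge \beta) = \alpha \wedge \beta$.

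The only load-bearing step is the $(\supseteq)$ inclusion --- verifying that the meet of two closed extents is again closed in the pairing context; everything else is bookkeeping once $c = c_{\mu} \wedge c_{\nu}$ is in hand. I expect the main care to be needed in applying the two Appendix~B pairing identities with the correct variances: the direct operator lands in $\powerof{{\cal Y}} \times \powerof{{\cal Z}} \cong \powerof{{\cal Y}{+}{\cal Z}}$, and the symbol $\wedge$ on the extent side is the meet of order ideals (intersection), so that the composite really is the stated closure operator and the fixed-point argument is legitimate. Note that no injectivity of the assignment $(\alpha,\beta) \mapsto \alpha \wedge \beta$ is required, since only the set equality of extents is asserted.
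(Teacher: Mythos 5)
Your proof is correct and is essentially the paper's own argument recast in explicit closure-operator notation: your decomposition $c(\phi) = c_{\mu}(\phi) \wedge c_{\nu}(\phi)$ is exactly what the paper's triple characterization $\pair{\phi}{(\psi,\zeta)}$ with $\phi = \deriveinv{\psi}{\mu} \wedge \deriveinv{\zeta}{\nu}$ encodes, your $(\subseteq)$ direction matches the paper's choice $\alpha \define \deriveinv{\psi}{\mu}$, $\beta \define \deriveinv{\zeta}{\nu}$, and your $(\supseteq)$ direction is the paper's estimate $\closure{(\alpha \wedge \beta)}{\mu} \preceq \closure{\alpha}{\mu} = \alpha$ (and dually for $\nu$) combined with extensivity. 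No gaps; the two write-ups differ only in notation.
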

\begin{proof}
	By the above facts,
	a concept of the pairing can be described as a triple
	$\pair{\phi}{(\psi,\zeta)}$
	where
	$\phi \memberof \powerof{{\cal X}^{\rm op}}$,
	$\psi \memberof \powerof{{\cal Y}}$ and
	$\zeta \memberof \powerof{{\cal Z}}$
	are subsets satisfying the conditions
	$\derivedir{\phi}{\mu} = \psi$,
	$\derivedir{\phi}{\nu} = \zeta$ and
	$\phi = \deriveinv{\psi}{\mu} \wedge \deriveinv{\zeta}{\nu}$.
	By defining
	$\alpha = \deriveinv{\psi}{\mu}$
	and
	$\beta = \deriveinv{\zeta}{\nu}$,
	we see that any pairing concept is of the form given by description~\ref{pairing},
	since
	$\pair{\alpha}{\phi} \in {\bf CL}\triple{{\cal X}}{{\cal Y}}{\mu}$
	 and
	$\pair{\beta}{\zeta} \in {\bf CL}\triple{{\cal X}}{{\cal Z}}{\nu}$.
	On the other hand,
	assume that
	$\pair{\alpha}{\phi} \in {\bf CL}\triple{{\cal X}}{{\cal Y}}{\mu}$
	 and
	$\pair{\beta}{\zeta} \in {\bf CL}\triple{{\cal X}}{{\cal Z}}{\nu}$
	are arbitrary concepts in the component lattices.
	Since
	$\closure{(\alpha \wedge \beta)}{\mu}
	 \preceq \closure{\alpha}{\mu}
	 = \alpha$
	and
	$\closure{(\alpha \wedge \beta)}{\nu}
	 \preceq \closure{\beta}{\nu}
	 = \beta$,
	we have
	$\alpha \wedge \beta
	= \closure{(\alpha \wedge \beta)}{\mu} \wedge \closure{(\alpha \wedge \beta)}{\nu}$.
	So the triple
	$\pair{\phi}{(\psi,\zeta)}$
	where
	$\phi \define \alpha \wedge \beta$,
	$\psi \define \derivedir{\phi}{\mu}$
	and
	$\zeta \define \derivedir{\phi}{\nu}$
	are subsets satisfying the pairing conditions.
\end{proof}

\newpage

 \end{document}